\def\notes{1}
\newif\ifCOLT
\pgfplotsset{compat=1.15}
\newcommand{\Rjump}{\ensuremath{\exp(108 k\leverage)}}
\newcommand{\cov}{S}
\newcommand{\ols}{\beta_{\mathrm{ols}}}
\newcommand{\score}{\mathtt{SCORE}}
\newcommand{\resthresh}{\ensuremath{\mathtt{ResidualThresholding}}\xspace}
\newcommand*{\algrule}[1][\algorithmicindent]{%
  \makebox[#1][l]{%
    \hspace*{.2em}%
  }
}
\def\ALG@printindent{%
    \ifnum \theALG@nested>0%
    \ifx\ALG@text\ALG@x@notext%
    \else 
    \unskip
    \ALG@printindent@tempcnta=1
    \loop
    \algrule[\csname ALG@ind@\the\ALG@printindent@tempcnta\endcsname]%
    \advance \ALG@printindent@tempcnta 1
    \ifnum \ALG@printindent@tempcnta<\numexpr\theALG@nested+1\relax
    \repeat
    \fi
    \fi
}
\patchcmd{\ALG@doentity}{\noindent\hskip\ALG@tlm}{\ALG@printindent}{}{\errmessage{failed to patch}}
\patchcmd{\ALG@doentity}{\item[]\nointerlineskip}{}{}{} %
    \newtheorem{claim}[theorem]{Claim}
    \newtheorem{open}{Open Problem}[section]
    \newtheorem{fact}[theorem]{Fact}
    \newtheorem{observation}[theorem]{Observation}
    \newtheorem{problem}[theorem]{Problem}
    \theoremstyle{plain}
    \newtheorem{theorem}{Theorem}[section]
    \newtheorem{claim}[theorem]{Claim}
    \newtheorem{corollary}[theorem]{Corollary}
    \newtheorem{lemma}[theorem]{Lemma}
    \newtheorem{fact}[theorem]{Fact}
    \theoremstyle{definition}
    \newtheorem{definition}{Definition}[section]
    \theoremstyle{remark}
\newcommand{\defeq}{\stackrel{{\mbox{\tiny def}}}{=}}
\newcommand{\eps}{\varepsilon}
\DeclareMathOperator{\tr}{\mathrm{tr}}
\newcommand{\cA}{\mathcal{A}}
\newcommand{\cM}{\mathcal{M}}
\newcommand{\cN}{\mathcal{N}}
\newcommand{\cP}{\mathcal{P}}
\newcommand{\cU}{\mathcal{U}}
\newcommand{\bbE}{\mathbb{E}}
\newcommand{\bbI}{\mathbb{I}}
\newcommand{\bbN}{\mathbb{N}}
\newcommand{\bbP}{\mathbb{P}}
\newcommand{\bbR}{\mathbb{R}}
\newcommand{\bbZ}{\mathbb{Z}}
\newcommand{\mynote}[3]{\marginpar{\tiny \sf \color{#1} {#2}: {#3}}}
\newcommand{\mynote}[3]{}
\newcommand{\paren}[1]{{\left( {#1} \right)}}
\newcommand{\braces}[1]{{\left\{ {#1} \right\} }}
\newcommand{\ip}[2]{{\left\langle {#1}, {#2} \right\rangle}}
\newcommand{\indicator}[1]{\mathbbm{1}\{ {#1} \} }
\newcommand{\supp}[1]{\ensuremath{\mathrm{supp}\paren{{#1}}}}
\newcommand{\PTR}{\ensuremath{\cM_{\mathrm{PTR}}^{\eps,\delta}}}
\newcommand{\PTRepsdelta}[2]{\ensuremath{\cM_{\mathrm{PTR}}^{#1, #2}}}
\newcommand{\pass}{\ensuremath{\mathtt{PASS}}}
\newcommand{\fail}{\ensuremath{\mathtt{FAIL}}}
\newcommand{\R}{R}
\newcommand{\yy}{y}
\newcommand{\stablecovariance}{\ensuremath{\mathtt{StableLeverageFiltering}}\xspace}
\newcommand{\algname}{\ensuremath{\mathtt{ISSP}}\xspace}
\newcommand{\weightedOLS}{\ensuremath{\mathtt{WeightedOLS}}}
\newcommand{\stableOLS}{\ensuremath{\mathtt{StableResidualFiltering}}\xspace}
\newcommand{\privateOLS}{\algname}
\newcommand{\leverage}{L}
\newcommand{\residual}{R}
\newcommand{\RR}{\mathbb{R}}
\newcommand{\blank}{\mathrel{\,\cdot\,}}
\DeclareMathOperator{\diag}{diag}
\DeclareMathOperator*{\argmax}{\arg\!\max}
\DeclareMathOperator*{\argmin}{\arg\!\min}
\DeclarePairedDelimiter\angles{\langle}{\rangle}
\DeclarePairedDelimiter\ceil{\lceil}{\rceil}
\DeclarePairedDelimiter\del{\lparen}{\rparen}
\let\set\relax
\DeclarePairedDelimiter\set{\lbrace}{\rbrace}
\DeclarePairedDelimiter\abs{\lvert}{\rvert}
\DeclarePairedDelimiterX\norm[1]\lVert\rVert{\ifblank{#1}{\blank}{#1}}
\DeclarePairedDelimiter\sbr{\lbrack}{\rbrack}
\DeclarePairedDelimiter\intoc{\lparen}{\rbrack}
\newcommand{\textmultinom}[2]{\bigl(\kern-0.3em{\binom{#1}{#2}}\kern-0.3em\bigr)}
\newcommand{\displaymultinom}[2]{\left(\kern-0.5em{\binom{#1}{#2}}\kern-0.5em\right)}
\providecommand\given{\errmessage{\noexpand\given used outside \noexpand\DelGivenX}}
\DeclarePairedDelimiterX{\DelGivenX}[1]{\lparen}{\rparen}{%
  \renewcommand\given{\mathclose{}\,\delimsize\vert\allowbreak\,\mathopen{}}#1%
}
\DeclarePairedDelimiterX{\SbrGivenX}[1]{\lbrack}{\rbrack}{%
  \renewcommand\given{\mathclose{}\,\delimsize\vert\allowbreak\,\mathopen{}}#1%
}
\DeclarePairedDelimiterX{\SetGivenX}[1]{\lbrace}{\rbrace}{%
  \renewcommand\given{\mathclose{}\,\delimsize\vert\allowbreak\,\mathopen{}}#1%
}
\providecommand\from{\errmessage{\noexpand\from used outside \noexpand\DelGivenX}}
\DeclarePairedDelimiterX{\DelFromX}[1]{\lparen}{\rparen}{%
  \renewcommand\from{\mathclose{}\,\delimsize\Vert\allowbreak\,\mathopen{}}#1%
}
\newcommand{\spx}[1]{%
\if\relax\detokenize{#1}\relax
\expandafter\@gobble
\else
\expandafter\@firstofone
\fi
{^{#1}}%
}
\newcommand{\E}{\mathbb{E}\SbrGivenX}
\newcommand{\vsmask}[2]{\Pi_{#2}\del{#1}}
\newcommand{\setcompl}[1]{\overline{#1}}
\newcommand{\dpsd}{d_{\mathrm{PD}}}
\newcommand{\bigO}{O}
\newcommand{\T}{\top}
\title[Insufficient Statistics Perturbation]{Insufficient Statistics Perturbation:\\ Stable Estimators for Private Least Squares}
\begin{document}

\maketitle

\begin{abstract}%
    We present a sample- and time-efficient differentially private algorithm for ordinary least squares, with error that depends linearly on the dimension and is independent of the condition number of $X^\top X$, where $X$ is the design matrix.
    All prior private algorithms for this task require either $d^{3/2}$ examples, error growing polynomially with the condition number, or exponential time. 
    Our near-optimal accuracy guarantee holds for any dataset with bounded statistical leverage and bounded residuals.
    Technically, we build on the approach of \cite{brown2023fast} for private mean estimation, adding scaled noise to a carefully designed stable nonprivate estimator of the empirical regression vector.
\end{abstract}

\section{Introduction}

We present a sample- and time-efficient differentially private algorithm for ordinary least squares (OLS) regression.
Central throughout the theory and practice of data science, OLS is used in numerous domains, ranging from causal inference, to control theory, to (of course) supervised learning. 

Given covariates $X\in \mathbb{R}^{n\times d}$ and responses $y\in \mathbb{R}^n$, the OLS estimator is defined as
\begin{align}
\beta_{\mathrm{ols}} \;=\; \del*{X^\T X}^{-1} X^\T y\;.
    \label{eq:def_OLS}
\end{align}
Among the many reasons for the popularity of OLS is the fact that it is a statistically and computationally efficient way of solving linear regression. Speaking informally, OLS has low excess error whenever the number of samples $n$ is as large as the problem dimension $d$. Crucially, its statistical performance does not depend on the condition number $\kappa(X^\T X)$, the ratio between the maximum and minimum eigenvalues. %
Furthermore, it can be computed in closed-form using only basic linear-algebraic operations, with no need for the subtle hyperparameter tuning often inherent in first-order methods.

Given its widespread use in the analysis of personal data, there is a long line of work giving differentially private algorithms to approximate OLS.
However, designing practical and efficient algorithms for this problem has been a particularly challenging endeavor; so far there are no clear answers even in the $d=1$ case when $x$ and $y$ are both scalars \citep{alabi2020differentially}. Existing algorithms for DP regression suffer from one of three limitations: they either have poor dimension dependence in their sample complexity, place unnaturally restrictive assumptions on the geometry of the data, or run in exponential time. 

In terms of private algorithm design, one natural and well-established approach is \emph{sufficient statistics perturbation}, which privately produces separate estimates of $X^\T X$ and $X^\T y$ and then combines them to produce a single parameter estimate.
Such approaches are often efficient and some versions come with formal accuracy guarantees. 
An exemplar in this line is the \emph{AdaSSP} algorithm of \cite{wang2018revisiting}.
The central drawback in all these algorithms, however, is the sample complexity as $d$ grows: privately producing an accurate estimate of $X^\T X$ requires roughly $d^{3/2}$ samples~\citep{dwork2014analyze}.
Furthermore, many approaches within this class add noise proportional to the worst-case sensitivity of $X^\T X$ and $X^\T y$ \citep[see, e.g.,][]{sheffet2017differentially}. 
To deal with the fact that this sensitivity is \emph{unbounded} in the case of real-valued data, these results assume uniform norm bounds on the covariates $x$ and responses $y$ (e.g., $\norm{x}\leq B_y, |y| \leq B_y$). While conceptually simple, they fail to capture the intrinsic complexity of the problem and do not satisfy natural properties like scale invariance.%

Another approach comes via private optimization, searching for a parameter estimate that approximately minimizes the sum of squared errors.
Despite a wealth  private convex optimization methods that can be applied directly to linear regression, 
off-the-shelf approaches again require $d^{3/2}$ samples for accurate estimates.
A notable exception is the recent work from~\cite{varshney2022}, whose algorithm based on private gradient descent succeeds with only roughly $d$ samples.
However, its error grows with the square of the condition number, a high price to pay for many problems.
A polynomial dependence on $\kappa(X^\T X)$ is inherent in private first-order optimization for linear regression, as the smoothness of the optimization task is directly linked to the condition number.

The only known approach that avoids these two issues is the exponential-time algorithm of \cite{liu2022differential}, which comes from the framework they call \emph{high-dimensional propose-test-release} (HPTR), after the \emph{propose-test-release} (PTR) framework of \cite{DworkL09}.

We see the mirror of this story in private mean estimation, where \cite{duchi2023fast} and \cite{brown2023fast} recently gave the first sample- and time-efficient private algorithms with error guarantees that adapt to the covariance of the data.
All prior private algorithms achieving this guarantee require $d^{3/2}$ examples, error growing polynomially with the condition number of the covariance, or exponential time.

In this work, we build on the work of \cite{brown2023fast} and present the first computationally efficient (in fact, practically implementable) differentially private estimator for linear regression with sample complexity independent of $\kappa(X^\T X)$ and the optimal linear dependence on the dimension $d$.
Furthermore, we make no use of norm bounds. We establish its  utility  under the ``textbook'' conditions one would typically require to run OLS in the non-private setting. More specifically, the algorithm is accurate as long no observation has high statistical leverage or a large residual, formalized in \cref{def:goodness_set}.

\subsection{Our Results}

In this work, we introduce a new algorithm, \algname, for differentially private linear regression. At a high level, \algname works in two main phases. In the first, we search for a reweighting of the dataset such that running OLS on this reweighted version is roughly stable. Having successfully found this set of weights, we simply compute the OLS solution on this weighted version of the data and add appropriately  shaped Gaussian noise to the solution. While the approach is conceptually very simple, establishing its correctness requires several significant technical advances.

Our estimator satisfies \emph{differential privacy} (DP), the gold standard for privacy protection in statistical data analysis. DP requires that an algorithm provides approximately the same output on any datasets that differ in only one entry.%
\begin{definition}[\cite{DworkMNS06}]
Let $\mathcal{X}$ and $\mathcal{Y}$ be sets. An algorithm $\mathcal{A}: \mathcal{X}^n \rightarrow \mathcal{Y}$
is $(\eps, \delta)$-differentially private if for every $x = (x_1, \dots, x_n) \in \mathcal{X}^n$ and $x' = (x_1',\dots, x_n') \in \mathcal{X}^n$ such that $x, x'$ agree on all but one coordinate and for all $Y\subseteq \mathcal{Y}$, 
\begin{align}
    \bbP\sbr[\big]{\cA(x) \in Y} &\;\;\le\;\; e^\eps \,  \bbP\sbr[\big]{\cA(x') \in Y} + \delta\;. 
\end{align}
\end{definition}

 One of the core advances we make, in light of most previous results on DP regression, is that we do not require any norm bounds on the data. We only assume the types of conditions a circumspect statistician would always verify to ensure that OLS is a sensible procedure.
In particular, we establish the utility of our estimator whenever the dataset is free of outliers, or ``good.'' 

\begin{definition}[\(\del{\leverage, \residual}\)-goodness]\label[definition]{def:goodness_set}
    Fix parameters \(\leverage, \residual > 0\).
    A dataset \(\del{X, y} \in \bbR^{n \times d}\times \bbR^n\) is called \emph{\(\del{\leverage, \residual}\)-good} if \(X^\T X\) is invertible and the following conditions hold for all \(i \in \sbr{n}\).
    \begin{enumerate}
        \item Bounded leverage: $x_i^\T (X^\T  X)^{-1} x_i \le \leverage$.
        \item Bounded residuals: $\abs[\big]{\ip{x_i}{\ols} - y_i} \le \residual$.%
    \end{enumerate}
\end{definition}

\begin{figure}
    \centering
    \subfigure[Large residual][b]{
        \begin{tikzpicture}[x=0.4cm,y=0.4cm]
        \draw [line width=1pt,<->] (-4,-1.1428571428571428)-- (4,2.857142857142857);
        \fill (-3,-1.5) circle[radius=2pt];
        \fill (-2,-1.0) circle[radius=2pt];
        \fill (-1,-0.5) circle[radius=2pt];
        \fill (0,7) circle[radius=2pt];
        \fill (1,0.5) circle[radius=2pt];
        \fill (2,1.0) circle[radius=2pt];
        \fill (3,1.5) circle[radius=2pt];
        \end{tikzpicture}
        \label{fig:residual_outlier}
    } \hspace{8em}
    \subfigure[High leverage][b]{
        \begin{tikzpicture}[x=0.4cm,y=0.4cm]
        \draw [line width=1pt,<->] (-2,-0.8571428571428)-- (8,2);
        \fill (-1.5,-3) circle[radius=2pt];
        \fill (-1.0,-2) circle[radius=2pt];
        \fill (-0.5,-1) circle[radius=2pt];
        \fill (7,0) circle[radius=2pt];
        \fill (0.5,1) circle[radius=2pt];
        \fill (1.0,2) circle[radius=2pt];
        \fill (1.5,3) circle[radius=2pt];
        \end{tikzpicture}
        \label{fig:leverage_outlier}
    }
    \caption{
        As famously illustrated by \cite{anscombe1973graphs}, a point may be influential because of its large residual (\emph{a}) or its large leverage (\emph{b}).
        \cref{def:goodness} controls both quantities.
    }
\end{figure}
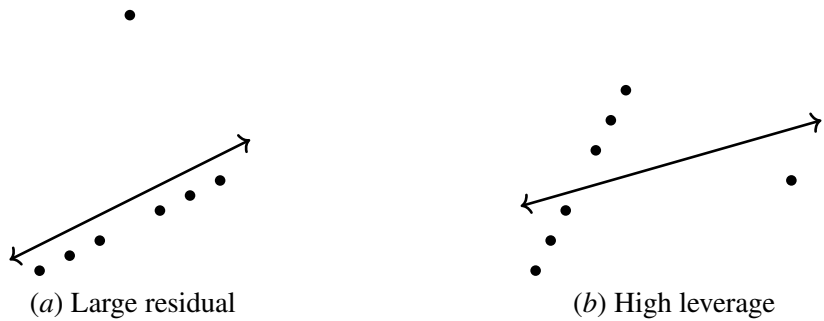

Note that both of these conditions hold in various natural, well-studied settings. For instance, when $x,y$ are both subgaussian and drawn from a well-specified linear model with true parameter $\beta^*$, these conditions hold with high probability when $L \approx d/n$ and $R \approx \sigma$ where $\sigma^2 = \E{(y - \langle \beta^*, x\rangle)^2}$ (see \cref{thm:main}). 
This idea, of outliers being observations with high leverage or large residuals,
is quite classical and found across standard texts.
For instance, a standard rule-of-thumb identifies high-leverage points as those with leverage greater than $2d/n$ or $3d/n$ \citep{hoaglin1978hat,velleman1981efficient,mendenhall2003second}.
The precise forms of stability we need to ensure privacy, however, are far from classical.
They require a carefully designed algorithm, which we elaborate in \cref{sec:tech}. 
Such a stable estimator, in turn, implies that we can achieve differential privacy with small amounts of noise.

In these well-behaved instances, 
our mechanism takes a most straightforward form:
it returns the OLS solution plus a small amount of Gaussian noise. 
\begin{claim}
\label[claim]{obs:output_on_good_data} 
    If $(X,y)$ is $(\leverage,\residual)$-good for parameters   $R > 0 $ and \(\leverage \leq  
    c'\varepsilon^2 \log^{-2} (1/\delta) \), for some constant $c'$, then %
    \(\algname(X,y;\eps,\delta,\leverage,\residual)\) 
    releases a sample drawn from \(\cN\del[\big]{\ols, c^2 (X^\T X)^{-1}}\), where $ c^2=\Theta\left(\leverage \,\residual^2\, {\log (1/\delta)}/{\eps^2}\right).$
\end{claim}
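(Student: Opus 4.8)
The plan is to trace the execution of \algname on a $(\leverage,\residual)$-good dataset and show that every data-dependent branch takes its ``clean'' path, reducing the algorithm to the Gaussian mechanism applied to $\ols$ in the geometry induced by $X^\T X$. Recall that \algname processes the data in two stages---a leverage-filtering stage built on \stablecovariance and a residual-filtering stage built on \stableOLS{} (equivalently, a \stablemean\ computation on the residual-scaled covariates)---each maintaining a counter $\score$ that is incremented only when a point is flagged as an outlier relative to an internal threshold calibrated to $\lambda_0$. The first step is to instantiate this internal threshold $\lambda_0$ at a constant multiple of the goodness parameter $\leverage$ and to check that the hypothesis $\leverage \le c'\eps^2\log^{-2}(1/\delta)$ makes the discretization budget $k$ and the stability conditions used in the subroutines---for instance $k \le \tfrac{1}{2\gamma}$ with $\gamma = 16e^2\lambda_0/n$ from the \stablemean\ analysis---simultaneously satisfiable. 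Under goodness, condition (1) of \cref{def:goodness_set} gives $x_i^\T(X^\T X)^{-1}x_i \le \leverage$ for every $i$, so no covariate is flagged by the leverage filter, and condition (2) gives $\abs{\ip{x_i}{\ols}-y_i}\le \residual$, so no point is flagged by the residual filter. Hence $\score = 0 < k$ at the end of both stages, the algorithm never returns \fail, and no point is downweighted: the weight vector is uniform.

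Second, with uniform weights the reweighted design equals the original, so \weightedOLS\ returns exactly $\ols=(X^\T X)^{-1}X^\T y$ and the stable covariance estimate coincides with $X^\T X$. I would then observe that the noise step whitens by this covariance, adds isotropic Gaussian noise, and maps back, so it draws $Z\sim\cN(0,c^2(X^\T X)^{-1})$; composing, the output equals $\ols+Z$, a sample from $\cN(\ols,c^2(X^\T X)^{-1})$, as claimed.

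Third, to pin down the scale $c^2$ I would invoke the sensitivity guarantee of \stablemean\ (the restatable bound $\norm{\hat\mu-\hat\mu'}_{\covone}^2 = O(\lambda_0/n^2)$ for adjacent inputs that both pass). In the regression setting the aggregated quantity is the residual-scaled covariate $x_i(y_i-\ip{x_i}{\ols})$, so the whitened per-point influence is controlled by $\sqrt{x_i^\T(X^\T X)^{-1}x_i}\cdot\residual \le \sqrt{\leverage}\,\residual$; feeding this sensitivity into the Gaussian mechanism for $(\eps,\delta)$-DP contributes the standard factor $\log(1/\delta)/\eps^2$, yielding $c^2 = \Theta(\leverage\,\residual^2\log(1/\delta)/\eps^2)$ once the $n$-dependent normalization is absorbed into the $(X^\T X)^{-1}$ geometry.

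The main obstacle is the first step: verifying that good data deterministically drives both counters to zero and thereby forces the uniform-weight, no-\fail\ path. This demands carefully matching the abstract outlier threshold $\lambda_0$ and discretization budget $k$ inside \stablecovariance and \stablemean\ to the external goodness parameters $(\leverage,\residual)$, and checking that the assumed bound on $\leverage$ leaves enough slack that $\score$ cannot reach $k$. The remaining steps---uniform weights imply $\weightedOLS = \ols$, and the Gaussian-mechanism scale computation---are then essentially bookkeeping.
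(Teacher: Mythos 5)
Your overall route is the same one the paper implicitly takes (the paper never writes out a dedicated proof of this claim; it is assembled from \cref{thm:stable_cov_guarantees} part~3, the definition of \resthresh, \cref{claim:ptr_guarantees} part~2, and the constants hard-coded in \cref{alg:private_OLS}): on good data the leverage filter returns $\score_1=0$ and $w=\vec 1$, every call to \resthresh inside \stableOLS returns its input unchanged because all residuals are at most $\residual_0\le \residual_j$, hence $v=\vec 1$, $\score_2=0$, the PTR check passes deterministically, $S_v=X^\T X$, $\hat\beta=\ols$, and the output is exactly $\cN(\ols, c^2(X^\T X)^{-1})$. Two issues, one cosmetic and one substantive. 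Cosmetically, the second stage of \algname is \stableOLS{} built on \resthresh, not a ``\stablemean\ computation on residual-scaled covariates''; the conditions you quote (e.g.\ $k\le 1/(2\gamma)$ with $\gamma=16e^2\lambda_0/n$) come from preamble macros that are never used in this paper's argument, and importing them here is a sign you were matching against the wrong subroutine, even though the essential point (goodness condition~2 means nothing is ever removed) survives.

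The substantive gap is in your third step. The scale $c^2$ is not something to be re-derived from a sensitivity calculation at proof time: it is fixed in line~1 of \cref{alg:private_OLS} as $c^2 = 56448\exp\del{432k^2\leverage_0}\leverage_0\residual_0^2\log(12/\delta)/\eps^2$ with $k=\Theta(\log(1/\delta)/\eps)$. The claim that $c^2=\Theta\del{\leverage\residual^2\log(1/\delta)/\eps^2}$ therefore hinges entirely on showing $\exp\del{432k^2\leverage_0}=O(1)$, i.e.\ $k^2\leverage_0=O(1)$, and this is precisely what the hypothesis $\leverage\le c'\eps^2\log^{-2}(1/\delta)$ buys (the weaker bound $\leverage\lesssim \eps/\log(1/\delta)$ would already prevent the algorithm from returning \fail{} at its initial check, but would leave the exponential factor unbounded). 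Your proposal uses the hypothesis only to argue the algorithm does not fail and the filters pass, and then produces the $\Theta(\cdot)$ for $c^2$ by a heuristic Gaussian-mechanism calculation that never touches the algorithm's actual constant. That calculation happens to land on the right order, but it is not a proof of the stated claim about the value the algorithm actually uses; the one place where the specific $\eps^2\log^{-2}(1/\delta)$ form of the hypothesis is needed is exactly the step you skip.
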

Attentive readers will detect a modest sleight-of-hand: $(\leverage,\residual)$-goodness is a property of the data and a priori unknown, yet the algorithm gets $\leverage$ and $\residual$ as inputs!
Nevertheless, an analyst with beliefs about the data generation process can set these parameters appropriately.
The maximum leverage score does not depend on the scale of the data, only its concentration properties.
Since it lies within $[0,1]$, one might pick the $\leverage$ hyperparameter adaptively by calling \algname a handful of times.
Similarly, if the analyst believes the labels are generated by a process such as $y_i \gets \ip{x_i}{\beta^*} + \cN(0,\sigma^2)$, they can privately produce an accurate estimate $\sigma$ using standard tools (see \cref{app:sigma}).
We believe alternative standardized or studentized definitions could remove this need for prior knowledge about $\sigma$.
These alternatives would likely increase the complexity of our proofs.

The difficulty in our work lies in proving that \algname is differentially private (\cref{thm:main}); reasoning about utility is simple once we have \cref{obs:output_on_good_data}.
More specifically, seeing how the output distribution on good data matches standard statistical practice (and classical CLT-like analyses of OLS), we can quickly derive error bounds. 
For instance, in the simplest case of \emph{fixed design}, where we consider only the randomness of the labels generated from a well-specified linear model $y_i = \ip{x_i}{\beta^*} + \cN(0,\sigma^2)$, we have $\ols \sim \cN(\beta^*, \sigma^2\cdot (X^\T X)^{-1})$. Hence, from \cref{obs:output_on_good_data}, we see that, relative to the empirical OLS solution, the private estimator is just a slightly noisier version of the true parameter (and has the same kind of error covariance). 

More formally, we can analyze the mean squared error (MSE) of our algorithm on any good dataset.
\begin{corollary}
\label[corollary]{cor:fixed_design_closeness}
Fix $(X,y)$, $\eps>0$, and $\delta\in[0,1]$.
If $(X,y)$ is $(\leverage,\residual)$-good for \(\leverage \leq  c'{\varepsilon^2 \log^{-2} (1/\delta)}\) for some constant $c'$ and \(R>0\), then \(\algname\del{X,y;\eps,\delta,\leverage,\residual}\), releases $\tilde\beta$ such that, for some absolute constant $c'$,
    \begin{align}
        \bbE \sbr*{\frac{1}{n} \norm[\big]{y - X\tilde{\beta} }^2}\;\; =\;\; \frac{1}{n} \norm[\big]{y - X\ols}^2 + c' \leverage \residual^2\, \frac d n\, \frac{  \log 1/\delta}{\eps^2}.
    \end{align}
\end{corollary}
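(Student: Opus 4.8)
The plan is to derive this corollary as an essentially immediate consequence of \cref{obs:output_on_good_data}, reducing everything to a second-moment computation for a single Gaussian. Since the hypotheses here ($(\leverage,\residual)$-goodness together with $\leverage \le c'\eps^2\log^{-2}(1/\delta)$) are exactly the hypotheses of \cref{obs:output_on_good_data}, I may assume that $\algname\del{X,y;\eps,\delta,\leverage,\residual}$ releases $\tilde\beta \sim \cN\del[\big]{\ols, c^2 (X^\T X)^{-1}}$ with $c^2 = \Theta\del{\leverage\,\residual^2\log(1/\delta)/\eps^2}$. Writing $\tilde\beta = \ols + \xi$ where $\xi \sim \cN\del[\big]{0, c^2 (X^\T X)^{-1}}$, the only task remaining is to average $\frac1n\norm{y - X\tilde\beta}^2$ over the noise $\xi$.

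The key step is the bias--variance decomposition
\begin{align}
    \norm[\big]{y - X\tilde\beta}^2
    = \norm[\big]{y - X\ols}^2 - 2\ip{y - X\ols}{X\xi} + \norm{X\xi}^2.
\end{align}
The cross term vanishes \emph{deterministically}: by the normal equations defining the OLS estimator in \cref{eq:def_OLS}, we have $X^\T(y - X\ols) = 0$, so $\ip{y - X\ols}{X\xi} = (X^\T(y-X\ols))^\T \xi = 0$ for every realization of $\xi$. (Even without this observation, its expectation is zero since $\bbE[\xi] = 0$.) This leaves only the quadratic noise term, which I would handle with the standard identity $\bbE\sbr{\xi^\T A \xi} = \tr\del{A\,\bbE[\xi\xi^\T]}$: taking $A = X^\T X$ and $\bbE[\xi\xi^\T] = c^2 (X^\T X)^{-1}$ gives
\begin{align}
    \bbE\sbr[\big]{\norm{X\xi}^2} = \bbE\sbr[\big]{\xi^\T X^\T X \xi} = c^2\,\tr\del[\big]{X^\T X (X^\T X)^{-1}} = c^2 d.
\end{align}

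Dividing by $n$ and substituting $c^2 = \Theta\del{\leverage\,\residual^2\log(1/\delta)/\eps^2}$ then yields
\begin{align}
    \bbE \sbr*{\tfrac{1}{n}\norm[\big]{y - X\tilde\beta}^2} = \tfrac1n\norm[\big]{y - X\ols}^2 + \tfrac{c^2 d}{n} = \tfrac1n\norm[\big]{y - X\ols}^2 + c'\,\leverage\,\residual^2\,\tfrac{d}{n}\,\tfrac{\log 1/\delta}{\eps^2},
\end{align}
which is the claimed identity. I do not expect any genuine obstacle in this argument: it is a routine Gaussian moment calculation, and essentially all of the technical weight has already been absorbed into \cref{obs:output_on_good_data}, whose proof (establishing that the algorithm's output on good data is exactly $\cN(\ols, c^2(X^\T X)^{-1})$) is where the real work lies. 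The only point requiring minor care is checking that the corollary's hypotheses match those of the claim so that the exact Gaussian output characterization is available; once that is in hand the computation above is self-contained.
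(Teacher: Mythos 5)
Your proposal is correct and follows essentially the same route as the paper: invoke \cref{obs:output_on_good_data} to write $\tilde\beta = \ols + \xi$ with $\xi\sim\cN(0,c^2(X^\T X)^{-1})$, expand the squared norm, drop the cross term, and compute the quadratic term via the trace identity to get $c^2 d$. The only (harmless) embellishment is your observation that the cross term vanishes deterministically by the normal equations rather than merely in expectation.
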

\begin{proof}
By \cref{obs:output_on_good_data}, we have $\ols - \tilde\beta = c\cdot (X^\T X)^{-1/2} u$ for $u\sim \cN(0,\bbI)$. 
We then expand:
\begin{align}
        \bbE_u \norm[\big]{y - X\tilde\beta}^2 &= \bbE_u \norm[\big]{y - X\ols + X(\ols-\tilde\beta)}^2 \\
        &= \bbE_u \norm[\big]{y - X\ols + X \cdot c (X^\T X)^{-1/2} u}^2 \\
        &=\norm{y - X\ols}^2 + c^2 \cdot \bbE_u \left[u^\T (X^\T X)^{-1/2} X^\T X (X^\T X)^{-1/2} u\right],
\end{align}
where the cross terms drop out as $u$ is independent and mean-zero. 
The matrices cancel and we are left with $\bbE [u^\T u]$, which is exactly $d$.
\end{proof}

We emphasize that this result holds without any assumption that the data arises from a specific family of distributions. 
It assumes $(X,y)$ is fixed and $(L,R)$-good to bound the difference from the empirical OLS solution on $(X,Y)$. 
However, if we do add such distributional assumptions, it is easy to show that our algorithm produces a private parameter estimate that closely approximates the true regression parameter. We state this fact as part of the following theorem, our main result.

\begin{theorem}[Main Theorem]\label{thm:main}
    Fix \(\varepsilon, \eta \in \del{0, 1}\), \(\delta \in \intoc{0, \varepsilon/10}\), and \(n, d \in \bbN\).
    \algname takes a dataset \(\del{X, y} \in \RR^{n \times d} \times \RR^n\), privacy parameters \(\varepsilon, \delta\), and outlier thresholds \(\leverage_0, \residual_0\).
    \begin{enumerate}
        \item %
        \algname is \(\del{\varepsilon, \delta}\)-differentially private.
        \item Let \(X \in \RR^{n \times d}\) be drawn i.i.d. from a \(d\)-dimensional subgaussian distribution \(\mathcal{D}\) with mean \(0\) 
        and covariance \(\Sigma \succ 0\).
        Let \(y_i = \beta^\T x_i + z_i\) where the \(z_i\) are drawn i.i.d. from a subgaussian distribution with mean 0 and variance \(\sigma^2\)
        (see \cref{def:subgaussian_vector} in \cref{sec:preliminaries}).
        \footnote{We state the utility guarantees of our estimator for the case where data is drawn from a well-specified linear model to simplify the presentation and enable direct comparisons to previous work. However, as per \cref{cor:fixed_design_closeness}, on good data our algorithm is always close to the OLS solution. Hence, we can prove closeness to the population quantity whenever the OLS solution concentrates.}
        If 
        \(\leverage_0 = \widetilde{\Theta}\del*{d/n}\), \(\residual_0=\widetilde{\Theta}\del{\sigma}\),
        and
        \[n \;\;=\;\; \widetilde{\Omega}\del*{\frac{d}{\alpha^2} + \frac{d\sqrt{\log1/\delta}}{\alpha\varepsilon} + \frac{d\del{\log 1/\delta}^2}{\varepsilon^2}}\;, \]
        with a large enough constant for some $\alpha>0$, then  \algname %
         returns \(\tilde{\beta}\) such that, with high probability, 
         $$\norm[\big]{\tilde\beta - \beta}_\Sigma \;\; \leq \;\; \sigma \alpha.$$
        Here, \(\widetilde{\Theta}\) and \(\widetilde{\Omega}\) hide logarithmic factors in \(1/\alpha\), \(\log\del{1/\varepsilon}\), and \(\log\del{1/\delta}\) as well as polynomial factors of the subgaussian parameters.
        \item \cref{alg:private_OLS} can be implemented to require
             one product of the form $A^\top A$ for $A\in \R^{n\times d}$,
             one product of the form $A B$ for $A\in R^{n\times d}$ and $B\in \R^{d\times d}$,
             one inversion of a positive definite matrix in $\R^{d\times d}$; and
             further computational overhead of $\tilde{O}(nd/\eps)$.
    \end{enumerate}
\end{theorem}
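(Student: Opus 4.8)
The three claims are largely independent, so I would treat them separately and spend the bulk of the effort on privacy (Claim 1), which the authors flag as the crux. The plan for privacy is to instantiate the ``stable estimator plus correlated Gaussian noise'' paradigm within a propose-test-release (PTR) shell. Concretely, \algname produces a data-dependent covariance $\covone$ from the leverage-filtering stage and a recentered weighted OLS vector $\hat\mu$ from the residual-filtering stage, releases an integer score, and --- when the score is below the discretization budget $k$ --- outputs $\hat\mu$ plus Gaussian noise whose shape is governed by $\covone^{-1}$ and whose scale is $\Theta(\leverage_0\residual_0^2\log(1/\delta)/\eps^2)$. I would establish DP in two pieces: (i) the score has sensitivity one, so thresholding it with the standard PTR rule is $(\eps/2,\delta/2)$-DP; and (ii) conditioned on both adjacent runs passing the test, the two output Gaussians are close, which I would quantify by bounding the Mahalanobis distance between their centers and the trace-norm distance between their shape matrices, then invoking a divergence bound for Gaussians that differ in both mean and covariance.

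The heart of the argument, and what I expect to be the main obstacle, is these two stability bounds. First I would show the covariance is stable: for adjacent datasets whose scores are both below $k$, the filtered covariances satisfy a multiplicative sandwich $(1-\gamma)\covone \preceq \covtwo \preceq (1-\gamma)^{-1}\covone$ with $\gamma = \Theta(\leverage_0/n)$, which in turn controls $\norm{\covone^{-1/2}\covtwo\covone^{-1/2} - \bbI}_{\tr}$. Proving this requires arguing that the discretized leverage filter can absorb a single changed row without its output jumping across discretization levels, using that the score budget $k$ caps how many reweighting rounds occur. Second I would show the centers are stable: given covariances obeying the sandwich, the recentered weighted OLS vectors satisfy $\norm{\hat\mu-\hat\mu'}_{\covone}^2 = O(\leverage_0/n^2)$, relying on the reference set $\R$ being degree-representative for both datasets. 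The delicate point throughout is that the noise covariance is itself data-dependent, so I must simultaneously control the change in the center \emph{and} the change in the metric in which that change is measured; naively bounding one while ignoring the drift of the other loses the scale invariance that makes the whole scheme work.

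For utility (Claim 2), the nonprivate analysis is already packaged in \cref{obs:output_on_good_data}, so the only remaining work is to verify $(\leverage_0,\residual_0)$-goodness (\cref{def:goodness_set}) under the distributional assumptions. With $n = \widetilde{\Omega}(d)$ subgaussian samples, matrix concentration gives $X^\T X = (1\pm o(1))\,n\Sigma$, so each leverage score $x_i^\T (X^\T X)^{-1} x_i$ concentrates near $\tfrac1n x_i^\T\Sigma^{-1}x_i$ and, by subgaussian norm concentration and a union bound, is $\widetilde{O}(d/n)$ uniformly, while the residuals are $\widetilde{O}(\sigma)$ uniformly; this justifies $\leverage_0=\widetilde{\Theta}(d/n)$ and $\residual_0=\widetilde{\Theta}(\sigma)$. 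Invoking \cref{obs:output_on_good_data}, the output is $\tilde\beta\sim\cN(\ols, c^2(X^\T X)^{-1})$, and I would use $\norm{\tilde\beta-\beta}_\Sigma \le \norm{\tilde\beta-\ols}_\Sigma + \norm{\ols-\beta}_\Sigma$. The first term has size $c\sqrt{\tr(\Sigma(X^\T X)^{-1})} = \widetilde{O}(c\sqrt{d/n})$ and the second is the usual OLS error $\widetilde{O}(\sigma\sqrt{d/n})$; plugging in $c^2=\Theta(\leverage_0\residual_0^2\log(1/\delta)/\eps^2)$ and forcing each term below $\sigma\alpha$ reproduces exactly the three summands of the stated bound (the $\eps$-free statistical term $d/\alpha^2$, the first-order private term $d\sqrt{\log 1/\delta}/(\alpha\eps)$, and the $d(\log 1/\delta)^2/\eps^2$ term forced by the constraint $\leverage_0 \le c'\eps^2\log^{-2}(1/\delta)$).

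Finally, for the computational claim (Claim 3) I would simply audit \cref{alg:private_OLS}. Forming the Gram matrix is the single $A^\T A$ product; computing the whitened points and leverage scores together with the weighted regression direction accounts for the single $AB$ product and the one positive-definite inversion. The remaining $\widetilde{O}(nd/\eps)$ is the cost of the filtering loops: there are $O(k) = \widetilde{O}(1/\eps)$ reweighting rounds, and once the whitening is cached each round recomputes the $n$ scores in $O(d)$ time apiece, so no further large matrix operations are incurred.
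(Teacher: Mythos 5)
Your treatment of utility (Claim~2) and running time (Claim~3) tracks the paper's own arguments (\cref{lemma:subgaussian_good}, \cref{lemma:ols_error}, \cref{lemma:accuracy_main}, \cref{lemma:main_time}) closely, including the correct accounting for where each of the three terms in the sample complexity comes from. The privacy shell is also right: PTR on a low-sensitivity score, then a Gaussian release whose mean and shape are both data-dependent, split via the DP triangle inequality into a mean-shift step and a covariance-shift step. But the heart of your privacy argument is not the argument this paper needs, and the substitution hides the main technical obstacle. You describe the post-test stability in terms of a ``recentered weighted OLS vector $\hat\mu$,'' a reference set $\R$ that is ``degree-representative for both datasets,'' and a center drift of $O(\leverage_0/n^2)$. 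That is the stability lemma for the \emph{mean} estimator of BHS (it appears only in unused macro definitions in this source), not for regression. In the actual algorithm there is no reference set; the quantity that must be stable is the weighted least-squares solution $\hat\beta_v=(X^\T\diag(v)X)^{-1}X^\T\diag(v)y$, and the correct drift bound (\cref{claim:parameter_stability_from_weight_stability_different_datasets}) scales as $\leverage\residual^2$ in the $S_v$-norm, matching the noise scale $c^2=\Theta(\leverage_0\residual_0^2\log(1/\delta)/\eps^2)$ --- the $\residual^2$ factor you correctly put in the noise but then drop from the stability bound. Your covariance sandwich parameter $\gamma=\Theta(\leverage_0/n)$ is also off by a factor of $n$; the paper's \cref{claim:identifiability_covs} gives $\dpsd(\cov_v,\cov_{v'})=O(\leverage_0)$ for $\ell_1$-close bounded-leverage weights, and this is what must be compared against $\eps/\log(1/\delta)$.

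The deeper missing idea is how one gets $\ell_1$-close good weights on adjacent datasets in the first place. For the leverage filter this is imported from BHS as a black box (\cref{thm:stable_cov_guarantees}), as you suggest. But the residual filter is new: the greedy procedure \resthresh removes the point with the largest residual and refits, so a single changed row can cascade through the entire removal sequence, and (unlike in the mean/covariance setting) there is no unique largest good set and no monotonicity to lean on. The paper's resolution is the intertwining property (\cref{claim:intertwining}): a run on $(X,y)$ at threshold $\residual_j$ is dominated by a run on the adjacent dataset at the inflated threshold $\residual_{j+1}=\exp(108k\leverage)\residual_j$, proved via the weighted rank-one update formulas of \cref{claim:final:goodness_after_removal} and \cref{claim:add_weight_outside_support}. \stableOLS then runs \resthresh over a geometric ladder of $2k+1$ thresholds and averages the top $k$ outputs, which is what makes both the score (\cref{claim:score_low_sensitivity}) and the weights (\cref{claim:weights_stable}) low-sensitivity, and which is also the source of the $\exp(\Theta(k^2\leverage_0))$ factor in $c^2$ that forces $\leverage_0\lesssim\eps^2/\log^2(1/\delta)$ and hence the third term of the sample complexity. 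Without this ladder-and-intertwining mechanism (or a substitute for it), the claim that ``conditioned on both runs passing the test, the two output Gaussians are close'' has no support: passing PTR only certifies that each dataset individually admits a large good subset, not that the two greedy runs land near each other.
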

Informally, this running time corresponds to $\widetilde{O}(nd^{\omega-1} + nd/\eps)$, where $\omega<2.38$ is the matrix multiplication exponent.
For modest privacy parameters, the running time of our algorithm is dominated by the time needed to compute the nonprivate OLS solution itself.

This is the first computationally efficient algorithm whose sample complexity is linear in $d$ and has no dependence on the condition number $\kappa(X^\top X)$. 
This almost matches the best known sample complexity of an exponential-time algorithm from \cite{liu2022differential}; we have an additional $d(\log(1/\delta))^2/\eps^2$ term, but this term does not depend on the final accuracy $\alpha$. %

\begin{algorithm2e}[h]
\SetAlgoLined
\SetKwInOut{Input}{Input}
\SetKwInOut{Require}{require}
\Input{dataset \(\del{X, y}\); privacy parameters \(\varepsilon, \delta\), outlier thresholds \(\del{\leverage_0, \residual_0}\)}
\BlankLine
\(k\gets \ceil*{\frac{12 \log 3/\delta}{\varepsilon}} + 8; \hspace{0.5cm} c^2 \gets 56448\exp\del[\big]{432 k^2 \leverage_0} \leverage_0\residual_0^2\cdot \frac{\log \del{12/\delta}}{\varepsilon^2}\)\;
\If{\(\leverage_0 > 1/\del{96k}\) or \(\leverage_0 > 3\varepsilon/\del{56\log 12/\delta}\)}{
\KwRet \(\fail\)\;
}
\(\score_1, w \gets \stablecovariance\del{X, \leverage_0, k}\)\tcp*{\cref{alg:stablecovariance}}
\(\score_2, v \gets \stableOLS\del{X, y, w, \leverage_0, \residual_0, k}\)\tcp*{\cref{alg:stable_OLS}}
\eIf{\(\PTRepsdelta{\varepsilon/3}{\delta/3}\del{\max\set{\score_1, \score_2}} = \fail\)}{
    \KwRet \(\fail\)\;
}{
\(\cov_v \gets X^\T \diag\del{v} X\)\;
\(\hat{\beta} \gets (\cov_v)^{-1} X^\T \diag\del{v} y\)\tcp*{OLS weighted by $v$} %
\KwRet \(\tilde{\beta} \sim \cN\del[\big]{\hat{\beta}, c^2\cov_v^{-1}}\)\;
}
\caption{InSufficient Statistics Perturbation (\privateOLS)}\label{alg:private_OLS}
\end{algorithm2e}

We now briefly sketch the steps of the proof and discuss the paper's organization.
We establish \cref{thm:main}'s subclaims in \cref{lemma:main_privacy,lemma:accuracy_main,lemma:main_time}.
As outlined above, the utility analysis is straightforward once we have \cref{obs:output_on_good_data} in hand.
The full analysis is presented in \cref{sec:utility}.
It is easy to see that \cref{alg:private_OLS} runs in polynomial time. 
In \cref{sec:running_time}, we analyze a careful implementation.

The bulk of the work comes in the privacy analysis. 
In \cref{sec:greedy_analysis}, we analyze the greedy residual thresholding algorithm, with the main result about that algorithm being \cref{claim:intertwining}, the ``intertwining'' property.
Then, in \cref{sec:stable_analysis}, we establish our guarantees for \stableOLS.
The main results about \stableOLS are \cref{claim:score_low_sensitivity}, which says that the score is low-sensitivity, and \cref{claim:weights_stable}, which says that the weights are stable.
\cref{sec:main_analysis} pulls these together to establish the privacy of \algname.

\cref{sec:related} covers additional related work.
\cref{sec:preliminaries} provides necessary preliminaries.
\cref{sec:deferred_proofs} contains proofs deferred from the main text.
\cref{app:sigma}, via standard tools, shows how to privately estimatie $\residual$.
\cref{sec:lower_bound} contains details on the lower bound of \citet{cai2023score}.

\subsection{Optimality}

For modest values of the privacy parameters, the error of our algorithm is dominated by the empirical error of OLS. 
Informally speaking, we obtain privacy ``for free.''

Formally, our error guarantees are close to tight for random-design regression with subgaussian covariates and subgaussian label noise.
Suppressing constants and logarithmic factors other than $\log 1/\delta$, \cref{thm:main} says that we can achieve $\norm{\tilde \beta - \beta}_\Sigma \le \sigma \alpha$ with high probability with
\begin{align}
    n \approx \frac{d}{\alpha^2} + \frac{d\sqrt{\log 1/\delta}}{\alpha \eps} + \frac{d(\log 1/\delta)^2}{\eps^2}.
\end{align}
Known lower bounds imply this task requires
\begin{align}
    n \gtrsim \frac{d}{\alpha^2} + \frac{d}{\alpha \eps} + \frac{\log 1/\delta}{\eps}. \label{eq:exact_lbd}
\end{align}
The first term corresponds to the classical analysis of OLS.
The second term was established by \citet{cai2023score} and holds even for parameter estimation in $\ell_2$ norm; see \cref{sec:lower_bound} for a more detailed discussion.
The third term, the minimal number of samples required to produce any estimate, is from \citet{karwa2017finite} and holds even for one-dimensional mean estimation with known variance.
The exponential-time algorithm of \citet{liu2022differential} nearly matches all three terms.
For constant $\eps$ and $\delta = 1/\mathrm{poly}(n)$, our algorithm's error guarantee in this setting is tight up to logarithmic factors.
An exciting topic for future work is determining the existence or impossibility of efficient algorithms with error matching \cref{eq:exact_lbd} up to constant factors.

\subsection{Techniques}
\label{sec:tech}

At a high level, our algorithm follows the blueprint for private mean estimation laid out by \cite{brown2021covariance} and made computationally efficient by \cite{duchi2023fast} and \cite{brown2023fast}.
Our approach closely follows that of \cite{brown2023fast}, henceforth BHS.
We now sketch our algorithm, discuss how our analysis differs from that of BHS, and investigate how the notions we use are, in a sense, ``correct'' for the task of private least squares.

\paragraph{Overview of \algname} Perhaps the most natural approach for private estimation of regression coefficients is to perturb the ordinary least square estimator, $\ols$. However, without restrictions on the data, the sensitivity of $\ols$ is unbounded. 
Our key observation is that, on datasets with bounded leverage and bounded residuals,
the OLS solution is actually quite stable. 
If we could restrict our inputs to only such outlier-free data sets, we might hope to release $\ols$ plus noise with shape $(X^\top X)^{-1}$.

While this would provide accuracy, it fails on privacy: we must accommodate worst-case data. 
We use the PTR framework of \cite{DworkL09} to test if our input contains a large good subset. 
We propose a greedy pruning algorithm which, in each iteration, removes the data point with the largest residual and recomputes OLS  on the remaining data. 
Similar approaches abound in the literature on robust statistics, but we prove key new properties about how this algorithm behaves across adjacent data sets and different outlier thresholds.

\paragraph{Adaptively selecting outlier thresholds}
Our algorithm takes as input target bounds $\leverage$ and $\residual$ for the leverage and residuals, respectively.
This simplifies our analysis but is not strictly necessary.
The maximum leverage can only lie within the interval $[0,1]$, so one could imagine calling \algname repeatedly within this space (via a well-chosen grid or binary search) to find an appropriate setting, perhaps via a small validation set.
Independently, one could privately learn an appropriate value for $\residual$ directly through standard techniques; we give a complete description in \cref{app:sigma}.

\paragraph{Proof techniques} 
While our work builds on a long line of research connecting robust statistics and differential privacy, it especially relies on the 
recent algorithmic approach of BHS, who gave improved algorithms for private mean and covariance estimation.
At a bird's eye view, our recipe for private linear regression follows the main ideas behind the mean estimator of BHS. However, key parts of the implementation and analysis differ significantly in the more complicated linear regression setting. %

We start by discussing the ways in which our main proof strategy is similar to BHS.
As mentioned previously, we introduce a notion of ``good'' outlier-free datasets for linear regression.
We repeatedly call a greedy algorithm to find a series of good weight vectors 
across a range of carefully chosen outlier thresholds. 
We use these vectors to privately test that our input data is sufficiently close to the good set and to finally produce a vector of weights  over the input.
Crucially, this weight-finding procedure is stable: if run on any adjacent dataset, it would produce a vector that is close in $\ell_1$ distance.

When adapting their analysis, however, we run into immediate issues. 
For both their definitions of good set, BHS prove a number of strong properties that are false in the context of regression.
For mean and covariance estimation, the good sets are unique (i.e., for any dataset and outlier threshold, there exists a unique largest good set), are directly found by the natural greedy algorithm, and enjoy a form of monotonicity (e.g., introducing a new point to the dataset cannot alter the good set very much). 
For the definition we use, there is no unique ``largest good set'' which introduces significantly more complexity in the analysis.
What's more, adding a single point may significantly affect the downstream choices made by the greedy algorithm which further complicate the relevant stability calculations. 

In more detail, a key step in the BHS analysis establishes the following ``intertwining'' property in the context of mean and covariance estimation. 
Suppose we call the greedy algorithm on a dataset $D$ with outlier threshold $B$ and find a largest good subset $S\subseteq [n]$.
If we then call the same algorithm on an adjacent dataset $D'$ with a slightly larger outlier threshold $B'$, the largest good subset $T$ will satisfy the property that $S\subseteq T$ (ignoring the index that differs between $D$ and $D'$).

We establish an analogous statement (\cref{claim:intertwining}) about the output of our greedy residual thresholding, \cref{alg:residual_thresholding}, even though we cannot prove the same uniqueness and monotonicity statements.
More specifically, we develop a novel regression-specific argument  that uses the closed-form expressions describing how the least squares solution changes when an observation is added or removed.
The exact arguments are formalized in \cref{claim:final:goodness_after_removal} and \cref{claim:add_weight_outside_support}, but we  sketch the ideas here.

For a dataset $(X,y)$ and index $i\in [n]$, let $\hat y_i = x_i^\T \ols$ be the fitted value.
We denote by $e_i$ the $i$-th residual: $e_i \defeq \hat y_i - y_i$.
Recall the \emph{hat matrix}:
\begin{align}
    H \defeq X (X^\T X)^{-1} X^\T , 
        \label{eq:hat_matrix}
\end{align}
so called because it maps the true labels to their ``hat'' values: $\hat y = H y$.
The \emph{leverage scores} (also known as \emph{sensitivities} or \emph{self-influences}) form its diagonal entries, while its off-diagonal entries will be called (by us) the \emph{cross-leverage scores}:
\begin{align}
    h_i = H_{i,i} &= x_i^\T (X^\T X)^{-1} x_i  \quad\text{and}\quad  
    H_{i,j} = x_i^\T (X^\T X)^{-1} x_j.
\end{align}
Note that, by Cauchy--Schwarz, the cross-leverages are no larger in magnitude than the leverages.

What happens if we remove an observation, say, $(x_j,y_j)$, from the dataset?
This takes the form of a rank-one update. Applying the Sherman-Morrison formula we can derive closed-form expressions for the changes in the OLS solution as well as (for any $i\in[n]$) the leverage score and residual of point $i$ after removing $j$.
Using the subscript ``$(-j)$'' to denote the quantity after removal, we have
\begin{align}
     h_i - h_{i(-j)} &=  - \frac{H_{i,j}^2}{1 - h_j} \\
    \ols  - \beta_{\mathrm{ols}(-j)} &= \frac{(X^\T X)^{-1} x_j}{1-h_j}\cdot \del[\big]{\ip{x_j}{\ols} - y_j}\\
    e_i - e_{i(-j)}  &= H_{i,j}\cdot \frac{e_j}{1-h_j}.
\end{align}
These well-known formulas have elementary derivations; the second and third correspond to the DFBETA (``difference in $\beta$'') and DFFITS (``difference in fits'') regression diagnostics \cite[see textbooks such as][]{mendenhall2003second,belsley2005regression,huber2011robust}.
All three seamlessly generalize to the case where the points are weighted. 
We can reuse them to reason about what happens when we add points to the dataset.

Beyond their use in our formal arguments, these formulas show how our goodness definition in \cref{def:goodness} is essentially the ``right'' one to analyze stability. The leverage score and the magnitude of the residual \textit{exactly} determine the sensitivity of the least-squares solution to adding or removing that data point. 
To see this in more detail, consider the effect of dropping a point from a typical dataset:
\begin{align}
    \norm[\big]{(X^\T X)^{1/2} (\ols - \beta_{\mathrm{ols}(-j)})}^2 
        &= \norm*{(X^\T X)^{1/2} \cdot \frac{(X^\T X)^{-1} x_j}{1-h_j}\cdot (y_j - \ip{x_j}{\ols})}^2 \\
        &= \frac{h_j\cdot e_j^2}{(1- h_j)^2} = \Delta.
\end{align}
Arguing heuristically for now, if removing a point changes the OLS solution by $\Delta$, to ensure privacy one must ensure noise of magnitude \emph{at least} $\Delta$. It is impossible to do any better. Note that by working with $(\leverage,\residual)$-good sets we can guarantee that the noise we add for privacy,  $\cN(0,c^2 (X^\T X)^{-1})$ where $c^2 \ge \leverage\residual^2 \cdot \frac{\log 1/\delta}{\eps^2}$,  has magnitude roughly $\Delta$. 
This insight shows our accuracy guarantees are sharp.

\subsection{Notation}

We use \(\sbr{n}\) to denote the set \(\set{1, \ldots, n}\) and \(\bbN = \set{1, 2, \ldots}\).
For a vector \(v\in \bbR^n\) its support is \(\supp{v} = \set{i \in \sbr{n} \mid v_i \neq 0}\).
If we have a set \(S\subseteq \sbr{n}\), then \(\vsmask{v}{S}\in \RR^n\) has \(\del{\vsmask{v}{S}}_i = v_i\) for \(i \in S\) and \(\del{\vsmask{v}{S}}_i = 0\) otherwise.
Also we define \(\overline{S} = \sbr{n}\setminus S\).
We use \(\norm{v} \defeq \norm{v}_2\) and \(\norm{v}_{\cov} \defeq \norm{\cov^{1/2}v}\).
If \(M \in \RR^{n \times n}\) is a matrix, then \(\norm{M}_2\) denotes its spectral norm.

\section{Analysis of Greedy Residual Thresholding}
\label{sec:greedy_analysis}

\begin{algorithm2e}
\SetAlgoLined
\SetKwInOut{Input}{input}
\SetKwInOut{Require}{require}
\Input{dataset \(X, y\); outlier threshold $R$; starting weights $w$}
\BlankLine
\While{$\mathtt{TRUE}$}{
    $\beta_w \gets \weightedOLS(X,y,w)$\;
    $i^* \gets \argmax_{i\in \supp{w}} \abs[\big]{y_i - x_i^\T \beta_w}$ \;
    \eIf{$\abs{y_{i^*}- x_{i^*}^\T \beta_w} > R$}{
        $w_{i^*} \gets 0$\;
    }{
        \KwRet $w$\;
    }
}
\caption{\resthresh}\label{alg:residual_thresholding}
\end{algorithm2e}

In this section we establish the key properties of our greedy residual-thresholding algorithm.
This analysis contains the bulk of the technical novelty in our work.
The main result is \cref{claim:intertwining}, the ``intertwining'' property that relates the outputs of \resthresh on adjacent datasets.

Since we will be dealing extensively with weighted sets from now on, we expand the definition of good sets in \cref{def:goodness_set} to vectors of weights.
\begin{restatable}[$(\leverage,\residual)$-goodness, weighted]{defnn}{goodnessdef}
\label[definition]{def:goodness}
    Fix a dataset $(X,y)\in \bbR^{n\times d}\times \bbR^n$ and parameters $\leverage,\residual>0$.
    A vector $w\in [0,1]^n$ is \emph{$(\leverage,\residual)$-good for $(X,y)$} if, denoting $W=\mathrm{diag}(w)$, $X^\T W X$ is invertible and the following two conditions hold for all $i\in \mathrm{supp}(w)$.
    \begin{enumerate}
        \item Bounded leverage: $x_i^\T (X^\T W X)^{-1} x_i \le \leverage$.\label{def:goodness:leverage}
        \item Bounded residuals: $\abs[\big]{\ip{x_i}{\beta_w} - y_i} \le \residual$, where $\beta_w = (X^\T WX)^{-1}X^\T W y$.\label{def:goodness:residual}
    \end{enumerate}
    Furthermore, we will say that \(w\) is \(\del{L,\infty}\)-good for \(\del{X, y}\) if (\ref{def:goodness:leverage}) holds, but not (\ref{def:goodness:residual}).
\end{restatable}

\subsection{Stability and Goodness for Ordinary Least Squares}\label{sec:manipulating_regression}

Our analyses rely on how goodness is affected when adding and or removing mass.
As discussed in \cref{sec:tech}, closed-form expressions characterize the effects of removing a single point \citep{mendenhall2003second,belsley2005regression,huber2011robust}.
The following claim generalizes these results to removing multiple weighted points, or adding weight to points already included in the regression.
In addition, it shows how these results interact with goodness.
We defer the proof to \cref{sec:deferred_proofs}.
\begin{claim}[Changing Weight Within Support]\label{claim:final:goodness_after_removal}
    Let $w, w'\in[0,1]^n$ satisfy
        $\supp{w'} \subseteq \supp{w}$ and $\norm{w-w'}_1 \leverage \le \frac{1}{2}$. If w is $(L, \infty)$-good for $(X,y)$, then for all $i \in \supp{w}$,
    \begin{align}
        x_i^\T (X^\T \diag(w') X)^{-1} x_i &\le (1 + 2\leverage\norm{w-w'}_1)\,\leverage. \label{eq:removal_leverage}
    \end{align}
If, in addition to the previous conditions, it also holds that $w$ is $(L,R)$-good for $(X,y)$, then
    \begin{align}
                \abs{x_i^\T \beta_w - x_i^\T \beta_{w'}} &\le 2 \norm{w - w'}_1 \leverage \residual.  \label{eq:removal_prediction}
    \end{align}
    In particular, since $\supp{w'}\subseteq \supp{w}$,  Equations~\eqref{eq:removal_leverage} and \eqref{eq:removal_prediction} apply to all $i\in \supp{w'}$.
    Consequently, $w'$ is $(\eta\leverage,\eta\residual)$-good for $(X,y)$, where $\eta = 1 + 2\leverage\norm{w -w'}_1$.
\end{claim}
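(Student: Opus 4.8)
The plan is to reduce both conclusions to a single spectral perturbation estimate in the whitened basis and then read everything off. Write $A = X^\T \diag(w) X$ and $A' = X^\T \diag(w') X$, and set $D = \diag(w - w')$, so that $A' = A - X^\T D X$. Since $w$ is $(\leverage,\infty)$-good, $A \succ 0$ and the whitened vectors $\tilde x_i = A^{-1/2} x_i$ satisfy $\norm{\tilde x_i}^2 = x_i^\T A^{-1} x_i \le \leverage$ for every $i \in \supp{w}$. First I would factor $A' = A^{1/2}(\bbI - M) A^{1/2}$ with $M = A^{-1/2} X^\T D X A^{-1/2} = \sum_i (w_i - w_i') \tilde x_i \tilde x_i^\T$. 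The support hypothesis $\supp{w'}\subseteq \supp{w}$ guarantees that every index with $w_i \neq w_i'$ lies in $\supp{w}$, so the leverage bound applies to all terms and a triangle inequality gives $\norm{M}_2 \le \sum_i \abs{w_i - w_i'}\norm{\tilde x_i}^2 \le \leverage\norm{w-w'}_1 =: \rho \le \tfrac12$. Hence $\bbI - M \succeq (1-\rho)\bbI \succ 0$, which shows $A' \succ 0$ (so $w'$ has invertible Gram matrix) and $(\bbI - M)^{-1} \preceq \frac{1}{1-\rho}\bbI$. Conjugating, $x_i^\T (A')^{-1} x_i = \tilde x_i^\T (\bbI - M)^{-1}\tilde x_i \le \frac{\leverage}{1-\rho} \le (1 + 2\rho)\leverage$, using $\frac{1}{1-\rho}\le 1+2\rho$ for $\rho\le\tfrac12$; this is exactly \eqref{eq:removal_leverage}.

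For the prediction bound I would first derive the weighted analogue of the DFBETA formula directly from the normal equations $A\beta_w = X^\T \diag(w) y$ and $A'\beta_{w'} = X^\T \diag(w') y$. Subtracting $A'\beta_w = A\beta_w - X^\T D X \beta_w = X^\T\diag(w)y - X^\T D X\beta_w$ from $A'\beta_{w'} = X^\T\diag(w')y$ and simplifying yields $\beta_{w'} - \beta_w = (A')^{-1} X^\T D r$, where $r = X\beta_w - y$ is the residual vector. Expanding coordinate $i$ gives $\abs[\big]{x_i^\T(\beta_w - \beta_{w'})} = \abs[\big]{\sum_j (w_j - w_j') r_j \, x_i^\T (A')^{-1} x_j}$. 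Now the $(\leverage,\residual)$-goodness of $w$ gives $\abs{r_j}\le\residual$ for $j \in \supp{w}$, and Cauchy--Schwarz in the $(A')^{-1}$ inner product together with the leverage bound just proved gives $\abs{x_i^\T (A')^{-1} x_j} \le \sqrt{x_i^\T(A')^{-1}x_i}\,\sqrt{x_j^\T(A')^{-1}x_j}\le (1+2\rho)\leverage$ for all $i,j\in\supp{w}$. Summing over $j$ (again only indices in $\supp{w}$ contribute, by the support hypothesis) bounds the left side by $(1+2\rho)\leverage\residual\norm{w-w'}_1 \le 2\leverage\residual\norm{w-w'}_1$, since $\rho\le\tfrac12$; this is \eqref{eq:removal_prediction}.

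Finally, the consequence follows by bookkeeping with $\eta = 1 + 2\leverage\norm{w-w'}_1 = 1 + 2\rho$. For $i\in\supp{w'}\subseteq\supp{w}$ the leverage estimate already reads $x_i^\T(A')^{-1}x_i \le \eta\leverage$, and the triangle inequality gives $\abs{x_i^\T\beta_{w'} - y_i}\le \abs{x_i^\T\beta_w - y_i} + \abs{x_i^\T\beta_w - x_i^\T\beta_{w'}}\le \residual + 2\rho\residual = \eta\residual$, so $w'$ is $(\eta\leverage,\eta\residual)$-good. I expect the main obstacle to be the spectral step: one must phrase the perturbation in the whitened basis so the leverage hypothesis controls $\norm{M}_2$ uniformly, and carefully use the support condition to ensure every contributing index inherits both the leverage and residual bounds of $w$. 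The generality of allowing weights to increase as well as decrease makes $D$ sign-indefinite, which is exactly what makes the clean Gram-matrix factorization the right tool rather than a one-point Sherman--Morrison removal recursion.
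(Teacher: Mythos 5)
Your proposal is correct, and it reaches the same constants by a genuinely different technical route. The paper handles the sign-indefinite perturbation $X^\T \diag(w-w')X$ by splitting $\diag(w'-w)=PN$ into square-root factors, invoking the Woodbury identity, and bounding the middle matrix $\Delta^{-1}=(I+NXC^{-1}X^\T P)^{-1}$ via a Frobenius-norm estimate; the residual bound then comes from a fairly lengthy algebraic simplification of the Woodbury expansion down to $\beta_{w'}=\beta_w+C^{-1}X^\T P\Delta^{-1}N(y-X\beta_w)$. You avoid Woodbury entirely: the factorization $A'=A^{1/2}(\bbI-M)A^{1/2}$ with $\norm{M}_2\le\sum_i\abs{w_i-w_i'}\norm{\tilde x_i}^2\le\leverage\norm{w-w'}_1$ gives the leverage bound in two lines (and makes the invertibility of $A'$ explicit, which the paper leaves implicit), and subtracting the two normal equations gives the exact identity $\beta_{w'}-\beta_w=(A')^{-1}X^\T D r$ without any matrix-inversion lemma. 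Your residual bound then uses cross-leverages under $w'$ (via Cauchy--Schwarz in the $(A')^{-1}$ inner product and the already-proved bound $(1+2\rho)\leverage$), whereas the paper's Cauchy--Schwarz is applied to the vectors $PXC^{-1}x_i$ and $N(y-X\beta_w)$ measured under the \emph{original} Gram matrix; both yield $2\norm{w-w'}_1\leverage\residual$, yours via $(1+2\rho)\le 2$ and theirs via $\norm{\Delta^{-1}}_2\le 2$. Your version is shorter and arguably more transparent; the paper's version has the side benefit of producing the explicit DFBETA-style update formula $\beta_{w'}=\beta_w+C^{-1}X^\T P\Delta^{-1}N(y-X\beta_w)$, which the running-time analysis in \cref{sec:running_time} reuses for rank-one updates. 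All steps in your argument check out, including the use of the support hypothesis to ensure every index with $w_j\neq w_j'$ inherits the leverage and residual bounds of $w$.
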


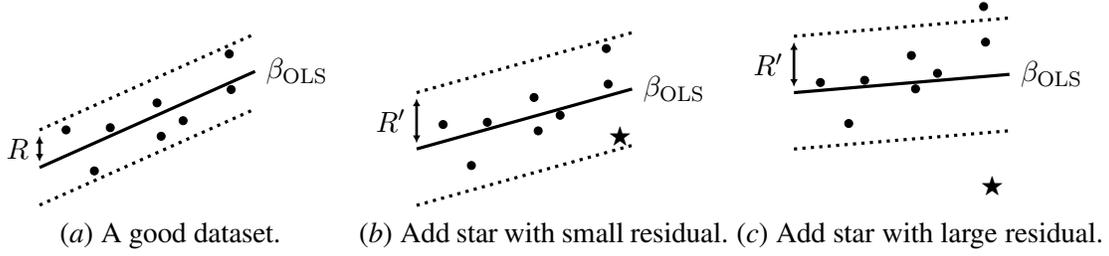
\begin{figure}
    \tikzset{>={Latex[width=3pt,length=3pt]}}
    \centering
    \subfigure[A good dataset.][b]{
        \begin{tikzpicture}[x=0.25cm,y=0.25cm]
        \draw [line width=1.2pt,dotted] (-5.7,0.8529905664600541)-- (5.76,5.97009166171396);
        \draw [line width=1.2pt] (5.76,3.970091661713961) node[right] {\(\beta_{\mathrm{OLS}}\)} -- (-5.7,-1.1470094335399454);
        \draw [line width=1.2pt,dotted] (-5.7,-3.1470094335399446)-- (5.76,1.9700916617139617);
        \draw [line width=1pt,<->] (-5.7,-1.1470094335399454 + 0.3)-- node[left] {\(R\)} (-5.7,0.8529905664600541-0.3);
        \draw [fill=black] (4.486088794926014,3.0042283298097217) circle (1.5pt);
        \draw [fill=black] (4.38460887949261,4.898520084566592) circle (1.5pt);
        \draw [fill=black] (-4.291923890063406,0.8562367864693408) circle (1.5pt);
        \draw [fill=black] (1.9321775898520206,1.3467230443974592) circle (1.5pt);
        \draw [fill=black] (0.5452854122621696,2.293868921775895) circle (1.5pt);
        \draw [fill=black] (-1.9409725158562197,0.9746300211416452) circle (1.5pt);
        \draw [fill=black] (0.7651585623678777,0.5179704016913281) circle (1.5pt);
        \draw [fill=black] (-2.7866384778012527,-1.3255813953488411) circle (1.5pt);
        \end{tikzpicture}
        \label{fig:add_wt_base}
    } %
    \subfigure[
        Add star with small residual.
    ][b]{
        \begin{tikzpicture}[x=0.25cm,y=0.25cm]
        \draw [line width=1.2pt,dotted] (-5.7,2.553412896902111)-- (5.76,5.745504091665829);
        \draw [line width=1.2pt] (5.76,2.745504091665829) node[right] {\(\beta_{\mathrm{OLS}}\)} -- (-5.7,-0.4465871030978887);
        \draw [line width=1.2pt,dotted] (-5.7,-3.446587103097889)-- (5.76,-0.25449590833417085);
        \draw [line width=1pt,<->] (-5.7,-0.4465871030978887 + 0.3)-- node[left] {\(R'\)} (-5.7,2.553412896902111 - 0.3);
        \draw [fill=black] (4.486088794926014,3.0042283298097217) circle (1.5pt);
        \draw [fill=black] (4.38460887949261,4.898520084566592) circle (1.5pt);
        \draw [fill=black] (-4.291923890063406,0.8562367864693408) circle (1.5pt);
        \draw [fill=black] (1.9321775898520206,1.3467230443974592) circle (1.5pt);
        \draw [fill=black] (0.5452854122621696,2.293868921775895) circle (1.5pt);
        \draw [fill=black] (-1.9409725158562197,0.9746300211416452) circle (1.5pt);
        \draw [fill=black] (0.7651585623678777,0.5179704016913281) circle (1.5pt);
        \draw [fill=black] (-2.7866384778012527,-1.3255813953488411) circle (1.5pt);
        \node at (5.12879492600424,0.2) {\footnotesize\(\bigstar\)};
        \end{tikzpicture}
        \label{fig:add_wt_good}
    } %
    \subfigure[
        Add star with large residual.
    ][b]{
        \begin{tikzpicture}[x=0.25cm,y=0.25cm]
        \draw [line width=1.2pt,dotted] (-5.7,3.3158948285089327)-- (5.76,4.289459181879822);
        \draw [line width=1.2pt] (5.76,1.2894591818798218)  node[right] {\(\beta_{\mathrm{OLS}}\)} -- (-5.7,0.31589482850893275);
        \draw [line width=1.2pt,dotted] (-5.7,-2.6841051714910673)-- (5.76,-1.7105408181201782);
        \draw [line width=1pt,<->] (-5.7,0.31589482850893275 + 0.3)-- node[left] {\(R'\)} (-5.7,3.3158948285089327 - 0.3);
        \draw [fill=black] (4.486088794926014,3.0042283298097217) circle (1.5pt);
        \draw [fill=black] (4.38460887949261,4.898520084566592) circle (1.5pt);
        \draw [fill=black] (-4.291923890063406,0.8562367864693408) circle (1.5pt);
        \draw [fill=black] (1.9321775898520206,1.3467230443974592) circle (1.5pt);
        \draw [fill=black] (0.5452854122621696,2.293868921775895) circle (1.5pt);
        \draw [fill=black] (-1.9409725158562197,0.9746300211416452) circle (1.5pt);
        \draw [fill=black] (0.7651585623678777,0.5179704016913281) circle (1.5pt);
        \draw [fill=black] (-2.7866384778012527,-1.3255813953488411) circle (1.5pt);
        \node at (4.841268498942929,-4.691331923890071) {\footnotesize\(\bigstar\)};
        \end{tikzpicture}
        \label{fig:add_wt_bad}
    } 
    \caption{
        We illustrate our analysis of adding a new observation (star) to an $(\leverage,\residual)$-good dataset (\emph{a}).
        In (\emph{b}), the added star is close to the original regression line.
        The new largest residual may be greater than $\residual$ but is less than $\residual'$. 
        In (\emph{c}), we instead add a significant outlier. 
        Multiple points may have residuals larger than $\residual'$, but the largest belongs to the star.
        In this case, residual thresholding discards the star and recovers the original dataset.
    } 
    \label{fig:add_wt}
\end{figure}

We next present a claim about adding a point to existing good weights: either the 
expanded weights are good or the new point has a large residual (in which case our greedy algorithm, presented later, will identify it).
We illustrate these cases in \cref{fig:add_wt}.
Such a claim also holds when we add sets of points.

Mathematically, this proof contains little innovation beyond \cref{claim:final:goodness_after_removal}.
However, it provides a key conceptual bridge.
We see that it connects directly to our greedy algorithm, which removes large residuals.
\begin{claim}[Adding Weight Outside Support]\label{claim:add_weight_outside_support}
    Let $w'\in[0,1]^n$ be an $(\leverage,\residual)$-good vector for a dataset $(X,y)$ and 
    let $v\in[0,1]^n$ be a vector such that $\supp{w'}\cap\supp{v}=\emptyset$.
    Define $w=w'+v$ and $\eta = 1 + 8\norm{v}_1\leverage$. Assume the following two conditions hold:

    \begin{enumerate}
        \item The matrix $X^\T \diag(w) X$ is invertible and for all $j \in \supp{w}$, $$ x_j^\T (X^\T \diag(w) X)^{-1}x_j\le 2\leverage.$$
        \item The weights $v$ satisfy $\norm{v}_1 \cdot \leverage\le \frac 1 8$  
    \end{enumerate}
    If $\max_{i \in \supp{w}} \abs[\big]{y_i - x_i^\T \beta_{w}} > \eta \residual$, then $\argmax_{i \in \supp{w}} \abs[\big]{y_i - x_i^\T \beta_{w}} \subseteq \supp{v}$.
\end{claim}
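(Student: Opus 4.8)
The plan is to rule out the possibility that the largest residual of $\beta_w$ over $\supp{w}$ is attained at a point of $\supp{w'}$. Write $i^* \in \argmax_{i\in\supp{w}}\abs{y_i - x_i^\T\beta_w}$ and let $M = \abs{y_{i^*}-x_{i^*}^\T\beta_w}$ denote this maximal residual. Since $\supp{w}=\supp{w'}\cup\supp{v}$ with the two supports disjoint, it suffices to show that if $i^*\in\supp{w'}$ then $M\le\eta\residual$; this contradicts the hypothesis $M>\eta\residual$ and forces every maximizer into $\supp{v}$. So I would fix such an $i^*\in\supp{w'}$ and work to bound $M$ from above.

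The key observation, and the one place where a little cleverness is needed, is that the very quantity we wish to bound can be promoted to a goodness parameter for $w$ itself. Indeed, by the assumed leverage bound every $i\in\supp{w}$ has leverage at most $2\leverage$ under $w$, and by the definition of $M$ every such point has residual at most $M$; hence $w$ is $(2\leverage, M)$-good for $(X,y)$. This lets me invoke \cref{claim:final:goodness_after_removal} in the direction of \emph{removing} the weight $v$, taking its leverage and residual parameters to be $2\leverage$ and $M$ respectively. Its hypotheses hold: $\supp{w'}\subseteq\supp{w}$, and $\norm{w-w'}_1\cdot 2\leverage = 2\norm{v}_1\leverage\le \tfrac14\le\tfrac12$ by the assumed bound $\norm{v}_1\leverage\le\tfrac18$.

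The prediction bound of \cref{claim:final:goodness_after_removal} then gives, for our $i^*$, $\abs{x_{i^*}^\T\beta_w - x_{i^*}^\T\beta_{w'}}\le 2\norm{v}_1\,(2\leverage)\,M = 4\norm{v}_1\leverage\,M$. Combining this with the triangle inequality and the $(\leverage,\residual)$-goodness of $w'$, which bounds $\abs{y_{i^*}-x_{i^*}^\T\beta_{w'}}\le\residual$ since $i^*\in\supp{w'}$, yields the self-referential inequality $M\le\residual + 4\norm{v}_1\leverage\,M$. Since $4\norm{v}_1\leverage\le\tfrac12<1$, I can solve for $M$ to obtain $M\le\residual/(1-4\norm{v}_1\leverage)$. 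A short scalar calculation shows $1/(1-4t)\le 1+8t$ precisely when $t\le\tfrac18$, so setting $t=\norm{v}_1\leverage\le\tfrac18$ gives $M\le(1+8\norm{v}_1\leverage)\residual=\eta\residual$, the desired contradiction; hence $i^*\in\supp{v}$, and as this applies to every maximizer the argmax lies in $\supp{v}$.

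The main obstacle is conceptual rather than computational: one must resist the temptation to bound $\abs{x_{i^*}^\T(\beta_w-\beta_{w'})}$ in terms of the residuals of the \emph{added} points under $\beta_{w'}$, since those residuals may be arbitrarily large (that is exactly the outlier case the claim is designed to catch). The trick is to route everything through the maximal residual $M$ by recognizing $w$ as $(2\leverage,M)$-good, which converts the unbounded added residuals into the controlled parameter $M$ and lets the argmax property close the loop. Once this is in place, the argument is a direct appeal to \cref{claim:final:goodness_after_removal} followed by solving a scalar inequality, consistent with the remark that little new machinery is needed beyond that claim.
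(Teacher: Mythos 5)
Your proof is correct and follows essentially the same route as the paper's: both arguments hinge on the observation that $w$ is $(2\leverage, M)$-good where $M$ is the maximal residual, then invoke \cref{claim:final:goodness_after_removal} to bound $\abs{x_{i^*}^\T(\beta_w-\beta_{w'})}$ by $4\norm{v}_1\leverage M$ and close the self-referential inequality $M\le\residual+4\norm{v}_1\leverage M$. The only cosmetic difference is that the paper phrases this as a lower bound on the residual under $w'$ via the reverse triangle inequality, whereas you phrase it as an upper bound on $M$; the algebra is identical.
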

\begin{proof}
    We prove the contrapositive: if there exists $j^* \in \argmax_{i\in \supp{w}} \abs{y_i - x_i^\T \beta_{w}}$ with $j^*\notin \supp{v}$, then for all $i  \in \supp{w}$, $\abs{y_i - x_i^\T \beta_{w}}\le \eta \residual$.

    Note that since $\supp{w} = \supp{w'} \cup \supp{v}$ and $\supp{w'}\cap\supp{v}=\emptyset$, $j^*\notin \supp{v}$ implies $j^*\in\supp{w'}$. 
    We first produce a \emph{lower bound} on the $j^*$ residual under $w'$.
    By the triangle inequality, 
    \begin{align}
        \abs[\big]{e_{j^*}'} = \abs[\big]{y_{j^*} - x_{j^*}^\T \beta_{w'}}
            &= \abs[\big]{y_{j^*} - x_{j^*}^\T \beta_w + x_{j^*}^\T \beta - x_{j^*}^\T \beta_{w'}} \\
            &\ge \abs[\big]{y_{j^*} - x_{j^*}^\T \beta_w} - \abs[\big]{x_{j^*}^\T \beta - x_{j^*}^\T \beta_{w'}} \\
            &= \abs[\big]{e_{j^*}} - \abs[\big]{x_{j^*}^\T \beta_w - x_{j^*}^\T \beta_{w'}}. \label{eq:residuals_reverse_triangle}
    \end{align}
    Note that by assumption, $w$ is $(2L, |e_{j^*}|)$-good for $(X,y)$. Since $\supp{w'}\subseteq \supp{w}$, and $\norm{w-w'}_1=\norm{v}_1 \le \frac{1}{8\leverage}$, we can apply 
    \cref{claim:final:goodness_after_removal} to get that $\abs{x_{j^*}^\T \beta_w - x_{j^*}^\T \beta_{w'}} \leq 2 \norm{w-w'}_1 (2L) |e_{j^*}|$. Using this upper bound, we get that:
    \begin{align}
            \abs[\big]{e_{j^*}'}  \geq \abs[\big]{e_{j^*}} -  4 \norm[\big]{w-w'}_1 \leverage \abs[\big]{e_{j^*}}
    \end{align}
        To complete the proof, we use the upper bound $\abs{e_j'}\le \residual$, which holds by the assumption that $j\in \supp{w'}$ (and that $w'$ is $(L,R)$-good). Rearranging our previous inequality, we get that for all $i \in \supp{w}$, 
        \begin{align}
         \abs{y_i - x_i^\T \beta_{w}}\le   \abs{e_{j^*}} \le \frac{\abs{e_{j^*}'}}{1-4\leverage\norm{w-w'}_1}
                \le \paren{1 + 8 \leverage\norm{w-w'}_1} \cdot \residual, 
        \end{align}
    where we have used the inequality $(1-z)^{-1} \leq 1 + 2z$, which holds for all $z \in (0,1/2]$. 
\end{proof}

\subsection{Guarantees for Leverage Filtering}

\resthresh receives as input a vector $w$, the ``starting weights,'' and iteratively zeros out any weights corresponding to residual outliers, 
recomputing the weighted OLS solution as it goes.
These starting weights $w$ will come from the \stablecovariance subroutine of BHS, which filters out high-leverage outliers.
The exact algorithm we use differs superficially from the version in BHS, who use it for covariance estimation and call it ``Stable Covariance.''
Our application only needs its properties as a leverage-filtering procedure.
We give a complete description of our variant in  \cref{sec:statement_of_stablecov}.

The filtering algorithm has ``goodness'' guarantees (when the score is modest, many points receive weight and no point with weight has high leverage), utility guarantees (on outlier-free data, all points receive full weight), and stability guarantees on adjacent datasets (the score is low-sensitivity and the weights are stable).
We now give the formal statement.
    
\newcounter{stabcovcount}
\begin{theorem}[Guarantees for \stablecovariance, \cite{brown2023fast}]\label{thm:stable_cov_guarantees}
    There is a deterministic algorithm \stablecovariance receiving as input a list of vectors $X\in\bbR^{n\times d}$, a leverage threshold $\leverage$, and a discretization parameter $k\in\bbZ$ and returning as output an integer $\score$ and a vector $w\in [0, 1]^n$.
    Let $W = \diag(w)$.
    Assume $kL\le 1$.
    If $\score<k$ the following hold.
    \begin{enumerate}
        \item $\norm{w}_1\ge n-k$.
            As a consequence, $\abs{\supp{w}}\ge n - k$.
        \item For all $i\in \supp{w}$, we have $x_i^\T (X^\T W X)^{-1} x_i \le \leverage$.
        \setcounter{stabcovcount}{\value{enumi}}
    \end{enumerate}
    On ``outlier-free'' data as defined below, the algorithm's output is as follows.
    \begin{enumerate}
        \setcounter{enumi}{\value{stabcovcount}}
        \item If $x_i^\T (X^\T X)^{-1} x_i \le \leverage/2e^2$ for all $i\in[n]$ then $\score=0$ and $w=\vec 1$. 
        \setcounter{stabcovcount}{\value{enumi}}
    \end{enumerate}
    To present the stability guarantees, let $X$ and $X'$ be datasets that differ in one entry.
    For any values of $k$ and $\leverage$, consider
    \begin{align}
        \score, w &\gets \stablecovariance(X,\leverage,k) \\
        \score', w' &\gets \stablecovariance(X',\leverage,k). 
    \end{align}
    We have the following sensitivity bounds. 
    \begin{enumerate}
        \setcounter{enumi}{\value{stabcovcount}}
        \item $\abs{\score -\score'}\le 2$.
        \item If $\score,\score'<k$ then $\norm{w-w'}_1\le 2$.
    \end{enumerate}
\end{theorem}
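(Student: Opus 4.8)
The plan is to follow the analysis of \cite{brown2023fast}, since \stablecovariance is --- up to the superficial modifications noted in \cref{sec:statement_of_stablecov} --- their ``Stable Covariance'' routine viewed purely as a leverage filter. At a high level, the algorithm begins with the all-ones weights and iteratively down-weights points whose weighted leverage $x_i^\T (X^\T W X)^{-1} x_i$ exceeds $\leverage$, along a smoothed schedule whose granularity is controlled by the discretization parameter $k$; the integer $\score$ records (a ceiling of) the total amount of weight removed over the course of the filtering. The proof splits into three essentially independent pieces: goodness (Items~1--2), utility on outlier-free data (Item~3), and the two sensitivity bounds (Items~4--5).

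For goodness, I would argue directly from the termination condition. Property~2 is immediate: the filter only halts once no supported point has weighted leverage exceeding $\leverage$, so the terminal weights satisfy $x_i^\T (X^\T W X)^{-1} x_i \le \leverage$ for every $i \in \supp{w}$. Property~1 then follows from the definition of $\score$ as the total removed mass: if $\score < k$ then strictly fewer than $k$ units of weight have been zeroed, so $\|w\|_1 \ge n-k$, and since each coordinate lies in $[0,1]$ this also gives $|\supp{w}| \ge n - k$. The hypothesis $k\leverage \le 1$ is what keeps the weighted covariance well-conditioned throughout, so the leverage scores defining the filter are well-posed at every step.

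Utility (Item~3) is a short argument exploiting the slack factor $2e^2$. When every \emph{unweighted} leverage $x_i^\T (X^\T X)^{-1} x_i$ is at most $\leverage/2e^2$, the initial configuration $w = \vec 1$ already has all leverages comfortably below the threshold $\leverage$. I would show the smoothed filter never activates: the $2e^2$ gap is exactly the margin needed to absorb the bounded multiplicative inflation of the leverage scores that the smoothing step could in principle induce, so the algorithm terminates in its first iteration with $\score = 0$ and $w = \vec 1$.

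The two sensitivity bounds (Items~4--5) are the crux, and I expect them to be the main obstacle. The strategy is to couple the executions on the adjacent inputs $X$ and $X'$, which differ in a single row. Using a Sherman--Morrison / rank-one-update analysis of how $(X^\T W X)^{-1}$, and hence each leverage score, responds to changing one point --- and crucially invoking $k\leverage \le 1$ to control the resulting multiplicative distortions --- one shows that the two filtering trajectories can be kept in lockstep except for the differing coordinate and the perturbation it propagates. This yields $|\score - \score'| \le 2$ (Item~4), with the ``$2$'' accounting for the one changed point together with the at-most-one additional unit of filtering its perturbation can trigger, and, when both scores are below $k$, the matching $\ell_1$ stability $\|w - w'\|_1 \le 2$ (Item~5). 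The delicate part is verifying that the smoothing makes the \emph{entire} iterative trajectory, not merely its endpoint, a Lipschitz function of the data --- precisely the property that \cite{brown2023fast} engineer their score and reweighting schedule to guarantee.
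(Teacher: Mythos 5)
First, a framing point: the paper does not prove \cref{thm:stable_cov_guarantees} at all --- it is imported verbatim from \cite{brown2023fast}, with only the algorithm restated in \cref{sec:statement_of_stablecov}. So the relevant comparison is against the actual mechanism of that proof, which the present paper replicates step-for-step for the residual case in \cref{claim:intertwining,claim:score_low_sensitivity,claim:weights_stable}. Measured against that, your sketch misdescribes the algorithm and, more importantly, the sensitivity argument you propose would not go through. \cref{alg:stablecovariance} is not a single ``smoothed'' filtering run whose score is the total weight removed: it runs the greedy leverage filter to completion at $2k+1$ geometrically spaced thresholds $\leverage_j = e^{j/k}\leverage_0$, defines $\score = \min\set{k, \min_{0\le j\le k}\set{n-\abs{A_j}+j}}$ (note the $+j$ offset), and defines $w$ as the \emph{average} of the indicator vectors of the sets $A_{k+1},\ldots,A_{2k}$. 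Your Item~1 argument (``$\score<k$ means fewer than $k$ units were zeroed'') and your Item~2 argument (``immediate from the termination condition'') are both reasoning about a different object; e.g., the leverage bound for the averaged $w$ requires relating $X^\T W X$ to the per-level covariances $S_{A_j}$, which is exactly where the $2e^2$ slack in Item~3 gets spent.

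The genuine gap is in Items~4--5. You propose to couple the two executions ``in lockstep except for the differing coordinate and the perturbation it propagates,'' with the $2$ accounting for ``at most one additional unit of filtering.'' At a \emph{fixed} threshold this is false: changing one row can perturb every leverage score, push a second point over the threshold, whose removal pushes a third over, and so on --- the greedy filter can cascade, and $\abs{\,\abs{A_j}-\abs{A_j'}\,}$ admits no constant bound at the same level $j$. This cascade problem is precisely why the algorithm uses a ladder of thresholds at all. The actual argument is an intertwining statement (the analogue of \cref{claim:intertwining}): the set retained on $X$ at level $j$ is contained, up to the differing index $i^*$, in the set retained on $X'$ at level $j+1$, because the residual perturbation caused by at most $O(k)$ differing points inflates leverages by at most the factor $e^{1/k}$ separating consecutive thresholds (this is where $k\leverage\le 1$ enters). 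The $+j$ offset in the score then converts this one-level shift into $\abs{\score-\score'}\le 2$, and the averaging over $k$ levels converts it into $\norm{w-w'}_1\le 2$, since at most $O(k)$ indices have count $c_i = \sum_{j>k}\indicator{i\in A_j}$ different from $k$ and each such count moves by at most $1$, contributing $O(k)\cdot\frac{1}{k}$ in $\ell_1$. None of this machinery --- the level shift, the offset, the count-averaging --- appears in your sketch, and without it the claimed Lipschitz property of the ``trajectory'' is not just unproven but untrue.
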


\subsection{Properties of \resthresh}

The first claim we prove says that, when we run \stablecovariance followed immediately by \resthresh, the returned weights are good.
\begin{claim}\label{lemma:resthresh_returns_good}
    Let $(X,y)$ be a dataset, $k\in \bbN$ be a discretization parameter, and 
    $\leverage,\residual>0$ be outlier thresholds. 
    Assume $k\leverage\le 1/2$.
    Consider the outputs of the following calls, where the latter uses the output of the former:
    \begin{align}
        \score, w &\gets \stablecovariance(X,\leverage, k) \;, \\
        u &\gets \resthresh(X,y, \residual, w).
    \end{align}
    If $\score < k$ and $\norm{u}_1\ge n - k$ 
    then $u$ is $(2\leverage, \residual)$-good for $X,y$.
\end{claim}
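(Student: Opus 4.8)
The plan is to verify the two defining conditions of $(2\leverage,\residual)$-goodness for the output $u$ separately: the leverage bound will come from the stability machinery of \cref{claim:final:goodness_after_removal}, while the residual bound will fall out directly from the stopping rule of \resthresh. The first structural fact to record is that \resthresh only ever zeroes coordinates of its input. Each iteration either sets some $w_{i^*}\gets 0$ or halts, and it never alters a surviving nonzero weight, so $\supp{u}\subseteq\supp{w}$ and $u_i=w_i$ for every $i\in\supp{u}$; equivalently $u=\vsmask{w}{\supp{u}}$. This makes $u$ exactly the kind of weight-reduced vector to which \cref{claim:final:goodness_after_removal} applies.

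Next I would quantify how much mass \resthresh removes. Since $w$ and $u$ agree on $\supp{u}$ and $u$ vanishes off it, $\norm{w-u}_1=\norm{w}_1-\norm{u}_1$. By \cref{thm:stable_cov_guarantees} (valid since $\score<k$ and $k\leverage\le\tfrac12\le 1$), the \stablecovariance output lies in $[0,1]^n$, so $\norm{w}_1\le n$, and moreover $x_i^\T(X^\T\diag(w)X)^{-1}x_i\le\leverage$ on $\supp{w}$, i.e.\ $w$ is $(\leverage,\infty)$-good. Combining $\norm{w}_1\le n$ with the hypothesis $\norm{u}_1\ge n-k$ yields $\norm{w-u}_1\le k$, hence $\leverage\norm{w-u}_1\le k\leverage\le\tfrac12$. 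I can then invoke the first (leverage) conclusion of \cref{claim:final:goodness_after_removal} with $w'=u$: for all $i\in\supp{u}\subseteq\supp{w}$,
\[
    x_i^\T\del{X^\T\diag(u)X}^{-1}x_i\le\del{1+2\leverage\norm{w-u}_1}\leverage\le\del{1+2k\leverage}\leverage\le 2\leverage,
\]
which is precisely the leverage condition for $(2\leverage,\residual)$-goodness and simultaneously certifies that $X^\T\diag(u)X$ is invertible. Notably, only the $(\leverage,\infty)$ hypothesis is used here; the second, residual-dependent conclusion of \cref{claim:final:goodness_after_removal} is not invoked, which is fortunate since the starting weights $w$ are guaranteed only bounded leverage, not bounded residuals.

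For the residual condition I would simply read off the halting behaviour of \resthresh. The algorithm returns $u$ only when the test $\abs{y_{i^*}-x_{i^*}^\T\beta_u}\le\residual$ passes at the index $i^*$ maximizing the residual over $\supp{u}$; since $i^*$ is the maximizer and $\beta_u=\weightedOLS(X,y,u)=\del{X^\T\diag(u)X}^{-1}X^\T\diag(u)y$ coincides with the $\beta_w$ of \cref{def:goodness}, this gives $\abs{y_i-x_i^\T\beta_u}\le\residual$ for every $i\in\supp{u}$. Together with the leverage bound above, $u$ is $(2\leverage,\residual)$-good.

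I expect no genuinely hard step: the proof is short because the heavy lifting is outsourced to \cref{claim:final:goodness_after_removal} and \cref{thm:stable_cov_guarantees}. The one place demanding care is the bookkeeping that produces $\norm{w-u}_1\le k$ and the check that the hypothesis $\norm{w-u}_1\leverage\le\tfrac12$ of \cref{claim:final:goodness_after_removal} follows from the standing assumption $k\leverage\le\tfrac12$. The conceptual point worth flagging is the asymmetry between the two conditions: the leverage bound is \emph{preserved} (up to the factor $2$) through weight reduction via the stability claim, whereas the residual bound is \emph{enforced outright} by the thresholding itself.
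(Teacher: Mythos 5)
Your proposal is correct and follows essentially the same route as the paper's proof: bound $\norm{w-u}_1\le k$ from the mass hypothesis, apply the leverage part of \cref{claim:final:goodness_after_removal} (using only the $(\leverage,\infty)$-goodness of $w$ from \cref{thm:stable_cov_guarantees}) to get the factor $1+2k\leverage\le 2$, and read the residual bound off the stopping rule of \resthresh. Your explicit remarks on why only the first conclusion of \cref{claim:final:goodness_after_removal} is needed, and on the asymmetry between the two goodness conditions, are accurate elaborations of what the paper leaves implicit.
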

\begin{proof}
    By the guarantees of \stablecovariance, \cref{thm:stable_cov_guarantees}, when $\score<k$ the weights $w$ give us a bound of $\leverage$ on leverage. That is, for all $i \in \supp{w}$, 
    \begin{align}
        x_i^\T(X^\T W X)^{-1} x_i \leq L. 
    \end{align}
Furthermore, since \resthresh only alters $w$ by setting some entries to 0, we have that $\norm{u}_1 = \norm{w}_1 - \norm{w-u}_1$. Using the assumption that $\norm{u}_1 \geq n-k$ and the trivial bound $\norm{w} \leq n$, we get that $\norm{w-u}_1\le k$.
    Thus, setting $U  =\diag(u)$, by the first part of †\cref{claim:final:goodness_after_removal}, we have for all $i\in \supp{u}$ that
    \begin{align}
         x_i^\T (X^\T U X)^{-1} x_i &\le \del[\big]{1+ 2\leverage\norm{w-u}_1}\cdot\leverage 
        \le 2\leverage,
    \end{align}
    where we used the assumption that $\leverage k \le 1/2$.
    Since \resthresh only returns a vector when the largest absolute residual is no greater than $\residual$, we are done.
\end{proof}

Our next claim relates the runs of residual thresholding on adjacent datasets at nearby residual thresholds.
This is the main result about our thresholding procedure.

\begin{claim}[Intertwining]\label{claim:intertwining}
    Let $(X,y)$ and $(X',y')$ be adjacent datasets that differ on index $i^*$.
    Let $k\in \bbN$ be a discretization parameter.
    Let $\leverage,\residual,$ and $\residual' >0$ be any  outlier thresholds such that 
    $k\leverage\le \frac{1}{96}$ and $\residual'\ge \exp\del{108kL}\residual$.
    Consider the outputs of the following calls:
    \begin{align}
        w, \score &\gets \stablecovariance(X,\leverage, k) \\
        w', \score' &\gets \stablecovariance(X',\leverage, k),
    \end{align}
    which we feed into:
    \begin{align}
        u &\gets \resthresh(X,y, \residual, w) \\
        u' &\gets \resthresh(X',y', \residual', w').
    \end{align}
    Define \(I = \supp{u}\cap\supp{w'} \setminus \set{i^*}\).
    If $\score,\score' < k$ and \(\norm{u}_1 \ge n-k\) then 
    $I\subseteq \supp{u'}$.
\end{claim}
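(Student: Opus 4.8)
The plan is to track the execution of \resthresh on $(X',y')$ and argue, by induction on its iterations, that no index of $I$ is ever zeroed out; since \resthresh only ever sets weights to zero (leaving surviving coordinates equal to their $w'$-values), this immediately gives $I\subseteq\supp{u'}$. Throughout the run let $v$ denote the current weight vector, initialized at $w'$, and decompose it as $v=\vsmask{v}{I}+\vsmask{v}{\overline{I}}$. The heart of the argument is to show that the ``good part'' $\vsmask{v}{I}$ never changes, i.e.\ remains equal to the frozen vector $g\defeq\vsmask{w'}{I}$, and that the ``rest'' $\vsmask{v}{\overline{I}}$ always carries only a small amount of $\ell_1$ mass. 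Granting this, at each step I would invoke \cref{claim:add_weight_outside_support} with good vector $g$, disjoint extra vector $\vsmask{v}{\overline{I}}$, and sum $v$, to conclude that whenever the largest residual exceeds the threshold, the maximizer --- the index \resthresh is about to remove --- lies in $\supp{\vsmask{v}{\overline{I}}}\subseteq\overline{I}$, hence never in $I$.

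First I would establish that $g=\vsmask{w'}{I}$ is $(L^*,R^*)$-good for \emph{$(X',y')$}, with $L^*=(1+O(\leverage))\,2\leverage$ and $R^*=(1+O(\leverage))\,\residual$; this is the main structural step and proceeds through a chain of weight perturbations, all controlled by \cref{claim:final:goodness_after_removal}. By \cref{lemma:resthresh_returns_good}, $u$ is $(2\leverage,\residual)$-good for $(X,y)$. Restricting to $I$, I would show $\norm{\vsmask{u}{\overline{I}}}_1\le 3$, since $\supp{u}\setminus I$ is contained in $\set{i^*}$ together with indices outside $\supp{w'}$, whose $u$-mass is bounded via the covariance-stability estimate $\norm{w-w'}_1\le 2$ from \cref{thm:stable_cov_guarantees}; hence $\vsmask{u}{I}$ is good for $(X,y)$ with slightly inflated parameters. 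The crucial observation is that $I$ excludes $i^*$, so the data $(x_i,y_i)$ for $i\in I$ is identical across the two datasets; consequently the weighted Gram matrix, OLS solution, and residuals induced by any vector supported on $I$ agree for $(X,y)$ and $(X',y')$, so $\vsmask{u}{I}$ is equally good for $(X',y')$. Finally, passing from $\vsmask{u}{I}$ to $g$ changes weights only within the common support $I$ (note $u$ and $w$ coincide on $\supp{u}\supseteq I$), by $\ell_1$ amount $\norm{\vsmask{w}{I}-\vsmask{w'}{I}}_1\le\norm{w-w'}_1\le 2$, so one further application of \cref{claim:final:goodness_after_removal} certifies goodness of $g$ for $(X',y')$.

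Next I would bound the rest mass. Initially $\vsmask{v}{\overline{I}}=\vsmask{w'}{\supp{w'}\setminus I}$, and I would show $\norm{\vsmask{w'}{\supp{w'}\setminus I}}_1\le k+3$: an index of $\supp{w'}\setminus I$ is either $i^*$ or lies outside $\supp{u}$, and the $w'$-mass of the latter is at most $\norm{w-w'}_1+(\norm{w}_1-\norm{u}_1)\le 2+k$, using $u\le w$ and the hypothesis $\norm{u}_1\ge n-k$. Because \resthresh only removes mass, $\norm{\vsmask{v}{\overline{I}}}_1$ can only decrease, so this bound holds at every step. Combined with $k\leverage\le\tfrac{1}{96}$, this makes the hypotheses of \cref{claim:add_weight_outside_support} hold uniformly: the $\tfrac18$-type $\ell_1$ condition follows from $(k+3)L^*$ being small, and the $2L^*$ leverage condition on $\supp{v}$ follows from $v\le w'$, the leverage bound of \cref{thm:stable_cov_guarantees}, and one more application of \cref{claim:final:goodness_after_removal} (which also supplies invertibility of the weighted Gram matrix throughout the run).

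The inductive step then closes the argument. Assuming no index of $I$ has yet been removed, so $\vsmask{v}{I}=g$, \cref{claim:add_weight_outside_support} with inflation $\eta=1+8\norm{\vsmask{v}{\overline{I}}}_1 L^*\le 1+8(k+3)L^*$ shows that any index with residual exceeding $\eta R^*$ lies in $\supp{\vsmask{v}{\overline{I}}}\subseteq\overline{I}$. It therefore suffices to guarantee $\eta R^*\le\residual'$, for then every index \resthresh can remove (those of residual $>\residual'$) avoids $I$, the identity $\vsmask{v}{I}=g$ is preserved, and the induction continues until termination, yielding $I\subseteq\supp{u'}$. Collecting the three multiplicative inflations incurred above --- from the restriction $u\mapsto\vsmask{u}{I}$, the reweighting $\vsmask{u}{I}\mapsto g$, and the factor $\eta$ --- and bounding each $1+x$ by $e^x$, the total residual factor is at most $\exp(c\,k\leverage)$ for a constant $c<108$; the hypothesis $\residual'\ge\exp(108\,k\leverage)\residual$ was chosen precisely to dominate it. I expect the main obstacle to be exactly this bookkeeping: transferring goodness from $(X,y)$ to the frozen vector on $(X',y')$ while keeping every $\ell_1$ perturbation (controlled by $\norm{w-w'}_1\le 2$ and the at most $k$ mass lost to thresholding) small enough that the accumulated inflation stays within the $\exp(108\,k\leverage)$ budget.
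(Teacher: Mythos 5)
Your proposal is correct and follows essentially the same route as the paper's proof: establish that the frozen vector $\vsmask{w'}{I}$ is good (via goodness of $u$ from \cref{lemma:resthresh_returns_good}, an $\ell_1$-closeness bound of about $3$ from \cref{thm:stable_cov_guarantees}, \cref{claim:final:goodness_after_removal}, and the fact that $I$ excludes $i^*$ so goodness transfers to $(X',y')$), then run a loop invariant over \resthresh's iterations using \cref{claim:add_weight_outside_support} to show every removed index lies outside $I$, with the accumulated inflation fitting inside the $\exp(108k\leverage)$ budget. The only differences are cosmetic — you reach $\vsmask{w'}{I}$ in two perturbation steps rather than one and bound the residual mass by $k+3$ rather than $2k+1$ — and your constants check out.
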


As we will see in \cref{sec:stable_analysis}, where \stableOLS uses \resthresh to obtain stable weights, many indices of the weights are easily accounted for.
This includes \(i^*\), which can be handled as a special case, as well as \(\supp{w} \setminus \supp{w'}\) and \(\supp{w'} \setminus \supp{w}\) whose stability is established by \cref{thm:stable_cov_guarantees}.
Ignoring those cases for now, we wish to show that any point that is not filtered under $(X,y)$ will also not be filtered under $(X',y')$ provided that the threshold used to filter $(X',y')$ is sufficiently large.
We illustrate these cases in \cref{fig:intertwining}.
Now we are ready to state the proof of \cref{claim:intertwining}.

\definecolor{ibmblue}{HTML}{648FFF}
\definecolor{ibmmagenta}{HTML}{DC267F}
\definecolor{ibmorange}{HTML}{FE6100}
\definecolor{ibmyellow}{HTML}{FFB000}

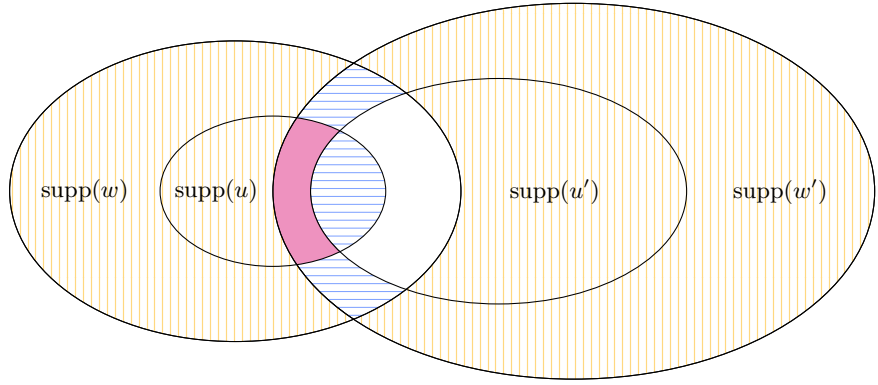
\begin{figure}
    \centering
    \begin{tikzpicture}[x=0.5cm,y=0.5cm]
        \newcommand{\CircleU}{(0,0) ellipse (3 and 2)}
        \newcommand{\CircleW}{(-1,0) ellipse (6 and 4)}
        \newcommand{\CircleUp}{(6,0) ellipse (5 and 3)}
        \newcommand{\CircleWp}{(8,0) ellipse (8 and 5)}

        \begin{scope}
            \clip \CircleW;
            \clip \CircleWp;
            \draw[pattern=horizontal lines, pattern color=ibmblue!80] \CircleWp;
        \end{scope}

        \begin{scope}
            \clip \CircleWp;
            \fill[ibmmagenta!50] \CircleU; 
        \end{scope}
        \fill[white] \CircleUp;
        
        \draw [pattern=vertical lines, even odd rule, pattern color=ibmyellow!50] \CircleW \CircleWp;
        \begin{scope}
            \clip \CircleU;
            \clip \CircleUp;
            \draw[pattern=horizontal lines, pattern color=ibmblue!80] \CircleUp;
        \end{scope}
        
        \draw \CircleU;
        \draw \CircleW;
        \draw \CircleUp;
        \draw \CircleWp;
        \node at (7.5, 0) {\footnotesize\(\supp{u'}\)};
        \node at (-1.5, 0) {\footnotesize\(\supp{u}\)};
        \node at (-5, 0) {\footnotesize\(\supp{w}\)};
        \node at (13.5, 0) {\footnotesize\(\supp{w'}\)};
    \end{tikzpicture}
    \caption{
    Graphical depiction of \cref{claim:intertwining}'s ``intertwining.'' 
    Here $w$ represents the weights on dataset $(X,y)$ after leverage filtering and $u$ the weights on $(X,y)$ after residual filtering. 
    $w'$ and $u'$ represent the analogous weights on an adjacent dataset $(X',y')$.
    By \cref{thm:stable_cov_guarantees}'s guarantees for leverage filtering, the yellow, vertically hatched regions represent a small amount of weight.
    The blue, horizontally hatched regions represent identical outcomes after residual filtering (either kept on both $(X,y)$ and $(X',y')$ or discarded on both).
    The claim's main consequence is that only {\em one} index can fall in the magenta, solid region, which receives weight under $u$ and $w'$ but not $u'$.
    This is $i^*$, the index that differs between $(X,y)$ and $(X',y')$.
    }
    \label{fig:intertwining}
\end{figure}

\medskip
\begin{proof}
    Our first goal will be to show that \(\vsmask{w'}{I}\) is sufficiently good.
    (Recall our notation: $\vsmask{w'}{I}\in [0,1]^n$ takes the value $w_i'$ for $i\in I$ and 0 elsewhere.)
    First, we see that \(u\) is \(\del{2\leverage, \residual}\)-good for \(\del{X, y}\) by noting that \(\score < k\), \(\norm{u}_1 \ge n-k\), and \(k\leverage \le 1/2\) and applying \cref{lemma:resthresh_returns_good}.
    Next, we show that \(\vsmask{w'}{I}\) is close to \(u\) in $\ell_1$ distance.
    In particular, by definition of \resthresh, if \(i \in \supp{u}\) then \(u_i = w_i\). Hence,
    \begin{align}
    \norm*{u - \vsmask{w'}{I}}_1 
    &= \sum_{i=1}^n \abs*{u_i - \del*{\vsmask{w'}{I}}_i}\\
    &= \sum_{i\in \supp{u}} \abs*{u_i - \del*{\vsmask{w'}{I}}_i} + \sum_{i\not\in \supp{u}} \abs*{u_i - \del*{\vsmask{w'}{I}}_i}
\end{align}
If $i \notin \supp{u}$, then $u_i = 0$ and hence $\del*{\vsmask{w'}{I}}_i = 0$ since $\vsmask{w'}{I}$ is by definition only nonzero outside the support of $u$. Hence, the second term in the last equation is 0. Moving on, by definition of the set $I$,
\begin{align}    
   \norm*{u - \vsmask{w'}{I}}_1  &= \sum_{i\in \supp{u}} \abs[\big]{w_i - \del*{\vsmask{w'}{I}}_i}\\ 
    &= \sum_{i\in \supp{u}} \abs[\big]{w_i - \del*{\vsmask{w'}{\supp{w'} \setminus \set{i^*}}}_i}\\
    &\le \abs[\big]{w_{i^*} - \del*{\vsmask{w'}{\supp{w'} \setminus \set{i^*}}}_{i^*}} + \sum_{i\in \supp{u}} \abs[\big]{w_i - \del*{\vsmask{w'}{\supp{w'}}}_{i}}\\
    & = \abs{w_{i^*} - 0} + \sum_{i\in \supp{u}} \abs{w_i - w'_i}\\
    &\le \abs{w_{i^*}} + \norm{w - w'}_1\\
    &\le 3\;.
    \end{align}
The the last inequality follows from the last part of \cref{thm:stable_cov_guarantees}. 

    Since \(I \subseteq \supp{u}\), and \(L \le kL\le 1/12\) by assumption, it holds that $ \norm*{u - \vsmask{w'}{I}}_1 \leq 3 \leq 1/(2L)$, and we can apply \cref{claim:final:goodness_after_removal} to show that \(\vsmask{w'}{I}\) is \(\del{2\eta_1 L, \eta_1 \residual}\)-good for \(\del{X, y}\) where \(\eta_1 = 1 + 12L\).
    Furthermore, \(\vsmask{w'}{I}\) is \(\del{2\eta_1 L, \eta_1 \residual}\)-good for \(\del{X', y'}\) because \(\del{\vsmask{w'}{I}}_{i^*} = 0\).

    Now, we will show that during the execution of \(\resthresh(X',y', \residual', w')\) we will never discard any \(i \in I\).
    Let \(w'^{\del{j}}\) denote the weights obtained in the \(j\)\textsuperscript{th} iteration of the while-loop out of \(m\) total iterations, such that \(w'^{\del{0}} = w'\) and \(w'^{\del{m}} = u'\).
    We proceed to show the loop invariant \(\vsmask{w'^{\del{j}}}{I} = \vsmask{w'}{I}\) for \(j \in \set{0, \ldots, m}\).
    Since \(w' = w'^{\del{0}}\), the invariant holds initially.
    
    In each iteration, by the loop invariant we can decompose \(w'^{\del{j-1}}\) as
    \[w'^{\del{j-1}} = \vsmask{w'^{\del{j-1}}}{I} + \vsmask{w'^{\del{j-1}}}{\setcompl{I}} = \vsmask{w'}{I} + \vsmask{w'^{\del{j-1}}}{\setcompl{I}}.\]
    Now, we note that we have \(\abs{\supp{u}} \ge \norm{u}_1 \ge n-k\) by assumption and since \(\score' < k\), we also have \(\abs{\supp{w'}} \ge n - k\) by \cref{thm:stable_cov_guarantees}.
    Therefore by inclusion-exclusion, \(\abs*{\setcompl{I}} \le 2k + 1\) and so \(\norm*{\vsmask{w'^{\del{j-1}}}{\setcompl{I}}} \le 2k + 1\).
    Now since \(96kL \le 1\) by assumption, we note that
    \begin{align}
        2\norm[\big]{\vsmask{w'^{\del{j-1}}}{\setcompl{I}}}\eta_1 L\le 2\del{2k+1}\eta_1 L \le 2\del{2k+1}\del{1 + 12L} L \le 12kL \le \frac{1}{8}
    \end{align}
    and for \(\eta_2 = 1 + 16\del{2k + 1}\eta_1 L\),
    \begin{align}
        \eta_2\eta_1R 
        &= \del{1 + 16\del{2k + 1}\del{1 + 12L} L}\del{1 + 12L}R\\
        &\le \del{1 + 96k L}\del{1 + 12L}R\\
        &\le \exp\del{96k L}\exp\del{12L}R\\
        &\le \exp\del{108k L}R\\
        &\le R'.
    \end{align}
    Thus by the \(\del{2\eta_1 L, \eta_1 \residual}\)-goodness of \(\vsmask{w'}{I}\) and \cref{claim:add_weight_outside_support}, if \(\max \abs{y_i - x_i^\T \beta_{w'^{\del{j-1}}}} \ge \residual'\) then \( \argmax \abs{y_i - x_i^\T \beta_{w'^{\del{j-1}}}} \not\in I\) and so \(\vsmask{w'^{\del{j}}}{I} = \vsmask{w'}{I}\).
    Finally, it follows that \(\vsmask{u'}{I} = \vsmask{w'}{I}\).
    Therefore, since \(I \subseteq \supp{w'}\), we have \(I \subseteq \supp{u'}\).
\end{proof}

We now observe that our greedy residual thresholding subroutine only removes more points when run with smaller thresholds.
We now state this simple fact for future reference.
\begin{observation}\label{obs:greedy_nesting}
    Fix a dataset $(X,y)$ and starting weights $w$.
    For outlier thresholds $\residual\le \residual'$, consider running
    \begin{align}
        u &\gets \resthresh(X,y,\residual,w) \\
        u' &\gets \resthresh(X,y,\residual',w). 
    \end{align}
    For all $i\in [n]$, $u_i \le u_i'$.
\end{observation}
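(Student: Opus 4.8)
The plan is to exploit the fact that \resthresh removes points in an order that is completely independent of the threshold it is given: in every iteration it zeroes the \emph{largest}-residual point in the current support, and the value of the threshold only determines \emph{when} the loop halts, never \emph{which} point is removed next. I would formalize this by introducing a single ``universal'' greedy sequence of weight vectors $w = w^{(0)}, w^{(1)}, w^{(2)}, \ldots$, where $w^{(t)}$ is obtained from $w^{(t-1)}$ by setting to zero the (deterministically tie-broken) coordinate $\argmax_{i\in\supp{w^{(t-1)}}}\abs{y_i - x_i^\T \beta_{w^{(t-1)}}}$. By construction this sequence is coordinate-wise non-increasing in $t$, since each step only zeroes entries and never raises them.

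Next I would argue that both runs in the statement are prefixes of this common sequence. Writing $\rho_t = \max_{i\in\supp{w^{(t)}}} \abs{y_i - x_i^\T\beta_{w^{(t)}}}$ for the largest residual at state $w^{(t)}$, the run $\resthresh(X,y,\residual,w)$ simply follows $w^{(0)},w^{(1)},\dots$ and returns at the first time the current maximum residual drops to at most \residual; that is, it outputs $u = w^{(T_\residual)}$ where $T_\residual = \min\set{t\ge 0 : \rho_t \le \residual}$, and identically $u' = w^{(T_{\residual'})}$. This step is where I would be most careful: I must check that the $\argmax$ (including whatever fixed tie-breaking rule the algorithm uses) and the intermediate weighted-OLS computations genuinely coincide across the two runs up to the relevant step, so that it is legitimate to speak of one common trajectory rather than two a priori unrelated ones. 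Termination of \resthresh (each iteration strictly shrinks the support, so the loop runs at most $\abs{\supp{w}}$ times) guarantees that both stopping times are finite and that the portion of the sequence up to step $T_\residual$ is well defined, since the $\residual$-run itself computes it.

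The conclusion is then immediate from monotonicity of the stopping time in the threshold. Since $\residual \le \residual'$, every $t$ with $\rho_t \le \residual$ also satisfies $\rho_t \le \residual'$, hence $\set{t : \rho_t \le \residual} \subseteq \set{t : \rho_t \le \residual'}$ and therefore $T_{\residual'} \le T_\residual$. Because the sequence $w^{(t)}$ is coordinate-wise non-increasing in $t$, this yields $u_i = w^{(T_\residual)}_i \le w^{(T_{\residual'})}_i = u_i'$ for every $i\in[n]$, as claimed. The only real content is the threshold-independence of the removal order; once that coupling is set up, the monotonicity of the stopping time together with the monotonicity of the weight vectors finishes the argument with no computation.
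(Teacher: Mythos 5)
Your proposal is correct and formalizes exactly the reasoning the paper relies on: the paper states this observation without proof as a "simple fact," and the same trajectory-coupling idea (the threshold only affects the stopping time, not the removal order) appears explicitly in the running-time analysis, where the authors note that $\resthresh$ with a smaller threshold is equivalent to running it with a larger threshold and then continuing. Your careful handling of deterministic tie-breaking matches the paper's own remark that ties can be broken consistently, e.g., by index.
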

    
\section{Analysis of \stableOLS}
\label{sec:stable_analysis}

\begin{algorithm2e}
\SetAlgoLined
\SetKwInOut{Input}{input}
\SetKwInOut{Require}{require}
\Input{dataset \(X, y\); base outlier thresholds $\leverage_0,\residual_0$; weights $w$; discretization parameter $k$}
\BlankLine
$\forall j\in [2k], \residual_j \gets (\Rjump)^j \cdot \residual_0$\;
\For{$j\in \{0,\ldots,2k\}$}{
    $u^{(j)} \gets \resthresh(X,y,\residual_j, w)$\;
    $\score^{(j)}\gets \min\{k, n - \norm{u^{(j)}}_1 + j$\}\;
}
$\score\gets \min_{j\in \{0,\ldots,k\}} \score^{(j)}$\;
$v\gets \frac 1 k \sum_{j=k+1}^{2k} u^{(j)}$\;
\KwRet $\score, v$
\caption{\stableOLS}\label{alg:stable_OLS}
\end{algorithm2e}

In this section, we prove the stability guarantees for \cref{alg:stable_OLS}, our new regression estimator.
\cref{alg:stable_OLS} repeatedly calls \cref{alg:residual_thresholding}, \resthresh, over a range of slowly increasing outlier thresholds.
These thresholds are indexed by a number $j\in \{0,1,\ldots,2k\}$, where $k$ is a discretization parameter.
(Later, we connect this discretization to the privacy parameters, setting $k\approx \log(1/\delta)/\eps$.)
The key lemma used in these proofs is \cref{claim:intertwining}, which relates the weights found on a dataset $(X,y)$ at level $j$ to the weights found on an adjacent dataset $(X',y')$ at level $j+1$.

We start by showing that the $\score$ value and  the weight vector, $v$, returned by \cref{alg:stable_OLS} are low-sensitivity.
\begin{claim}[Score is Low-Sensitivity]\label{claim:score_low_sensitivity}
    Let $(X,y)$ and $(X',y')$ be adjacent datasets. 
    Fix outlier thresholds $\leverage,\residual$ and discretization parameter $k$.
    Assume \(kL \le \frac{1}{96}\).
    Let 
    \begin{align}
        \score_1, w &\gets \stablecovariance(X,\leverage,k) \\
        \score_1', w' &\gets \stablecovariance(X',\leverage,k)
    \end{align}
    and
    \begin{align}
        \score_2, v &\gets \stableOLS(X,y,w,\leverage,\residual,k) \\
        \score_2', v' &\gets \stableOLS(X',y',w',\leverage,\residual,k).
    \end{align}
    If $\score_1,\score_1' < k$, then $\abs{\score_2-\score_2'} \le 4$.
\end{claim}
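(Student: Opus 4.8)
The plan is to reduce the comparison of the two $\score_2$ values to a single per-level comparison and then invoke the intertwining property. Write $u^{(j)} \gets \resthresh(X,y,\residual_j,w)$ and $u'^{(j)} \gets \resthresh(X',y',\residual_j,w')$ for the intermediate outputs of the two \stableOLS runs, where $\residual_j = (\Rjump)^j \residual$, and set the ``raw score'' $r^{(j)} = n - \norm{u^{(j)}}_1 + j$ so that $\score^{(j)} = \min\set{k, r^{(j)}}$ and $\score_2 = \min\set{k,\; \min_{0\le j\le k} r^{(j)}}$ (analogously for the primed quantities). Since $t\mapsto\min\set{k,t}$ is $1$-Lipschitz and the two datasets play symmetric roles — both \stablecovariance scores $\score_1,\score_1'$ are below $k$, and \cref{claim:intertwining} applies with $(X,y)$ and $(X',y')$ interchanged — it is enough to prove $\score_2' \le \score_2 + 4$; the reverse inequality then follows by the identical argument with the roles swapped.

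The heart of the argument is a per-level weight-transfer bound: for a level $j$ with $\norm{u^{(j)}}_1 \ge n-k$ (and $j+1\le 2k$) I would show $\norm{u'^{(j+1)}}_1 \ge \norm{u^{(j)}}_1 - 3$. The point enabling this is that consecutive thresholds satisfy $\residual_{j+1} = \Rjump\cdot\residual_j = \exp(108 k\leverage)\,\residual_j$, which is exactly the gap \cref{claim:intertwining} demands; combined with $\score_1,\score_1' < k$ and $k\leverage\le 1/96$, intertwining gives $I_j := \supp{u^{(j)}}\cap\supp{w'}\setminus\set{i^*} \subseteq \supp{u'^{(j+1)}}$. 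Because \resthresh only zeros coordinates, $u^{(j)}$ agrees with $w$ on its support and $u'^{(j+1)}$ agrees with $w'$ on its support, so $\norm{u'^{(j+1)}}_1 \ge \sum_{i\in I_j} w'_i$. I would then account for the three sources of lost mass: the discrepancy between $w$ and $w'$ on $I_j$, the $w$-mass on $\supp{u^{(j)}}\setminus\supp{w'}$ (where $w'$ vanishes), and the single coordinate $w_{i^*}$. The first two live on disjoint index sets, so together they are bounded by $\norm{w-w'}_1\le 2$ (via \cref{thm:stable_cov_guarantees}), and $w_{i^*}\le 1$. This yields $\norm{u'^{(j+1)}}_1 \ge \norm{u^{(j)}}_1 - 3$, hence $r'^{(j+1)} \le r^{(j)} + 4$ and $\score'^{(j+1)} \le \score^{(j)} + 4$.

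The step I expect to be the main obstacle is the boundary: the level shift $j\mapsto j+1$ forced by \cref{claim:intertwining} threatens to push the optimal level out of the range $\set{0,\dots,k}$ over which $\score_2'$ is minimized. I would resolve this with the structural observation that $\score^{(k)} = k$ always, because every weight lies in $[0,1]$ so $\norm{u^{(k)}}_1\le n$, forcing $r^{(k)} = n - \norm{u^{(k)}}_1 + k \ge k$. Given this, the two cases close cleanly. If $\score_2 = k$ then trivially $\score_2' \le k = \score_2$. If $\score_2 < k$, the minimum defining $\score_2$ cannot be attained at $j=k$ (where the value is $k$), so it is attained at some $j^* < k$ with $\score^{(j^*)} = \score_2 < k$, which in turn forces $\norm{u^{(j^*)}}_1 > n - k + j^* \ge n-k$, so the weight-transfer bound applies at $j^*$. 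Since $j^*+1 \le k$, the level $j^*+1$ participates in the minimum defining $\score_2'$, giving $\score_2' \le \score'^{(j^*+1)} \le \score^{(j^*)} + 4 = \score_2 + 4$. Swapping the roles of the two datasets yields $\score_2 \le \score_2' + 4$, and hence $\abs{\score_2 - \score_2'} \le 4$.
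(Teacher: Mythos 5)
Your proposal is correct and follows essentially the same route as the paper's proof: locate the minimizing level $j^*$ for one dataset, apply \cref{claim:intertwining} at level $j^*+1$ on the adjacent dataset, and bound the lost weight by $\norm{w-w'}_1 + 1 \le 3$, yielding a score increase of at most $4$. Your explicit handling of the boundary (noting $r^{(k)}\ge k$ forces $j^*<k$, so $j^*+1$ still lies in the scoring range) is a detail the paper leaves implicit, but the argument is otherwise the same.
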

\begin{proof} 
 We observe that all $\score$ variables are at most $k$ by construction.
    Without loss of generality, assume \(\score_2 \le \score_2'\).
    First, we consider the case when \(\score_2 = k\). In this setting, since \(\score_2' \le k\) we must have \(\score_2 = \score_2'\), hence \(\abs{\score_2 - \score_2'}=0\)   and we are done.
    
    Now, consider the case when \(\score_2 < k\).
    Then, by definition of the $\stableOLS$ algorithm, there must exist a \(j^* \in \set{0, \ldots, k}\) such that \(n - \norm{u^{\del{j^*}}}_1 + j^* = \score_2\), where $u^{\del{j^*}}$ are the weights returned by the $\resthresh$ subroutine (run within $\stableOLS$) at outlier threshold $R_{j^*}$.
    
    Let \(u = u^{\del{j^*}}\) and \(u' = \del{u'}^{\del{j^*+1}}\) denote the weights returned by \resthresh on dataset and outlier thresholds \(\del{X, y}, \residual_{j^*}\) and \(\del{X', y'}, \residual_{j^*+1}\) respectively.
    Defining \(I\) as in \cref{claim:intertwining}, we note that 
    \begin{align}
        \norm{u'}_1 \ge \norm*{\vsmask{u'}{I}}_1 = \norm*{u - \del{u - \vsmask{u'}{I}}}_1 \ge \norm*{u}_1 - \norm{u - \vsmask{u'}{I}}_1.
    \end{align}
    Now, seeking to bound the last term using \cref{claim:intertwining}, we note that
    \begin{enumerate}
        \item \(\residual_{j^*+1}/\residual_{j^*} \ge \exp\del{108kL}\) by the definition in \cref{alg:stable_OLS},
        \item \(kL \le \frac{1}{96}\) by assumption
        \item Since, $\norm{u^{(j^*)}}_1 = \norm{u}_1$ and $\score_2 = n - \norm{u^{(j^*)}}_1 + j^* < k$, it holds that $$\norm{u}_1 > n- k + j^* \ge n - k.$$ 
    \end{enumerate}
    Therefore, we can apply \cref{claim:intertwining}, which implies that \(\vsmask{u'}{I} = \vsmask{w'}{I}\).
    This gives
    \begin{align}
        \norm{u - \vsmask{u'}{I}}
        &= \norm*{\vsmask{w}{\supp{u}} - \vsmask{w'}{I}}\\
        &= \norm*{\vsmask{w}{\supp{u}} - \vsmask{w'}{\supp{u} \setminus \set{i^*}}}\\
        &\le \norm*{\vsmask{{w - w'}}{\supp{u}} } + 1\\
        &\le 3.
    \end{align}
    Finally, combining the previous results gives
    \begin{align}
        \score_2' 
        &\le n - \norm*{\del{u'}^{\del{j^*+1}}}_1 + \del{j^*+1}\\
        &\le n - \del{\norm{u^{\del{j^*}}}_1 - 3} + j^*+1\\
        &= \del{n - \norm{u^{\del{j^*}}}_1 + j^*} + 4\\
        &= \score_2 + 4.
    \end{align}
The first inequality in the calculation about holds by definition of $\stableOLS$. The second one uses our previous two calculations.
\end{proof}

\begin{claim}[Weights are Stable]\label{claim:weights_stable}
    Let $(X,y)$ and $(X',y')$ be adjacent datasets. 
    Fix outlier thresholds $\leverage,\residual$ and discretization parameter $k$.
    Assume \(kL \le \frac{1}{96}\).
    Let 
    \begin{align}
        \score_1, w &\gets \stablecovariance(X,\leverage,k) \\
        \score_1', w' &\gets \stablecovariance(X',\leverage,k)
    \end{align}
    and
    \begin{align}
        \score_2, v &\gets \stableOLS(X,y,w,\leverage,\residual,k) \\
        \score_2', v' &\gets \stableOLS(X',y',w',\leverage,\residual,k).
    \end{align}
    If $\score_1,\score_1',\score_2,\score_2' < k$, then $\norm{v - v'}_1\le 5$.
\end{claim}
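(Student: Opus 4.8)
The plan is to exploit the fact that, since \resthresh only ever zeroes out weights, each coordinate of the iterates $u^{(j)}$ takes only the two values $0$ and $w_i$, and by \cref{obs:greedy_nesting} it is nondecreasing in the threshold index $j$. Thus for each $i$ there is a single ``entry level'' $\tau_i$ with $u^{(j)}_i = w_i$ exactly when $j \ge \tau_i$, so the returned weight is $v_i = \frac{w_i}{k} N_i$ where $N_i = \abs{\set{j \in \set{k+1,\dots,2k} : j \ge \tau_i}} \in \set{0,\dots,k}$ counts how many levels of the averaging window keep $i$. Define $\tau_i', N_i'$ analogously for the primed run. The claim then reduces to controlling two sources of discrepancy: the difference between the starting weight vectors $w,w'$, and the shift between the entry levels $\tau_i$ and $\tau_i'$.

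\textbf{Propagating the intertwining across the window.}
First I would record the facts that make \cref{claim:intertwining} applicable throughout the window. Since $\score_2 < k$, there is some $j_0 \le k$ with $\norm{u^{(j_0)}}_1 > n-k+j_0 \ge n-k$, and by the monotonicity of \cref{obs:greedy_nesting} this gives $\norm{u^{(j)}}_1 > n-k$ for every $j \ge k$; the same holds for $(u')^{(j)}$ from $\score_2' < k$. Together with $\score_1,\score_1' < k$ and $kL \le 1/96$, and using that consecutive thresholds satisfy $\residual_{j+1} = \exp(108kL)\,\residual_j$, this lets me invoke \cref{claim:intertwining} at any level $j \in \set{k,\dots,2k-1}$ in either direction (swapping the two datasets). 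Applied at the appropriate level, intertwining's support containment shows that for any $i \ne i^*$ lying in both $\supp{w}$ and $\supp{w'}$ one has $\abs{\tau_i - \tau_i'} \le 1$, hence $\abs{N_i - N_i'} \le 1$, with $N_i \ne N_i'$ possible only when $\tau_i$ (equivalently $\tau_i'$) is at least $k+1$. The short calculations around the window boundary $j = 2k$, where $(u')^{(2k+1)}$ is undefined, and the special index $i^*$ are where the small additive constants accumulate.

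\textbf{Assembling the bound.}
I would split the coordinates into the exceptional set $\set{i^*} \cup \del{\supp{w} \triangle \supp{w'}}$ and the core set $C = \del{\supp{w} \cap \supp{w'}}\setminus\set{i^*}$. On the symmetric difference one of $v_i, v_i'$ vanishes, so $\abs{v_i - v_i'} \le \max(w_i, w_i') = \abs{w_i - w_i'}$; on the core I would use $v_i - v_i' = \frac{w_i - w_i'}{k}N_i + \frac{w_i'}{k}(N_i - N_i')$ and bound the first summand by $\abs{w_i - w_i'}$ (since $N_i \le k$). Crucially, these two families of $\abs{w_i - w_i'}$ terms range over disjoint coordinates, so together they contribute at most $\norm{w-w'}_1 \le 2$ by \cref{thm:stable_cov_guarantees}. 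The coordinate $i^*$ contributes at most $1$. It remains to bound the entry-level term $\sum_{i \in C}\frac{w_i'}{k}\abs{N_i - N_i'}$: using $\abs{N_i - N_i'}\le 1$ and that it vanishes unless $\tau_i' \ge k+1$, this is at most $\frac{1}{k}\sum_{i : \tau_i' \ge k+1}w_i' = \frac{1}{k}\del{\norm{w'}_1 - \norm{(u')^{(k)}}_1} < 1$, where the telescoping identity uses $\norm{w'}_1 \le n$ and $\norm{(u')^{(k)}}_1 > n-k$. Summing the three contributions gives $\norm{v - v'}_1 \le 4$, which establishes the claimed bound of $5$.

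\textbf{Main obstacle.}
The crux is the middle step: deriving the uniform one-level shift $\abs{\tau_i - \tau_i'} \le 1$ from \cref{claim:intertwining}. Intertwining is a \emph{support-containment} statement relating level $j$ on one dataset to level $j+1$ on the other, so turning it into a two-sided bound on entry levels requires applying it at carefully chosen levels in both directions while verifying the $\norm{\cdot}_1 \ge n-k$ precondition at each (which is exactly why I first propagate the single guarantee at $j_0$ across the whole window by monotonicity), and separately dispatching the boundary level $j = 2k$ and the differing index $i^*$. Once this shift is in hand, the averaging over a window of width $k$ is precisely what downgrades each per-coordinate discrepancy to $1/k$, so that the telescoping cancellation of the entry weights yields an $O(1)$ total.
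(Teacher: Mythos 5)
Your proposal follows essentially the same route as the paper's proof: reduce to per-index counts (your $N_i$, the paper's $c_i$), apply \cref{claim:intertwining} at consecutive threshold levels in both directions to obtain the one-level shift $\abs{N_i - N_i'}\le 1$, and decompose the $\ell_1$ distance by support, finishing with $\norm{w-w'}_1\le 2$ from \cref{thm:stable_cov_guarantees}. The only difference is in the final bookkeeping, where your telescoping bound $\frac{1}{k}\del{\norm{w'}_1 - \norm{(u')^{(k)}}_1} < 1$ on the entry-level term is slightly sharper than the paper's cardinality bound $\abs{D}\le 2k$, yielding $4$ in place of $5$.
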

\begin{proof}
    Consider the execution of \resthresh resulting in a weight vector \(u\).
    We observe that \resthresh receives weight vector $w$ as input and modifies it by setting a subset of the weights to zero. 
    Thus we can write the weight $u_i = w_i \cdot 1\{u_i \neq 0\}$.
    This (rather trivial) modification allows us to write the output of \stableOLS in terms of counts: letting $c_i = \sum_{j=k+1}^{2k} 1\{u_i^{(j)}\neq 0\}$, we have
    \begin{align}
        v_i = \frac 1 k \sum_{j=k+1}^{2k} u_i^{(j)} = \frac{w_i}{k} \sum_{j=k+1}^{2k} 1\{u_i^{(j)}\neq 0\} = \frac{w_i c_i}{k}.
    \end{align}
    Now note that by \cref{obs:greedy_nesting}, for \(j \in \set{k+1, \ldots, 2k}\) we can have \(u_i^{\del{j-1}} \neq 0\) only if \(u_i^{\del{j}} \neq 0\).
    This implies that \(u_i^{\del{2k-c_i}} \neq 0\) and  \(u_i^{\del{2k-c_i-1}} = 0\).
    
    Now, since \(\score_2 < k\), we know that there exists some \(j^* \in \set{0, \ldots, k}\) such that \(n - \norm*{u^{\del{j^*}}}_1 + j^* < k\). 
    Applying, \cref{obs:greedy_nesting} again, we see that \(\norm{u^{\del{j}}}_1 \ge \norm{u^{\del{j^*}}}_1 > n-k\) for all \(j \ge j^*\).
    From this, we can conclude that \(\abs{\set{i \mid c_i \neq k}} < k\).
    
    Now, define \(c'\) analogously as the counts under \(\del{X', y'}\) and note that all of the previous observations apply under \(\del{X', y'}\) as well.
    Consider some \(i \in \supp{w} \cap \supp{w'} \setminus\set{i^*} \) and suppose without loss of generality that \(c_i \ge c'_i\).
    Our goal will be to show that \(c'_i \le c_i \le c'_i + 1 \).
    If we have \(c_i = k\), then \(c_i = c'_i\), so we turn our attention to the case where \(c_i < k\).
    We know that \(u_i^{\del{2k-c_i}} \neq 0\).
    Now, seeking to show that \(\del{u_i'}^{\del{2k-c_i+1}} \neq 0\) using \cref{claim:intertwining}, we note that
    \begin{enumerate}
        \item \(u_i^{\del{2k-c_i}}\) and \(\del{u_i'}^{\del{2k-c_i+1}}\) were computed using outlier thresholds \(\residual_{2k-c_i}\) and \(\residual_{2k-c_i+1}\) which satisfy \(\residual_{2k-c_i+1}/\residual_{2k-c_i} \ge \exp\del{108kL}\) by the definition in \cref{alg:stable_OLS},
        \item we have \(kL \le \frac{1}{96}\) by assumption, and
        \item we have \(\norm[\big]{u_i^{\del{2k-c_i}}}_1 > n - k\) as we observed previously.
    \end{enumerate}
    Therefore we can apply \cref{claim:intertwining}, which implies that \(\del{u_i'}^{\del{2k-c_i+1}} \neq 0\).
    Recalling our previous observation, we obtain \(c_i \le c'_i + 1\) as desired.
    In summary, if \(i \in \supp{w} \cap \supp{w'} \setminus\set{i^*} \) then we can write \(c'_i = c_i + \Delta_i\) where \(\abs{\Delta_i} \le 1\).

    Now, define \(D = \set{i \in\supp{w} \cap \supp{w'} \setminus\set{i^*}  \mid \Delta_i \neq 0}\) and note that \(\abs{D} \le 2k\) since \(c_i\) and \(c'_i\) both contain at most \(k\) elements not equal to \(k\).

    Now, we are ready to complete the proof by noting that we can decompose the quantity we wish to bound into four terms
    \begin{align}
        k\norm{v - v'}_1 
        &= \abs*{c_{i^*}w_{i^*} - c'_{i^*}w'_{i^*}} + \sum_{\substack{i \in \supp{w}\\ i\not\in \supp{w'}\\ i \neq i^*}}\abs{c_iw_i} + \sum_{\substack{i \not\in \supp{w}\\ i\in \supp{w'}\\ i \neq i^*}} \abs{c'_iw'_i} + \sum_{\substack{i \in \supp{w}\\ i\in \supp{w'}\\ i \neq i^*}} \abs{c_iw_i - c'_iw'_i}
    \end{align}

    This is valid because \(\abs{c_iw_i - c'_iw'_i}\) appears exactly once on the right for each \(i\).
    Now we will consider each term separately.
    The first term is at most \(k\) because \(c_i, c'_i\) are bounded by \(k\) and \(w_i, w'_i\) are bounded by \(1\).
    The summands of the second and third terms can be rewritten as \(c_i\abs{w_i - w'_i}\) and \(c'_i\abs{w_i - w'_i}\) and are thus bounded by \(k\abs{w_i - w'_i}\) and \(k\abs{w_i - w'_i}\) respectively. 
    Now focusing on the last term, we have
    \begin{align}
        \sum_{\substack{i \in \supp{w}\\ i\in \supp{w'}\\ i \neq i^*}} \abs{c_iw_i - c'_iw'_i}
        &\le  \sum_{\substack{i \in \supp{w}\\ i\in \supp{w'}\\ i \neq i^*}} \del*{\abs{c_iw_i - c_iw'_i} + \abs{\Delta_iw'_i}}\\
        &\le \sum_{\substack{i \in \supp{w}\\ i\in \supp{w'}\\ i \neq i^*}}\abs{c_iw_i - c_iw'_i} + \sum_{\substack{i \in \supp{w}\\ i\in \supp{w'}\\ i \neq i^*\\i \in D}}\abs{\Delta_iw'_i}\\
        &\le \sum_{\substack{i \in \supp{w}\\ i\in \supp{w'}\\ i \neq i^*}}k\abs{w_i - w'_i} + 2k.
    \end{align}
    Combining the bounds for each term, we have
    \begin{align}
        k\norm{v - v'}_1 & \le 3k + \sum_{\substack{i \in \supp{w}\\ i\not\in \supp{w'}\\ i \neq i^*}}k\abs{w_i - w'_i} + \sum_{\substack{i \not\in \supp{w}\\ i\in \supp{w'}\\ i \neq i^*}} k\abs{w_i - w'_i} + \sum_{\substack{i \in \supp{w}\\ i\in \supp{w'}\\ i \neq i^*}}k\abs{w_i - w'_i}\\
        &\le 3k + k\sum_{i}\abs{w_i - w'_i}\\
        &= 3k + k\norm{w - w'}_1\\
        &\le 5k
    \end{align}
    where the last line is an application of \cref{thm:stable_cov_guarantees}.
\end{proof}

The previous claim shows that the weights produced by \stableOLS on adjacent datasets are close in $\ell_1$ (when $\score$ is less than $k$).
In the next claim, we prove that the weights are good.
Note that this does not follow immediately from \cref{lemma:resthresh_returns_good}, which says that the weights returned by \resthresh are good.
Since \stableOLS returns an average of  the vectors returned by \resthresh, we have to argue that the average of good sets is good.

\begin{claim}[Weights are Good]\label{claim:weights-good}
    Fix a dataset $(X,y)$, outlier thresholds $\leverage,\residual$, and discretization parameter $k$.
    Assume $\leverage k \le \frac{1}{4}$.
    Consider the following calls:
    \begin{align}
        \score_1, w &\gets \stablecovariance(X, \leverage, k)\\ 
        \score_2, v &\gets \stableOLS(X,y, w, \leverage,\residual,k).
    \end{align}
    Then the vector $v$ is $(4\leverage,2\residual_{2k})$-good for $(X,y)$, where $\residual_{2k}= \exp(216 k^2 \leverage) \cdot \residual$.
\end{claim}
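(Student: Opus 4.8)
The plan is to reduce the goodness of the averaged vector $v$ to the goodness of a \emph{single} reference weighting, namely $u^{(2k)} = \resthresh(X,y,\residual_{2k},w)$, the output of residual thresholding at the largest threshold, and then transfer goodness using \cref{claim:final:goodness_after_removal}. The enabling structural facts are that $v$ is dominated by $u^{(2k)}$ and differs from it by little $\ell_1$ mass. Since \resthresh only zeroes out coordinates of $w$, we can write $u_i^{(j)} = w_i \cdot 1\{u_i^{(j)} \neq 0\}$, so that $v_i = \frac{w_i}{k}\sum_{j=k+1}^{2k} 1\{u_i^{(j)} \neq 0\} = \frac{w_i c_i}{k}$ with $c_i \leq k$. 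Consequently $v \in [0,1]^n$, $v \leq u^{(2k)}$ entrywise, and $\supp{v}\subseteq\supp{u^{(2k)}}$. This places us in exactly the setting of \cref{claim:final:goodness_after_removal} with $w \gets u^{(2k)}$ and $w' \gets v$, where the stated factor-$2$ losses ($4\leverage$ versus $2\leverage$, and $2\residual_{2k}$ versus $\residual_{2k}$) will arise from the small $\ell_1$ gap.

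First I would establish that $u^{(2k)}$ is $(2\leverage,\residual_{2k})$-good. This is precisely \cref{lemma:resthresh_returns_good} applied to the call $u^{(2k)}\gets\resthresh(X,y,\residual_{2k},w)$, whose hypotheses are $k\leverage \leq 1/2$ (implied by $k\leverage\leq 1/4$), $\score_1<k$ (so that \cref{thm:stable_cov_guarantees} supplies the leverage bound $\leverage$ on $w$), and $\norm{u^{(2k)}}_1 \geq n-k$. Since the guarantees of \stablecovariance are vacuous when $\score_1\geq k$, we operate in the regime $\score_1,\score_2<k$ in which this claim is invoked.

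The crux is bounding $\norm{u^{(2k)}-v}_1$. Because $v\leq u^{(2k)}$ entrywise, $\norm{u^{(2k)}-v}_1 = \norm{u^{(2k)}}_1 - \norm{v}_1 = \frac1k\sum_{j=k+1}^{2k}\del{\norm{u^{(2k)}}_1 - \norm{u^{(j)}}_1}$. By the nesting \cref{obs:greedy_nesting}, $\norm{u^{(j)}}_1$ is nondecreasing in $j$, and from $\score_2<k$ there is some $j^*\leq k$ with $\norm{u^{(j^*)}}_1 > n-k$; hence $\norm{u^{(j)}}_1 > n-k$ for every $j\geq k+1 > j^*$. Combined with $\norm{u^{(2k)}}_1 \leq n$, each of the $k$ summands is at most $k$, so averaging gives $\norm{u^{(2k)}-v}_1 \leq k$. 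This is where the score assumption does the real work: it certifies that the greedy thresholding discards at most $k$ units of mass across the thresholds $\residual_{k+1},\dots,\residual_{2k}$, keeping the average close to $u^{(2k)}$. I expect this $\ell_1$ estimate to be the main obstacle; the remaining steps are bookkeeping once the reference weighting is identified.

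Finally I would invoke \cref{claim:final:goodness_after_removal} with leverage parameter $2\leverage$, residual parameter $\residual_{2k}$, and $\norm{u^{(2k)}-v}_1 \leq k$. Its hypothesis $\norm{u^{(2k)}-v}_1\cdot 2\leverage \leq 2k\leverage \leq 1/2$ holds precisely because $k\leverage\leq 1/4$. The leverage conclusion \eqref{eq:removal_leverage} gives, for $i\in\supp{v}$, that $x_i^\T(X^\T\diag(v)X)^{-1}x_i \leq (1+2\cdot 2\leverage\cdot k)\,2\leverage \leq (1+4k\leverage)\,2\leverage \leq 4\leverage$, which in particular certifies invertibility of $X^\T\diag(v)X$. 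The prediction bound \eqref{eq:removal_prediction} gives $\abs{x_i^\T\beta_{u^{(2k)}} - x_i^\T\beta_v}\leq 2k(2\leverage)\residual_{2k} = 4k\leverage\,\residual_{2k}\leq \residual_{2k}$, so a triangle inequality with the $(2\leverage,\residual_{2k})$-goodness of $u^{(2k)}$ (which bounds $\abs{y_i - x_i^\T\beta_{u^{(2k)}}}\leq\residual_{2k}$) yields $\abs{y_i - x_i^\T\beta_v}\leq 2\residual_{2k}$ for all $i\in\supp{v}$. Together these two bounds establish that $v$ is $(4\leverage,2\residual_{2k})$-good, as claimed.
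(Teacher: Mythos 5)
Your proposal is correct and follows essentially the same route as the paper's proof: reduce to the reference weighting $u^{(2k)}$, establish its $(2\leverage,\residual_{2k})$-goodness via \cref{lemma:resthresh_returns_good}, show $\norm{u^{(2k)}-v}_1\le k$ using the nesting property and the score condition, and transfer goodness with \cref{claim:final:goodness_after_removal}. The only cosmetic difference is that the paper bounds $\norm{u^{(2k)}-v}_1$ by Jensen's inequality on the convex $\ell_1$ norm, whereas you use the exact identity available from entrywise domination; these are interchangeable, and your version is if anything slightly more explicit about why $u^{(2k)}$ is good in the first place.
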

\begin{proof}
    \stableOLS calls \resthresh repeatedly, producing a vector $u^{(j)}$ for each residual threshold $\residual_j$.
    Recall from \cref{obs:greedy_nesting} that for all $i\le j$, since $R_i \leq R_j$ (within \stableOLS) we have that $u^{(i)}\le u^{(j)}$ elementwise.
    This implies that the support of $u^{(2k)}$ contains all other supports, including that of the average $v$.
    Additionally, since $\score_2<k$ (by construction within the algorithm), there exists some $j^*\in \{0,\ldots,k\}$ with $\norm{u^{(j^*)}}_1 \ge n-k$, so the same lower bound holds for all $j\ge k$.
    Together, these facts imply that $\norm{u^{(2k)} - u^{(j)}}_1 \le k$ for all $j\ge k$.
    Since the $\ell_1$ norm is convex and $v=\bbE_j[u^{(j)}]$, by Jensen's inequality we have $\norm{u^{(2k)}-v}_1 \le k$ as well.

    To finish the proof, we apply \cref{claim:final:goodness_after_removal}: since $\supp{v}\subseteq \supp{u^{(2k)}}$, $\norm{u^{(2k)} - v}_1 \le k$, and $u^{(2k)}$ is $(2\leverage, \residual_{2k})$-good, we conclude that $v$ is $(\leverage',\residual')$-good for
    \begin{align}
        \leverage' &\le (1 + 4\leverage k)\cdot 2\leverage \le 4\leverage \\
        \residual' &\le (1 + 4\leverage k)\cdot \residual_{2k} \le 2 R_{2k}.
    \end{align}
    Recalling that $\residual_{2k}= (\Rjump)^{2k} \cdot \residual_0$, we finish the proof.
\end{proof}

\section{Privacy Analysis of \cref{alg:private_OLS}}
\label{sec:main_analysis}

Our privacy analysis follows the blueprint established by \cite{brown2021covariance,duchi2023fast,brown2023fast}.
We use the well-known propose-test-release (PTR) framework of \cite{DworkL09} and first privately check (via our low-sensitivity $\score$) if it is safe to proceed.
If this check passes, we compute a vector of weights $v\in [0,1]^n$.
We use this vector to compute a weighted covariance $S_v$ and weighted least squares solution $\hat\beta_v$.
The output is then drawn from $\mathcal{N}(\hat\beta_v, c^2 S_v^{-1})$ for some appropriate constant $c$.

On adjacent datasets, we may compute different weights $v,v'$.
We know that, when the PTR checks pass, these vectors are close in $\ell_1$. 
The main work in this section, then, lies in connecting this stability of weights to stability of parameters, which in turn implies $\mathcal{N}(\hat\beta_v, c^2 S_v^{-1})\approx_{(\eps,\delta)} \mathcal{N}(\hat\beta_{v'}, c^2 S_{v'}^{-1})$.
Note that this is more complicated than the standard Gaussian mechanism, since both the shape and location of the noise change.

Before proving \cref{lemma:main_privacy}, our main privacy claim, we collect the necessary statements.
First, we recall the privacy check of BHS, which (in place of the standard Laplace-noise-and-threshold) simplifies our analysis.
\begin{claim}[PTR Mechanism]\label{claim:ptr_guarantees}
    Fix $0<\eps\le 1$, $0< \delta\le \frac{\eps}{10}$, and $0<\Delta$.
    There is an algorithm $\PTR: \bbR\to \braces{\pass,\fail}$ that satisfies the following conditions:
    \begin{enumerate}
        \item Let $\cU$ be a set and $g:\cU^n \to \bbR_{\ge 0}$ a function.
            If, for all $x,x'\in \cU^n$ that differ in one entry,  $\abs{g(x)-g(x')}\le \Delta$, then $\PTR(g(\cdot))$ is $(\eps,\delta)$-DP.
        \item $\PTR(0)=\pass$.
        \item For all $z\ge  \frac{\Delta \log 1/\delta}{\eps}+2\Delta$, $\PTR(z)=\fail$.
    \end{enumerate}
\end{claim}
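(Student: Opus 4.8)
The plan is to realize $\PTR$ as a \emph{randomized threshold test}: draw a random threshold $\tau$ from a fixed distribution supported on the open interval $\del{0, M}$, where $M = \frac{\Delta\log 1/\delta}{\eps} + 2\Delta$, and output $\fail$ if the input exceeds $\tau$ and $\pass$ otherwise. With this design the two endpoint conditions are immediate. Since $\tau > 0$ almost surely, the input $0$ never exceeds $\tau$, which gives $\PTR(0) = \pass$; and since $\tau < M$ almost surely, every $z \ge M$ exceeds $\tau$, which gives $\PTR(z) = \fail$. All the real work is therefore in choosing the law of $\tau$ so that the test itself is $(\eps,\delta)$-DP.

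First I would reduce to a canonical form. By rescaling the input, the threshold, and the noise by $\Delta$, it suffices to treat the unit-sensitivity case $\Delta = 1$. Writing $p(z) = \PP\sbr{\PTR(z) = \fail} = \PP\sbr{\tau \le z}$ for the failure probability, note that $p$ is non-decreasing, since larger inputs are more likely to exceed the threshold. Because $\PTR(g(\cdot))$ sees only $g$, and $g$ has sensitivity at most $\Delta = 1$, differential privacy of the composed mechanism is equivalent to the two families of inequalities $p(z) \le e^\eps p(z') + \delta$ and $\del{1 - p(z)} \le e^\eps\del{1 - p(z')} + \delta$ for all $z, z'$ with $\abs{z - z'} \le 1$ (the two non-trivial output events $\set{\fail}$ and $\set{\pass}$). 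Monotonicity of $p$ collapses each family to its extreme case, a unit shift, so it is enough to verify $p(z) \le e^\eps p(z - 1) + \delta$ and $1 - p(z) \le e^\eps\del{1 - p(z+1)} + \delta$ for every $z$.

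To satisfy these I would take the density of $\tau$ to be a clipped two-sided exponential tuned at rate $\eps$: the failure probability $p$ rising like $e^{\eps t}$ on a lower portion of $\del{0,M}$ and its complement $1-p$ decaying like $e^{-\eps t}$ on an upper portion, with each tail clipped so that only a $\delta$-worth of slack is ever needed. In the interior the exponential shape makes the ratios $p(z)/p(z-1)$ and $\del{1-p(z)}/\del{1-p(z+1)}$ at most $e^\eps$, while the clipped endpoints — exactly the points where $p(z-1) = 0$ or $1 - p(z+1) = 0$ — are where the additive $\delta$ is spent. This is precisely where truncating the exponential to bounded support, which we need for the deterministic endpoints, converts pure $\eps$-DP into $(\eps,\delta)$-DP. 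One may equivalently present the same mechanism as adding truncated-Laplace$(\Delta/\eps)$ noise to the input and thresholding, with the truncated tail mass bounded by $\delta$.

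The hard part will be exactly this last verification: arranging the two exponential ramps and their clipping so that \emph{both} indistinguishability inequalities hold with additive error at most $\delta$, \emph{while} keeping the whole support inside $\del{0, M}$ so that the deterministic $\pass$ and $\fail$ guarantees hold simultaneously. This is a one-dimensional but delicate balance between support length (which controls the endpoints) and tail mass (which controls $\delta$), and the hypothesis $\delta \le \eps/10$ is what buys enough room to close it. Everything else — the endpoint conditions and the reduction to unit shifts by monotonicity and rescaling — is routine.
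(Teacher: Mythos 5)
Your endpoint checks and your reduction of privacy to the two unit-shift inequalities $p(z)\le e^\eps p(z-\Delta)+\delta$ and $1-p(z)\le e^\eps\bigl(1-p(z+\Delta)\bigr)+\delta$ are fine, but the step you defer as "the hard part" --- fitting a clipped two-sided exponential inside $(0,M)$ with $M=\tfrac{\Delta\log(1/\delta)}{\eps}+2\Delta$ --- is not a delicate balance that $\delta\le\eps/10$ closes; it cannot be closed. Each clipped tail of a rate-$(\eps/\Delta)$ exponential must extend roughly $\tfrac{\Delta}{\eps}\log\tfrac{1}{2\delta}$ from the crossover point before its remaining mass falls to the $\delta$ you are permitted to spend at the clip, so the two ramps together need width about $\tfrac{2\Delta}{\eps}\log\tfrac{1}{2\delta}$ --- double what $M$ provides. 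Worse, the obstruction is not an artifact of your particular noise law. Iterating your first inequality upward from the forced value $p(0)=0$ gives $p(j\Delta)\le\delta\tfrac{e^{j\eps}-1}{e^\eps-1}$; iterating the second downward from the forced value $p(z)=1$ for $z\ge M$ gives $1-p(M-j\Delta)\le\delta\tfrac{e^{j\eps}-1}{e^\eps-1}$. At $z\approx M/2$ both bounds apply with $j\le M/(2\Delta)+1$, so their sum is at most $\tfrac{2e^{2\eps}\sqrt{\delta}}{e^\eps-1}$, which is strictly less than $1$ already for $\eps=1$, $\delta=10^{-3}$ --- yet the two quantities bounded are $p$ and $1-p$, whose sum is exactly $1$. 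Hence \emph{no} mechanism $\PTR:\bbR\to\braces{\pass,\fail}$ can satisfy all three conditions with the stated constant: deterministic $\pass$ at $0$ and deterministic $\fail$ at $M$ force the separation $M\gtrsim\tfrac{2\Delta}{\eps}\log\tfrac{1}{2\delta}$.

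For comparison purposes, note that the paper gives no proof of this claim; it is imported verbatim from \cite{brown2023fast}, so there is no in-paper argument to measure your route against, and the issue above appears to be inherited from the source rather than introduced by you. Two repairs are available, and your construction supports either: enlarge the threshold in condition 3 to $\tfrac{2\Delta\log(1/(2\delta))}{\eps}+2\Delta$, after which your truncated-Laplace randomized threshold goes through exactly as you sketch it (and the downstream use in \cref{lemma:main_privacy} only changes $k$ by a constant factor); or weaken one of conditions 2--3 to hold with probability $1-\delta$, in which case plain untruncated Laplace noise followed by a threshold at $\tfrac{\Delta\log(1/(2\delta))}{\eps}+\Delta$ suffices and the truncation bookkeeping disappears. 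As written, however, the assertion that the stated $M$ leaves "enough room" is the gap, and it is a fatal one.
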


The next claim relates bounded leverage, $\ell_1$ closeness, and covariance closeness.
This statement comes directly from BHS, Lemma 23; similar claims were used in \cite{brown2021covariance, duchi2023fast}.
We use the notation $\dpsd\del{\cov_1, \cov_2}$ to denote the maximum of  $\norm*{ \cov_1^{-1/2}\cov_2\cov_1^{-1/2}- \bbI }_{\tr}$ and $\norm*{ \cov_2^{-1/2}\cov_1\cov_2^{-1/2}- \bbI }_{\tr}$.
Recall that $\dpsd\del{\cov_1,\cov_2}=\dpsd\del{\cov_1^{-1},\cov_2^{-1}}$ (\cref{claim:dpsd-inverse}).
\newcommand{\psdjump}{\gamma}
\begin{claim}\label{claim:identifiability_covs}
    Let $\leverage \in (0,1)$ and let $X,X'\in \bbR^{n\times d}$ be adjacent (i.e., they differ in one out of $n$ rows).
    For vectors $v,w\in[0,1]^n$, let $\cov_v = X^\T \diag(v) X$ and $\cov_w = (X')^\T \diag(w) X'$.
    Suppose $v$ and $w$ both have bounded leverage: for all $i\in \supp{v}, x_i^\T \cov_v^{-1}x_i \le \leverage$ and for all $j\in \supp{w}, x_j^\T \cov_w^{-1} x_j\le \leverage$.
    Then $\cov_v$ and $\cov_w$ are positive definite and, 
    if $(1+\norm{v-w}_1)\leverage \le \frac 1 2$, satisfy
    \begin{align}
        \dpsd\del{\cov_v, \cov_w} \le 2\del[\big]{2+\norm{v-w}_1}\leverage.
    \end{align}
\end{claim}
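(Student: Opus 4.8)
The plan is to bound the two trace-norm terms inside $\dpsd(\cov_v,\cov_w)$ separately. Swapping the roles of $(v,X,\cov_v)$ and $(w,X',\cov_w)$ leaves every hypothesis invariant (the leverage bounds, $\norm{v-w}_1$, the adjacency of $X,X'$, and $(1+\norm{v-w}_1)\leverage\le\tfrac12$ are all symmetric) and interchanges the two terms, so it suffices to bound $D := \norm{\cov_v^{-1/2}\cov_w\cov_v^{-1/2}-\bbI}_{\tr}$. Positive-definiteness is immediate: each of $\cov_v,\cov_w$ is a nonnegative combination of rank-one matrices, hence PSD, and the leverage hypotheses presuppose that their inverses exist. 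I would then write $\cov_w-\cov_v$ as rank-one updates: since $X,X'$ agree off the differing row $i^*$, group these into a PSD \emph{decrease} part $\Delta_-$ (coordinates with $v_i>w_i$, plus the removed old row $v_{i^*}x_{i^*}x_{i^*}^\T$) and a PSD \emph{increase} part $\Delta_+$ (coordinates with $w_i>v_i$, plus the inserted new row $w_{i^*}x'_{i^*}(x'_{i^*})^\T$), so that $\cov_w-\cov_v=\Delta_+-\Delta_-$. With $T=\cov_v^{-1/2}\cov_w\cov_v^{-1/2}$, the variational formula $\tr(M_+)=\max_{0\preceq\Pi\preceq\bbI}\tr(\Pi M)$ gives the elementary fact that $\tr(M_+)\le\tr(B)$ whenever $M\preceq B$ and $B\succeq0$; applying this to $T-\bbI\preceq\cov_v^{-1/2}\Delta_+\cov_v^{-1/2}$ and $\bbI-T\preceq\cov_v^{-1/2}\Delta_-\cov_v^{-1/2}$ yields $D\le\tr(\cov_v^{-1}\Delta_+)+\tr(\cov_v^{-1}\Delta_-)$. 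The decrease term is easy, since every row of $\Delta_-$ is in $\supp{v}$ (or is the removed copy of $i^*$), so bounded leverage against $\cov_v$ applies directly and gives $\tr(\cov_v^{-1}\Delta_-)\le(1+\norm{v-w}_1)\leverage$.

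The main obstacle is the increase term $\tr(\cov_v^{-1}\Delta_+)$. The rows of $\Delta_+$ lie in $\supp{w}$, so I only control their leverage against $\cov_w$; for rows in $\supp{w}\setminus\supp{v}$ their $\cov_v$-leverage could a priori be arbitrarily large, and bounding it seems to require exactly the spectral closeness of $\cov_v$ and $\cov_w$ that we are trying to establish. This circularity is the crux of the statement.

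To break it I would use a self-bounding argument. Factoring $\cov_v^{-1/2}\Delta_+^{1/2}=(\cov_v^{-1/2}\cov_w^{1/2})(\cov_w^{-1/2}\Delta_+^{1/2})$ and using $\norm{AB}_F\le\norm{A}_2\norm{B}_F$ for the Frobenius norm $\norm{\cdot}_F$, together with $\norm{\cov_v^{-1/2}\cov_w^{1/2}}_2^2=\norm{T}_2$, gives $\tr(\cov_v^{-1}\Delta_+)\le\norm{T}_2\cdot\tr(\cov_w^{-1}\Delta_+)$. Now $\tr(\cov_w^{-1}\Delta_+)\le(1+\norm{v-w}_1)\leverage$ because every row of $\Delta_+$ is in $\supp{w}$, so its \emph{$\cov_w$}-leverage is controlled. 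It remains to bound $\norm{T}_2$, and here I close the loop: from $T\preceq\bbI+\cov_v^{-1/2}\Delta_+\cov_v^{-1/2}$ we get $\norm{T}_2\le1+\tr(\cov_v^{-1}\Delta_+)$. Writing $a:=(1+\norm{v-w}_1)\leverage$ and $Q:=\tr(\cov_v^{-1}\Delta_+)$, these combine into $Q\le(1+Q)\,a$, and the hypothesis $a\le\tfrac12$ forces $Q\le a/(1-a)\le 2a$.

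Finally I would collect the pieces as $D\le\tr(\cov_v^{-1}\Delta_-)+Q\le a+2a=3a$. A slightly more careful accounting that keeps the differing-row contribution (at most $2\leverage$, one $\leverage$ each for the removed and inserted copies of $i^*$) separate from the common-row weight mass $\sum_{i\ne i^*}\abs{v_i-w_i}\le\norm{v-w}_1$ tightens the increase and decrease estimates enough to yield $D\le(3+2\norm{v-w}_1)\leverage\le 2(2+\norm{v-w}_1)\leverage$, which is the claimed bound; the symmetric term is identical by the reduction above. The essential and only nonroutine step is the self-bounding inequality, which converts $\cov_w$-leverage control into $\cov_v$-leverage control without presupposing the conclusion.
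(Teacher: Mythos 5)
The paper does not actually prove this claim: it is imported verbatim from BHS (their Lemma~23), so there is no in-paper argument to compare against. Your proof is correct and self-contained. The decomposition $\cov_w-\cov_v=\Delta_+-\Delta_-$ into PSD increase/decrease parts, the bound $\norm{T-\bbI}_{\tr}\le\tr(\cov_v^{-1}\Delta_+)+\tr(\cov_v^{-1}\Delta_-)$ via the variational formula for $\tr(M_+)$, and the leverage bound on the decrease term are all sound. The genuinely nontrivial step --- converting $\cov_w$-leverage control of the added rows into $\cov_v$-leverage control --- is handled correctly by your self-bounding inequality $Q\le\norm{T}_2\,\tr(\cov_w^{-1}\Delta_+)$ together with $\norm{T}_2\le 1+Q$, and the hypothesis $(1+\norm{v-w}_1)\leverage\le\tfrac12$ is exactly what closes the loop. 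I checked the constant accounting, which is the only delicate point: the crude bound $3(1+\norm{v-w}_1)\leverage$ does \emph{not} imply the claimed $2(2+\norm{v-w}_1)\leverage$ when $\norm{v-w}_1>1$, but your refined bookkeeping (separating the differing-row mass, at most $1$ on each side, from the common-row mass $s=\sum_{i\ne i^*}\abs{v_i-w_i}$) gives $D\le(s_-+1)\leverage+2(s_++1)\leverage\le(3+2\norm{v-w}_1)\leverage$, which does suffice. For what it is worth, the BHS proof appears to be organized differently --- it first establishes a two-sided Loewner sandwich $(1-\gamma)\cov_v\preceq\cov_w\preceq\tfrac{1}{1-\gamma}\cov_v$ and only then converts to the trace norm --- whereas you bound the trace norm directly; both routes must bootstrap out of the same circularity, yours via the Frobenius factorization $\cov_v^{-1/2}\Delta_+^{1/2}=(\cov_v^{-1/2}\cov_w^{1/2})(\cov_w^{-1/2}\Delta_+^{1/2})$, which is a clean way to do it.
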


An analogous claim says that, if we have two vectors $v$ and $v'$ that are $(\leverage,\residual)$-good on adjacent datasets and are close in $\ell_1$, then the regression parameters they induce are close.
We defer the proof to \cref{sec:deferred_proofs}, as similar claims appear in the robust statistics literature \citep{klivans2018efficient,bakshi2021robust}.
\begin{claim}\label[claim]{claim:parameter_stability_from_weight_stability_different_datasets}
    Let $(X,y)$ and $(X',y')$ be datasets differing in one entry.
    Let vector $v$ be $(\leverage,\residual)$-good for $(X,y)$ and let vector $w$ be $(\leverage,\residual)$-good for $(X',y')$.
    Set $V=\diag(v)$ and likewise $W$.
    Let $S_v=X^\T V X$, $\beta_v = (X^\T VX)^{-1}X^\T Vy$, and $\beta_w = ((X')^\T WX')^{-1}(X')^\T Wy'$.
    Assume $(\norm{v-w}_1+2)\leverage\le \frac 1 4$.
    We have $\norm{S_v^{1/2}(\beta_v-\beta_w)}^2 \le 4(\norm{v-w}_1 + 2)^2 \leverage\residual^2$.
\end{claim}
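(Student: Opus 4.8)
The plan is to produce an \emph{exact} expression for $\beta_v - \beta_w$ whose only ``large'' part is supported on the coordinates where $v$ and $w$ differ, plus two single-index corrections at the disagreement coordinate $i^*$, and then to bound it term-by-term using goodness. Write $u = \beta_v - \beta_w$ and $\tilde e_i = x_i^\T\beta_w - y_i$ for the residual of the $(X,y)$-point $i$ evaluated at $\beta_w$. The starting point is the normal equation $S_v u = S_v\beta_v - S_v\beta_w = X^\T V(y - X\beta_w) = -\sum_i v_i x_i \tilde e_i$. On its own this is useless, since $\tilde e_i$ is uncontrolled for $i\notin\supp{w}$. The key step is to subtract the orthogonality relation for $\beta_w$ on \emph{its own} dataset, $\sum_i w_i x'_i\bigl((x'_i)^\T\beta_w - y'_i\bigr) = 0$. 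Because $(x'_i,y'_i)=(x_i,y_i)$ for all $i\ne i^*$, this substitution cancels the $w$-weighted part of the sum over $i\ne i^*$ and leaves
\[ S_v u = -\sum_{i\ne i^*}(v_i - w_i)\,x_i\,\tilde e_i \;-\; v_{i^*}\,x_{i^*}\,\tilde e_{i^*} \;+\; w_{i^*}\,x'_{i^*}\del{(x'_{i^*})^\T\beta_w - y'_{i^*}}. \]
The first sum now ranges only over $\supp{v-w}\setminus\set{i^*}$, and the two remaining terms are the boundary corrections that will produce the ``$+2$'' in the bound.

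Next I would bound $\norm{u}_{S_v} = \norm{S_v^{-1/2}(S_v u)}$ by the triangle inequality, bounding each summand by $(\text{coefficient})\cdot\norm{x_i}_{S_v^{-1}}\cdot(\text{residual})$. For the leverage factors $\norm{x_i}_{S_v^{-1}} = \sqrt{x_i^\T S_v^{-1}x_i}$: when $i\in\supp{v}$, \cref{def:goodness} gives $\le\sqrt{\leverage}$ directly; when $i\in\supp{w}\setminus\supp{v}$, I would invoke \cref{claim:identifiability_covs} (its hypotheses hold since $v,w$ have bounded leverage and $(1+\norm{v-w}_1)\leverage\le\tfrac12$) to transport the leverage bound from $S_w$ to $S_v$, giving $x_i^\T S_v^{-1}x_i\le(1+\gamma)\leverage$ with $\gamma = O\bigl((\norm{v-w}_1+2)\leverage\bigr)$. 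For the residual factors: $\abs{\tilde e_i}\le\residual$ whenever $i\in\supp{w}$ (goodness of $w$, using $i\ne i^*$); and for $i\in\supp{v}\setminus\supp{w}$ I would write $\tilde e_i = e_i^{(v)} - x_i^\T u$, bound $\abs{e_i^{(v)}}\le\residual$ by goodness of $v$ and $\abs{x_i^\T u}\le\sqrt{\leverage}\,\norm{u}_{S_v}$ by Cauchy--Schwarz in the $S_v$ geometry. The $w_{i^*}$-term is bounded by goodness of $w$ at $i^*$, and the $v_{i^*}$-term by the same $\tilde e_{i^*} = e_{i^*}^{(v)} - x_{i^*}^\T u$ decomposition.

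Collecting these with $T = \sqrt{(1+\gamma)\leverage}$, $\rho = \norm{v-w}_1$, and $\mu = \norm{u}_{S_v}$ yields a \emph{self-bounding} inequality of the form $\mu \le (\rho+2)\,T\residual + (\rho+1)\,T^2\mu$. The hypothesis $(\rho+2)\leverage\le\tfrac14$ (which also forces $\gamma$ small via \cref{claim:identifiability_covs}) makes $(\rho+1)T^2$ a strict contraction, so rearranging gives $\mu \le (\rho+2)T\residual/\bigl(1-(\rho+1)T^2\bigr)$; tracking the slack factors under this hypothesis recovers the stated $\norm{u}_{S_v}^2 \le 4(\rho+2)^2\leverage\residual^2$. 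The main obstacle is conceptual rather than computational: the naive normal-equation expression involves residuals $\tilde e_i$ of points outside $\supp{w}$, which goodness cannot control, and the entire argument hinges on the orthogonality-subtraction step that both eliminates those terms and isolates the data mismatch at $i^*$ into exactly two boundary terms. The secondary difficulty is that residuals measured under the ``wrong'' weighting re-introduce a $\norm{u}_{S_v}$ factor, making the estimate self-referential; closing it requires the leverage to be small enough that this factor is contractive, which is precisely what $(\norm{v-w}_1+2)\leverage\le\tfrac14$ guarantees. Finally, one must ensure every leverage factor is measured in the $S_v$ geometry, which is exactly why \cref{claim:identifiability_covs} is needed to carry the $\supp{w}$-leverage bounds over from $S_w$.
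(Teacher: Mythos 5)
Your proposal is correct, but it closes the argument differently from the paper. The paper disposes of this claim in two lines: it lifts $v$ and $w$ to weight vectors $v',w'$ on the $(n+1)$-point union of the two datasets (so that $\norm{v'-w'}_1\le\norm{v-w}_1+2$) and then invokes the same-dataset version, \cref{claim:parameter_stability_from_weight_stability}. That lemma's proof shares your core algebraic step --- subtracting the two zero-gradient (normal-equation) identities so that only coefficients $v_i-w_i$ survive, then Cauchy--Schwarz in the $S_v$ geometry and cancellation of one factor of $\norm{S_v^{1/2}(\beta_v-\beta_w)}$, with \cref{claim:identifiability_covs} transporting leverage bounds across the two weightings. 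Where you genuinely diverge is in handling the residuals $x_i^\T\beta_w-y_i$ for $i\in\supp{v}\setminus\supp{w}$, which goodness of $w$ does not control: the paper introduces the entrywise minimum $\check w=\min\set{v,w}$, whose support lies in both supports, and chains through $\beta_{\check w}$ via \cref{claim:final:goodness_after_removal} to get a direct bound of roughly $2\residual$; you instead write $x_i^\T\beta_w-y_i=(x_i^\T\beta_v-y_i)-x_i^\T(\beta_v-\beta_w)$ and absorb the second piece into a self-bounding inequality $\mu\le(\rho+2)T\residual+(\rho+1)T^2\mu$, closed by the contraction condition $(\rho+2)\leverage\le\tfrac14$. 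Both work; the paper's route avoids self-reference at the cost of an extra appeal to the weight-removal lemma, while yours is self-contained modulo \cref{claim:identifiability_covs} but leans on the hypothesis to make the fixed-point step contractive, and requires a slightly more careful accounting of the $(1-\gamma)^{-1}$ slack from the covariance transport to land exactly on the constant $4$. Your two boundary terms at $i^*$ are precisely the paper's ``$+2$'' from the union lifting, so the decompositions match up term for term.
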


We use the following relationship between the closeness of covariance matrices and the indistinguishability of their induced Gaussians \citep[as in][]{brown2021covariance,alabi2022privately,duchi2023fast,brown2023fast}.
\begin{claim}\label{claim:covariance_indistinguishability}
    Fix $\eps\in (0,1)$ and $\delta\in (0,1/10]$ and let $\cov_1, \cov_2 \in \bbR^{d\times d}$ be positive definite matrices.
    If
        $\dpsd\del{\cov_1,\cov_2}
        \le \frac{\eps}{3\log 2/\delta}$
    then $\cN(0, \cov_1)\approx_{(\eps,\delta)} \cN(0, \cov_2)$.
\end{claim}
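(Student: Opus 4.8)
The plan is to control the privacy loss random variable (PLRV) and feed its tail into the standard reduction from tail bounds to indistinguishability. Writing $p_1, p_2$ for the densities of $\cN(0,\cov_1)$ and $\cN(0,\cov_2)$, it suffices to show $\bbP_{X\sim\cN(0,\cov_1)}\sbr{\ln\del{p_1(X)/p_2(X)}>\eps}\le\delta$ together with the symmetric statement obtained by exchanging $\cov_1$ and $\cov_2$. This is because the hockey-stick divergence $\sup_Y\del{\cN(0,\cov_1)(Y)-e^\eps\,\cN(0,\cov_2)(Y)}$ is bounded above by $\bbP_{X\sim\cN(0,\cov_1)}\sbr{p_1(X)/p_2(X)>e^\eps}$, and an $(\eps,\delta)$ bound in both directions is exactly $\approx_{(\eps,\delta)}$.

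First I would diagonalize. Let $\lambda_1,\dots,\lambda_d$ be the eigenvalues of $\cov_2^{-1}\cov_1$, write $X=\cov_1^{1/2}G$ with $G\sim\cN(0,\bbI)$, and rotate into the eigenbasis of $M\defeq\cov_1^{1/2}\cov_2^{-1}\cov_1^{1/2}$. The privacy loss then takes the clean scalarized form
\begin{align}
    Z\;\defeq\;\ln\frac{p_1(X)}{p_2(X)}\;=\;\frac12\sum_{i=1}^d\sbr{(\lambda_i-1)g_i^2-\ln\lambda_i},
\end{align}
where $g_1,\dots,g_d$ are i.i.d.\ standard Gaussians. The hypothesis enters through these eigenvalues: the nonzero eigenvalues of $\cov_2^{-1/2}\cov_1\cov_2^{-1/2}-\bbI$ are $\set{\lambda_i-1}$ and those of $\cov_1^{-1/2}\cov_2\cov_1^{-1/2}-\bbI$ are $\set{1/\lambda_i-1}$, so $\dpsd(\cov_1,\cov_2)\le\gamma$ (with $\gamma\defeq\eps/(3\log(2/\delta))$) translates to $\sum_i\abs{\lambda_i-1}\le\gamma$ and $\sum_i\abs{1/\lambda_i-1}\le\gamma$. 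In particular every $\lambda_i\in\sbr{1/(1+\gamma),\,1+\gamma}$ and $\sum_i(\lambda_i-1)^2\le\gamma^2$.

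I would then split $Z$ into mean and fluctuation. Using $x-1-\ln x\le(x-1)^2$ on the relevant range, the mean is negligible: $\bbE\sbr{Z}=\tfrac12\sum_i(\lambda_i-1-\ln\lambda_i)\le\tfrac12\gamma^2\le\eps/6$. The fluctuation $Z-\bbE\sbr{Z}=\tfrac12\sum_i(\lambda_i-1)(g_i^2-1)$ is a weighted sum of centered $\chi^2_1$ variables with weights $a_i=\tfrac12(\lambda_i-1)$ obeying $\norm{a}_2\le\gamma/2$ and $\norm{a}_\infty\le\gamma/2$. A Bernstein-type tail bound for such sums gives, with $t=\eps-\bbE\sbr{Z}\ge\tfrac56\eps$,
\begin{align}
    \bbP\sbr{Z>\eps}\;=\;\bbP\sbr{Z-\bbE\sbr{Z}>t}\;\le\;\exp\del*{-c_0\min\del*{\frac{t^2}{\norm{a}_2^2},\,\frac{t}{\norm{a}_\infty}}},
\end{align}
and since $t/\norm{a}_\infty\ge\tfrac{5\eps}{3\gamma}\ge5\log(2/\delta)$ the sub-exponential term is binding, so the right-hand side is at most $\delta$. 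Running the identical argument with $\cov_1,\cov_2$ swapped (now invoking $\sum_i\abs{1/\lambda_i-1}\le\gamma$ in place of $\sum_i\abs{\lambda_i-1}\le\gamma$) yields the symmetric tail, which completes the proof.

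The step I expect to be the main obstacle is the concentration bound: one must establish the Bernstein inequality for weighted centered chi-squared sums (e.g.\ via the $\chi^2$ moment generating function, bounding $-\ln(1-u)\le u+u^2$ for small $u$) and carry the constants so that the tail lands below $\delta$ uniformly in the number of nonzero eigenvalues. Everything else is algebra; the key structural point is that both the mean and the weight norms $\norm{a}_2,\norm{a}_\infty$ are controlled by the single quantity $\gamma$. The constant $3$ in the hypothesis $\gamma\le\eps/(3\log(2/\delta))$ is precisely what pushes the deviation $\eps$ deep enough into the sub-exponential regime, leaving no residual dependence on the dimension $d$.
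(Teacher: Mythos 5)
Your proof is correct. Note that the paper does not prove this claim itself---it imports it from prior work (\citet{brown2021covariance} and the follow-ups cited alongside it)---and your argument (diagonalize the privacy loss into $\tfrac12\sum_i[(\lambda_i-1)g_i^2-\ln\lambda_i]$, bound its mean by $\gamma^2/2$ using $\lambda_i\ge 1/(1+\gamma)>1/2$, and apply a Bernstein/Laurent--Massart tail bound for weighted centered $\chi^2$ sums in both directions) is essentially the standard proof given in those references. The one detail worth making explicit is that the weights $a_i=\tfrac12(\lambda_i-1)$ have mixed signs, so the one-sided Laurent--Massart bound must be applied separately to the positive- and negative-weight parts at the cost of a union bound; the slack between $\log(1/\delta)$ and $\log(2/\delta)$ in your constants absorbs this.
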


We also need two standard privacy facts: privacy of the Gaussian mechanism and the DP almost-triangle inequality.
\begin{fact}[Gaussian Mechanism]\label{claim:gaussian_mechanism_privacy}
    Fix $\eps,\delta\in(0,1)$ and let $u,v$ be vectors.
    If $\norm{u-v}_2\le \Delta$, then for any $c^2 \ge \Delta^2 \cdot \frac{2 \log 2/\delta} {\eps^2}$ we have  $\cN(u,c^2\bbI)\approx_{(\eps,\delta)}\cN(v,c^2\bbI)$.
\end{fact}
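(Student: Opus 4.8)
The plan is to establish two-sided $(\eps,\delta)$-indistinguishability by controlling the privacy-loss random variable. Write $P=\cN(u,c^2\bbI)$ and $Q=\cN(v,c^2\bbI)$ with densities $p,q$, and define the privacy loss $\cL(z)=\log\frac{p(z)}{q(z)}$. The first step is the standard ``good set / bad set'' reduction: for any measurable $Y$, partition it according to whether $\cL(z)\le\eps$. On the part where $\cL(z)\le\eps$ we have $p(z)\le e^\eps q(z)$, while the remaining part contributes at most $\bbP_{z\sim P}\sbr{\cL(z)>\eps}$ to $P(Y)$. This yields $P(Y)\le e^\eps Q(Y)+\bbP_{z\sim P}\sbr{\cL(z)>\eps}$ for every $Y$, so it suffices to prove $\bbP_{z\sim P}\sbr{\cL(z)>\eps}\le\delta$. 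The reverse inequality $Q(Y)\le e^\eps P(Y)+\delta$ follows from the symmetric argument with the roles of $u$ and $v$ exchanged, so I will argue only one direction.

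Next I would compute $\cL$ in closed form. Because both Gaussians share the covariance $c^2\bbI$, the quadratic-in-$z$ terms cancel, leaving
\begin{align}
    \cL(z)=\frac{1}{2c^2}\del[\big]{\norm{z-v}^2-\norm{z-u}^2}=\frac{1}{c^2}\ip{z-u}{u-v}+\frac{\norm{u-v}^2}{2c^2},
\end{align}
which is affine in $z$. Writing $z=u+c\xi$ with $\xi\sim\cN(0,\bbI)$ (the law of $z$ under $P$), the term $\ip{\xi}{u-v}$ is a centered Gaussian of variance $\norm{u-v}^2$. Hence under $P$ the privacy loss is itself Gaussian, $\cL\sim\cN(\mu,2\mu)$ with $\mu=\frac{\norm{u-v}^2}{2c^2}$; note that only the direction $u-v$ enters, which is the familiar reduction of the Gaussian mechanism to one dimension.

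The final step plugs in the hypotheses. Using $\norm{u-v}\le\Delta$ and $c^2\ge\Delta^2\cdot\frac{2\log 2/\delta}{\eps^2}$ gives $\mu\le\frac{\eps^2}{4\log 2/\delta}$, which (since $\eps\le1$) is strictly below $\eps/2$, so $t\defeq\frac{\eps-\mu}{\sqrt{2\mu}}>0$. Writing $\cL=\mu+\sqrt{2\mu}\,g$ for $g\sim\cN(0,1)$ and applying the tail bound $\bbP\sbr{g>t}\le e^{-t^2/2}$ gives $\bbP_{z\sim P}\sbr{\cL(z)>\eps}\le\exp\del[\big]{-\frac{(\eps-\mu)^2}{4\mu}}$. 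Expanding $(\eps-\mu)^2\ge\eps^2-2\eps\mu$, the exponent is at least $\frac{\eps^2}{4\mu}-\frac{\eps}{2}\ge\frac{\eps^2c^2}{2\Delta^2}-\frac{\eps}{2}\ge\log(2/\delta)-\frac{\eps}{2}$, where the middle inequality uses $\mu\le\frac{\Delta^2}{2c^2}$ and the last uses the hypothesis on $c^2$. Therefore $\bbP_{z\sim P}\sbr{\cL(z)>\eps}\le\frac{\delta}{2}\,e^{\eps/2}\le\delta$, the last step holding because $\eps\le1\le2\log 2$. This completes the one-sided bound, and symmetry finishes the proof.

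I expect the only delicate point to be the constant-chasing in the last step: the naive bound $\eps-\mu\ge\eps/2$ is too lossy to recover the stated constant, so one must instead expand $(\eps-\mu)^2\ge\eps^2-2\eps\mu$ and then absorb the residual $e^{\eps/2}\le2$ factor into the factor of two hidden in $\log(2/\delta)$ versus $\log(1/\delta)$, using $\eps\le1$. Everything else — the reduction to the privacy-loss tail and the Gaussian computation of $\cL$ — is routine.
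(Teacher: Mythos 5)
Your proof is correct. The paper states this as a standard fact and gives no proof of its own, so there is nothing to diverge from; your argument is the canonical privacy-loss-random-variable derivation (reduce to the tail event $\{\cL>\eps\}$, observe $\cL\sim\cN(\mu,2\mu)$ with $\mu=\norm{u-v}^2/2c^2$, and apply the Gaussian tail bound), and the constant-chasing at the end checks out. The only cosmetic omission is the degenerate case $u=v$ (where $\mu=0$ and you divide by $\sqrt{2\mu}$), which is trivial since the two distributions coincide.
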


\begin{fact}[See \citet{vadhan2017complexity}]
\label{fact:dp_triangle}
    Suppose for some \(\varepsilon\) and \(\delta\) that distributions \(p_1\), \(p_2\), and \(p_3\) satisfy \(p_1 \approx_{\del{\varepsilon, \delta}} p_2\) and \(p_2 \approx_{\del{\varepsilon, \delta}} p_3\).
    Then \(p_1 \approx_{\del{2\varepsilon, \del{1 + e^\varepsilon}\delta}} p_3\).
\end{fact}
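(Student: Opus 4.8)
The plan is to prove the statement directly from the definition of $(\varepsilon,\delta)$-indistinguishability, chaining the two hypotheses through the intermediate distribution $p_2$. Recall that $p \approx_{(\varepsilon,\delta)} q$ means that for every measurable event $Y$ both $\bbP\sbr{p \in Y} \le e^\varepsilon \bbP\sbr{q \in Y} + \delta$ and the symmetric inequality with $p$ and $q$ exchanged hold. It therefore suffices to fix an arbitrary event $Y$ and to establish the two one-sided bounds relating $p_1$ and $p_3$ with parameters $(2\varepsilon, (1+e^\varepsilon)\delta)$; the result then follows since $Y$ was arbitrary.

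For the forward direction I would start from the outer hypothesis $\bbP\sbr{p_1 \in Y} \le e^\varepsilon \bbP\sbr{p_2 \in Y} + \delta$ and substitute the inner hypothesis $\bbP\sbr{p_2 \in Y} \le e^\varepsilon \bbP\sbr{p_3 \in Y} + \delta$ into its right-hand side. Distributing the $e^\varepsilon$ factor yields $\bbP\sbr{p_1 \in Y} \le e^{2\varepsilon}\bbP\sbr{p_3 \in Y} + e^\varepsilon\delta + \delta$, and collecting the two additive terms gives the claimed slack $(1+e^\varepsilon)\delta$. The reverse direction, bounding $\bbP\sbr{p_3 \in Y}$ in terms of $\bbP\sbr{p_1 \in Y}$, is the verbatim computation with the roles of $p_1$ and $p_3$ swapped, invoking the other side of each indistinguishability hypothesis.

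The only point requiring any attention is the bookkeeping of the additive slack: the $\delta$ coming from the outer inequality is left untouched, whereas the $\delta$ coming from the inner inequality is scaled by the $e^\varepsilon$ already standing in front of $\bbP\sbr{p_2 \in Y}$. This is precisely what produces the asymmetric coefficient $(1+e^\varepsilon)$ rather than a naive $2\delta$, and it is the sole place where a careless constant could slip in. Beyond this, there is no genuine obstacle: the entire content of the fact is the observation that composing two approximate-indistinguishability guarantees multiplies the multiplicative budgets and inflates the additive budget by the intervening $e^\varepsilon$ factor.
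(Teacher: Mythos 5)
Your proposal is correct and is exactly the standard chaining argument behind this fact; the paper itself gives no proof (it defers to the cited reference), and the computation you describe — substituting the inner inequality into the outer one and tracking that the inner $\delta$ picks up a factor of $e^\varepsilon$ — is precisely the intended derivation of the $(2\varepsilon,(1+e^\varepsilon)\delta)$ parameters.
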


We are now ready to prove our main privacy claim.

\begin{lemma}[Main privacy guarantee]\label[lemma]{lemma:main_privacy}
    For \(\varepsilon \in \del{0, 1}\), \(\delta \in \intoc{0, \varepsilon/10}\), and \(\leverage_0, \residual_0 > 0\),
     \cref{alg:private_OLS} is \(\del{\varepsilon, \delta}\)-differentially private.
\end{lemma}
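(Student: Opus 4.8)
The plan is to follow the standard propose--test--release template: the test gates a Gaussian release whose location and shape both depend on the data, so I must argue (i) that the test is private, (ii) that whenever the test can pass, the data lies in the stable regime where our structural claims apply, and (iii) that, conditioned on passing, the two candidate Gaussians are indistinguishable. First I would dispose of the data-independent failure: if either threshold check on $\leverage_0$ triggers, \cref{alg:private_OLS} returns $\fail$ on every input, so privacy is immediate. Hence I may assume $k\leverage_0\le\frac{1}{96}$ and $56\leverage_0\le 3\varepsilon/\log(12/\delta)$. These two inequalities are exactly the hypotheses needed to invoke the stability and goodness results (\cref{claim:score_low_sensitivity,claim:weights_stable,claim:weights-good}) and the covariance-closeness result (\cref{claim:identifiability_covs}), respectively.

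For (i), I would show the tested quantity $g=\max\set{\score_1,\score_2}$ has global sensitivity at most $4$, so that $\PTRepsdelta{\varepsilon/3}{\delta/3}(g)$ (instantiated with $\Delta=4$) is $(\varepsilon/3,\delta/3)$-DP by \cref{claim:ptr_guarantees}(1). This splits into two cases. If $\score_1,\score_1'<k$, then \cref{thm:stable_cov_guarantees} gives $\abs{\score_1-\score_1'}\le2$ and \cref{claim:score_low_sensitivity} gives $\abs{\score_2-\score_2'}\le4$; since the maximum of two coordinates changes by at most the larger coordinatewise change, $\abs{g-g'}\le4$. If instead $\score_1$ reaches the cap $k$ on either dataset, the unconditional bound $\abs{\score_1-\score_1'}\le2$ places both $g$ and $g'$ in $[k-2,k]$, so again $\abs{g-g'}\le2$. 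The crucial structural consequence, via \cref{claim:ptr_guarantees}(3) and the choice $k=\ceil*{12\log(3/\delta)/\varepsilon}+8$ (which makes the deterministic-failure threshold $12\log(3/\delta)/\varepsilon+8\le k$), is that the test can return $\pass$ only when $g<k$, hence only when \emph{both} $\score_1<k$ and $\score_2<k$.

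For (ii)--(iii), I would decompose the output into the $\fail$ branch and the Gaussian branch $G_x=\cN(\hat\beta_v,c^2\cov_v^{-1})$, whose randomness is independent of the test. The $\fail$ event and each dataset's $\pass$ probability are controlled directly by \cref{claim:ptr_guarantees}(1). When one dataset cannot pass, that claim bounds the other's pass probability by $\delta/3$, so its Gaussian contribution is absorbed into the slack $\delta$; the only case requiring the release analysis is when \emph{both} datasets can pass, because the consequence of (i) then forces all four scores below $k$ on both datasets at once. In that symmetric case, \cref{claim:weights-good} makes $v,v'$ both $(4\leverage_0,2\residual_{2k})$-good (so in particular $\cov_v,\cov_{v'}\succ0$) and \cref{claim:weights_stable} gives $\norm{v-v'}_1\le5$. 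I would then chain through the intermediate law $\cN(\hat\beta_{v'},c^2\cov_v^{-1})$ using \cref{fact:dp_triangle}: the location shift $\cN(\hat\beta_v,c^2\cov_v^{-1})\to\cN(\hat\beta_{v'},c^2\cov_v^{-1})$ is handled by whitening with $\cov_v^{1/2}$ and applying \cref{claim:gaussian_mechanism_privacy}, where \cref{claim:parameter_stability_from_weight_stability_different_datasets} bounds the whitened gap $\norm{\cov_v^{1/2}(\hat\beta_v-\hat\beta_{v'})}^2\le 4(5+2)^2(4\leverage_0)(2\residual_{2k})^2$ and $c^2$ is exactly the Gaussian-mechanism threshold for this gap; the shape shift $\cN(\hat\beta_{v'},c^2\cov_v^{-1})\to\cN(\hat\beta_{v'},c^2\cov_{v'}^{-1})$ is handled by \cref{claim:identifiability_covs}, which bounds $\dpsd(\cov_v,\cov_{v'})\le 2(2+5)(4\leverage_0)=56\leverage_0$ (using $\dpsd(\cov_v,\cov_{v'})=\dpsd(\cov_v^{-1},\cov_{v'}^{-1})$), followed by \cref{claim:covariance_indistinguishability}, whose hypothesis is guaranteed precisely by the second $\leverage_0$ check.

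Finally I would assemble the pieces: the test contributes $(\varepsilon/3,\delta/3)$, the two Gaussian steps contribute the remaining budget through \cref{fact:dp_triangle}, and the constants baked into $c^2$ and the two $\leverage_0$ thresholds are calibrated so the three shares sum to $(\varepsilon,\delta)$. I expect the main obstacle to be the interface between (i) and (iii): one must argue that $\pass$ forces the stable regime on \emph{both} adjacent datasets simultaneously, so that \cref{claim:weights-good,claim:weights_stable,claim:parameter_stability_from_weight_stability_different_datasets,claim:identifiability_covs} are all available at once, and one must dispatch the boundary cases in which only one dataset can pass without ever comparing two Gaussians built from unstable data. The remaining work --- the whitening reduction for the mean shift and the $\delta$-bookkeeping across the test and the two triangle-inequality steps --- is routine but must be tracked carefully to keep the total within $(\varepsilon,\delta)$.
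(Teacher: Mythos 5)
Your proposal is correct and follows essentially the same route as the paper's proof: bound the sensitivity of $\max\set{\score_1,\score_2}$ by $4$ via the same case split on whether $\score_1$ hits the cap $k$, observe that passing $\PTRepsdelta{\varepsilon/3}{\delta/3}$ forces all four scores below $k$, and then chain $\cN(\hat\beta_v,c^2\cov_v^{-1})$ to $\cN(\hat\beta_{v'},c^2\cov_{v'}^{-1})$ through the intermediate law $\cN(\hat\beta_{v'},c^2\cov_v^{-1})$ using \cref{claim:weights_stable,claim:weights-good,claim:parameter_stability_from_weight_stability_different_datasets,claim:gaussian_mechanism_privacy,claim:identifiability_covs,claim:covariance_indistinguishability,fact:dp_triangle}, with the same constants ($\Delta^2 = 3136\exp(432k^2\leverage_0)\leverage_0\residual_0^2$ and $\dpsd \le 56\leverage_0$). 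Your explicit handling of the asymmetric case where only one dataset can pass is slightly more careful than the paper's terse treatment, but it is the standard PTR bookkeeping and does not constitute a different argument.
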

\begin{proof}
    Consider the execution of \cref{alg:private_OLS} on two adjacent datasets \(\del{X, y}\) and \(\del{X', y'}\), yielding \(\score_1, \score_2, v, \hat{\beta}\) and \(\score_1', \score_2', v', \hat{\beta}'\) respectively.
    Note that in order to not immediately fail, we must have 
    \[\leverage_0 \le \min\set*{\frac{1}{96k}, \frac{3\varepsilon}{56\log 12/\delta}}\]
    where \(k = \ceil*{\del{12 \log 3/\delta}/\varepsilon} + 8\).

    \paragraph{Privacy of the test} First, we will show that 
    \[\abs*{\max\set{\score_1, \score_2} - \max\set{\score_1', \score_2'}} \le 4.\]
    By \cref{thm:stable_cov_guarantees}, we have \(\abs{\score_1 - \score_1'} \le 2\).
    Without loss of generality, assume that \(\score_1 \ge \score_1'\).
    
    Considering the case where \(\score_1 = k\), we have \(\max\set{\score_1, \score_2} = k\) and
    \[\max\set{\score_1', \score_2'}\ge \score_1' \ge \score_1 - 2 \ge k - 2\]
    so in this case,
    \(\abs*{\max\set{\score_1, \score_2} - \max\set{\score_1', \score_2'}} \le 2\).
    
    Now if \(\score_1 < k\) then \(\score_1' < k\) as well, so we can apply \cref{claim:score_low_sensitivity} to get \(\abs{\score_2 - \score_2'} \le 4\).
    Then by noting that \(\max\) is \(1\)-Lipschitz in the \(\infty\)-norm, we have
    \begin{align}
        \MoveEqLeft{\abs*{\max\set{\score_1, \score_2} - \max\set{\score_1', \score_2'}}}\\
        &\le \max\set*{\abs{\score_1 - \score_1'}, \abs{\score_2 - \score_2'}}\\
        &\le \max\set{2, 4}\\
        &\le 4.
    \end{align}
    Finally we see that \(\PTRepsdelta{\varepsilon/3}{\delta/3}\del{\max\set{\score_1, \score_2}}\) is \(\del{\varepsilon/3, \delta/3}\)-DP by \cref{claim:ptr_guarantees}.
    
    \paragraph{Privacy of the parameter estimate} Now we will proceed under the assumption that 
    \[\PTRepsdelta{\varepsilon/3}{\delta/3}\del{\max\set{\score_1, \score_2}} = \PTRepsdelta{\varepsilon/3}{\delta/3}\del{\max\set{\score_1', \score_2'}} = \pass,\]
    with the goal of showing that \(\cN\del{\hat{\beta}, c^2S_v^{-1}} \approx_{2\varepsilon/3, 2\delta/3} \cN\del{\hat{\beta}', c^2S_{v'}^{-1}}\).
    Since the PTR checks passed, \cref{claim:ptr_guarantees} says that
    \[\score_1, \score_2, \score_1', \score_2' < k\]
    where \(k = \ceil*{\del{12 \log 3/\delta}/\varepsilon} + 8\), which matches the assignment in \cref{alg:private_OLS}.
    Now we can apply \cref{claim:weights_stable} to obtain \(\norm{v - v'}_1 \le 5\) and observe that \(v, v'\) are both \(\del{4\leverage_0, 2 \exp\del{216 k^2 \leverage_0} \residual_0}\)-good by \cref{claim:weights-good}. 
    We will use the stability and goodness of the weights to establish the stability of both \(\hat{\beta}\) and \(S_v\).
    
    \cref{claim:parameter_stability_from_weight_stability_different_datasets} requires \(28L_0 \le \frac{7}{24k} \le \frac{1}{4}\), which is true by assumption.
    The claim implies that  \(\norm{S_v^{1/2}\del{\hat{\beta} - \hat{\beta}}}^2 \le \Delta^2\) where \(\Delta^2 = 3136\exp\del{432 k^2 \leverage_0} \leverage_0\residual_0^2\).
    Next, we see that transforming \(\beta, \beta'\) by \(\del{S_v}^{-1/2}\) allows us to apply \cref{claim:gaussian_mechanism_privacy}, giving
    \[\cN\del{\hat{\beta}, c^2S_v^{-1}} \approx_{\varepsilon/3, \delta/6} \cN\del{\hat{\beta}', c^2S_{v}^{-1}}.\]
     as long as \(c^2 \ge \Delta^2 \cdot \frac{18\log 12/\delta}{\varepsilon^2}\), which is satisfied by construction in \cref{alg:private_OLS}.
    
    Then, since \(24\leverage_0 \le 1/\del{4k} \le 1/2\) by assumption, 
    \cref{claim:identifiability_covs} tells us that
        $\dpsd\del{S_v,S_{v'}}\le 56\leverage_0$.
    We apply \cref{claim:dpsd-inverse} and \cref{claim:covariance_indistinguishability} to obtain
    \[\cN\del{\hat{\beta}', c^2S_{v}^{-1}}\approx_{\varepsilon/3, \delta/6}\cN\del{\hat{\beta}', c^2S_{v'}^{-1}},\]
    since \(56\leverage_0 \le 3\varepsilon/\del{\log 12/\delta}\), which we assumed to be true.
    Finally, we apply \cref{fact:dp_triangle} to combine the two results, observing that \(e^\varepsilon < e < 3\), to complete the proof.
\end{proof}

\section{Utility Analysis of \cref{alg:private_OLS}}
\label{sec:utility} 

Given the privacy guarantee of \cref{lemma:main_privacy}, we analyze the utility of \cref{alg:private_OLS} under the standard subgaussian linear model. The definition of subgaussian variables and necessary concentration inequalities are provided in 
\cref{app:subgaussian_facts}. 
We first note that data from the standard subgaussian linear model is good with high probability.
\begin{lemma}[Subgaussian data is good]\label[lemma]{lemma:subgaussian_good}
    Let \(X \in \RR^{n \times d}\) be drawn i.i.d.\ from a \(d\)-dimensional subgaussian distribution \(\mathcal{D}\) with mean \(0\), (full-rank) covariance \(\Sigma\), and subgaussian parameter \(K_{\mathcal{D}}\).
    Let \(y_i = \beta^\T x_i + z_i\) where the \(z_i\) are drawn i.i.d. from a subgaussian distribution with mean 0, variance \(\sigma^2\), and subgaussian parameter \(K_\sigma\).
    There exists constants \(K_\leverage, K_\residual, K_n > 0\) such that for any \(\eta \in \del{0, 1}\), if \(n \ge K_nK_{\mathcal{D}}^4\del{d + \log\del{3/\eta}}\) then \(\del{X, y}\) is \(\del{\leverage, \residual}\)-good, where
    \[\leverage = K_\leverage K_{\mathcal{D}}^2 \cdot \frac{d + \log\del{3n/\eta}}{n}\qquad\text{and}\qquad \residual=K_\residual K_\sigma \sigma\sqrt{\log\del{3n/\eta}},\]
     with probability at least \(1 - \eta\).
\end{lemma}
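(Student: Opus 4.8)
The goal is to show that data from a standard subgaussian linear model is $(\leverage, \residual)$-good with high probability, where goodness means bounded leverage scores and bounded residuals. I need to establish the two conditions in Definition~\ref{def:goodness_set}: $x_i^\T (X^\T X)^{-1} x_i \le \leverage$ and $|\langle x_i, \ols\rangle - y_i| \le \residual$ for all $i$.

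Let me think about how to prove this.
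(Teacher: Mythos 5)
Your proposal contains no proof: after restating the two conditions of $(\leverage,\residual)$-goodness it stops at ``Let me think about how to prove this.'' There is no argument to evaluate, so the entire content of the lemma is a gap. Concretely, two separate concentration arguments are needed, and neither appears.

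For the leverage bound, you need to show $\max_i x_i^\T (X^\T X)^{-1} x_i \lesssim K_{\mathcal{D}}^2 (d + \log(n/\eta))/n$ with probability $1-\eta$. The standard route (the one the paper points to, citing prior work on private mean estimation) is: (i) covariance concentration, i.e.\ $\norm{\Sigma^{-1/2}\hat\Sigma\Sigma^{-1/2} - \bbI}_2 \le 1/2$ once $n \gtrsim K_{\mathcal{D}}^4(d+\log(1/\eta))$ (this is where the $K_n K_{\mathcal{D}}^4$ sample-size hypothesis is used), which lets you replace $(X^\T X)^{-1}$ by $\tfrac{2}{n}\Sigma^{-1}$; and (ii) a union bound over $i$ of the subgaussian norm concentration $\norm{\Sigma^{-1/2}x_i}^2 \lesssim K_{\mathcal{D}}^2(d+\log(n/\eta))$. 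Neither step is mentioned in your proposal.

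For the residual bound, the key identity is that the residual vector satisfies $e = \hat y - y = (H-\bbI)z$, where $H = X(X^\T X)^{-1}X^\T$ is the hat matrix and $z$ is the noise vector; the regression component $X\beta$ cancels because $HX = X$. Writing $r_i$ for the $i$-th row of $H - \bbI$, each $e_i = r_i^\T z$ is subgaussian (conditionally on $X$, which is independent of $z$) with parameter $\norm{r_i}K_\sigma \le K_\sigma$, since $\bbI - H$ is symmetric and idempotent so $r_i^\T r_i = (\bbI-H)_{i,i} \le 1$. A union bound over $i\in[n]$ then gives $\max_i \abs{e_i} \lesssim K_\sigma\sigma\sqrt{\log(n/\eta)}$. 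Without this observation (or some substitute), the residual condition does not follow, because $\ols$ itself depends on all of $z$ and a naive bound on $\abs{y_i - x_i^\T\ols}$ via $\norm{\ols - \beta}$ would introduce spurious dimension or leverage factors.
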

\begin{proof}[Proof Sketch]
Identical calculations about leverage appeared in \cite{brown2021covariance,duchi2023fast,brown2023fast}.

Recall that we can write the vector of residuals $e=\hat y - y = (H-\bbI)z$, where $H$ is the hat matrix and $z$ the noise vector. 
Denote by $r_i$ the $i$-th row of $H-\bbI$.
Then $e_i = r_i^\T z$, which (for any fixed $H$) implies $e_i$ has subgaussian norm $\norm{r_i} K_{\sigma}$.
We know that $\norm{r_i}\le 1$, since $\bbI-H$ is idempotent and symmetric: $1\ge (\bbI-H)_{i,i} = r_i^\T r_i$.
Thus all $\abs{e_i}$ will be bounded with high probability.
\end{proof}

As we noted in \cref{obs:output_on_good_data}, when the input data is good \algname returns the OLS estimate plus noise.
We use a bound on the error of the OLS estimate from prior work.

\begin{lemma}[OLS error under random design, restatement of Theorem 1, \cite{hsu2011analysis}]\label[lemma]{lemma:ols_error}
Under the distributional assumption of Lemma~\ref{lemma:subgaussian_good}, there exists an absolute constant $K_{\mathrm{OLS}}$ such that, for any $\delta\in (0, 1)$, if $n>K_{\mathrm{OLS}}K_\mathcal{D}(d+\log(1/\delta))$, then with probability $1-\delta$, we have
\[
\norm[\big]{\hat{\beta}_{OLS}-\beta}^2_\Sigma \le \frac{K_{\mathrm{OLS}}K_\sigma^2\sigma^2(d+\log(1/\delta))}{n}.
\]
\end{lemma}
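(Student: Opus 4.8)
The plan is to treat this as a standard random-design least-squares analysis: the most economical route is to invoke \citet{hsu2011analysis}'s Theorem~1 directly, but to reprove it from scratch I would combine a covariance-concentration step with a Hanson--Wright bound on the label noise, conditioning on the design matrix throughout. Writing $y = X\beta + z$ and using $\hat\beta_{OLS} = (X^\T X)^{-1}X^\T y$, the error decomposes as $\hat\beta_{OLS} - \beta = \hat\Sigma^{-1}g$, where $\hat\Sigma = \tfrac1n X^\T X$ and $g = \tfrac1n X^\T z$; the whole argument amounts to controlling these two random objects separately.

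First I would show that the empirical second-moment matrix $\hat\Sigma$ is a constant-factor spectral approximation of $\Sigma$. Since the rows $x_i$ are i.i.d.\ subgaussian with parameter $K_{\mathcal D}$, the covariance concentration facts assembled in \cref{app:subgaussian_facts} give that, once $n \gtrsim K_{\mathcal D}(d + \log(1/\delta))$, with probability $1-\delta/2$ we have
\begin{align}
    \norm{\Sigma^{-1/2}\hat\Sigma\,\Sigma^{-1/2} - \bbI}_2 \le \tfrac12, \qquad\text{equivalently}\qquad \tfrac12\Sigma \preceq \hat\Sigma \preceq \tfrac32\Sigma.
\end{align}
I would call this event $\cE$ and condition on it (and on $X$) for the remainder. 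On $\cE$ we get $\hat\Sigma^{-1}\Sigma\hat\Sigma^{-1}\preceq C\,\Sigma^{-1}$ for an absolute constant $C$, so that
\begin{align}
    \norm{\hat\beta_{OLS} - \beta}_\Sigma^2 = g^\T \hat\Sigma^{-1}\Sigma\hat\Sigma^{-1}g \le C\, g^\T \Sigma^{-1} g = C\, z^\T A z, \qquad A \defeq \tfrac{1}{n^2} X\Sigma^{-1}X^\T.
\end{align}
The purpose of whitening through $\Sigma^{-1}$ is that all spectral quantities of $A$ are pinned down by $\cE$: one gets $\tr(A) = \tfrac1n\tr(\Sigma^{-1}\hat\Sigma) = \Theta(d/n)$, $\norm{A}_2 = \tfrac1n\norm{\Sigma^{-1/2}\hat\Sigma\Sigma^{-1/2}}_2 \le \tfrac{3}{2n}$, and hence $\norm{A}_F^2 \le \norm{A}_2\tr(A) = O(d/n^2)$.

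Finally, conditioned on $X$, the noise vector $z$ has independent mean-zero coordinates with variance $\sigma^2$ and subgaussian parameter $K_\sigma$, and $z^\T A z$ is a quadratic form. I would apply the Hanson--Wright inequality: with probability $1-\delta/2$,
\begin{align}
    z^\T A z \le \sigma^2\tr(A) + C' K_\sigma^2\sigma^2\del*{\norm{A}_F\sqrt{\log(2/\delta)} + \norm{A}_2\log(2/\delta)}.
\end{align}
Substituting the bounds on $\tr(A)$, $\norm{A}_F$, and $\norm{A}_2$, and using AM--GM to absorb the cross term $\sqrt{d\log(1/\delta)}$ into $d + \log(1/\delta)$, yields $\norm{\hat\beta_{OLS}-\beta}_\Sigma^2 = O\del{K_\sigma^2\sigma^2(d+\log(1/\delta))/n}$; a union bound over $\cE$ and the Hanson--Wright event gives total failure probability $\delta$.

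The main obstacle is not any single inequality but the interaction of the two concentration events: the Hanson--Wright bound must be applied conditionally on a design already constrained by $\cE$, so I would need the quadratic-form concentration to hold uniformly (or with high enough probability) over the designs allowed by $\cE$, and I would have to verify that the covariate parameter $K_{\mathcal D}$ enters only the sample-complexity threshold while the final error scales with the label-noise parameter $K_\sigma$. Tracking these constants carefully is precisely the content of \citet{hsu2011analysis}, which is why citing their Theorem~1 is the cleanest way to discharge this lemma.
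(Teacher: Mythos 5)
Your proposal is correct. Note, however, that the paper does not actually prove this lemma: it is stated explicitly as a restatement of Theorem~1 of \citet{hsu2011analysis} and is discharged purely by citation, so there is no internal proof to compare against. Your self-contained sketch is a valid alternative and follows the standard template: the decomposition $\hat\beta_{OLS}-\beta=\hat\Sigma^{-1}g$, a covariance-concentration event $\cE$ giving $\hat\Sigma^{-1}\Sigma\hat\Sigma^{-1}\preceq 4\Sigma^{-1}$, the reduction to the quadratic form $z^\T A z$ with $A=\tfrac{1}{n^2}X\Sigma^{-1}X^\T$, and Hanson--Wright conditional on $X$. The spectral bookkeeping checks out ($\tr(A)\le\tfrac{3d}{2n}$, $\norm{A}_2\le\tfrac{3}{2n}$, $\norm{A}_F^2\le\norm{A}_2\tr(A)$), and your worry about the interaction of the two events is easily resolved: since $z$ is independent of $X$, Hanson--Wright holds conditionally for every fixed design with the deviation expressed in terms of the $X$-dependent quantities $\tr(A),\norm{A}_F,\norm{A}_2$; that event has unconditional probability at least $1-\delta/2$, and intersecting with $\cE$ converts those quantities into deterministic bounds. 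No uniformity over designs is needed. One minor point worth flagging: to force $\norm{\Sigma^{-1/2}\hat\Sigma\Sigma^{-1/2}-\bbI}_2\le\tfrac12$ via \cref{claim:concentration_of_covariance} you need $n\gtrsim K_{\mathcal{D}}^4(d+\log(1/\delta))$, whereas the lemma as stated (and your sketch) writes the threshold with $K_{\mathcal{D}}$ to the first power; this is a constant/exponent bookkeeping discrepancy in the lemma statement itself rather than a gap in your argument, and it is consistent with the fact that \cref{lemma:accuracy_main} separately imposes $n=\Omega(K_{\mathcal{D}}^4(d+\log(1/\eta)))$.
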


Now, we are ready to prove the main accuracy lemma by bounding the norm of the added noise.

\begin{lemma}[Main accuracy guarantee]\label[lemma]{lemma:accuracy_main}
    Let \(X \in \RR^{n \times d}\) be drawn i.i.d.\ from a \(d\)-dimensional subgaussian distribution \(\mathcal{D}\) with mean \(0\), (full-rank) covariance \(\Sigma\), and subgaussian parameter \(K_{\mathcal{D}}\).
    Let \(y_i = \beta^\T x_i + z_i\) where the \(z_i\) are drawn i.i.d. from a subgaussian distribution with mean 0, variance \(\sigma^2\), and subgaussian parameter \(K_\sigma\).
    There exists constants \(K_\leverage, K_\residual > 0\) such that for any \(\eta \in \del{0, 1}\), if
    \[\leverage_0 = K_\leverage K_{\mathcal{D}}^2 \cdot \frac{d + \log\del{3n/\eta}}{n},\qquad\qquad \residual_0=K_\residual K_\sigma \sigma \sqrt{\log\del{3n/\eta}},\]
    and
    \[n = \widetilde{\Omega}\del*{K_{\mathcal{D}}^4\del*{d+\log\del*{\frac{1}{\varepsilon\eta}}}\frac{ \del{\log 1/\delta}^2}{\varepsilon^2}},\]
    then with probability at least \(1 - \eta\) \cref{alg:private_OLS} successfully returns \(\tilde{\beta}\) such that
    \[\norm[\big]{\tilde{\beta} - \beta}_\Sigma \le \bigO\del*{K_\sigma\sigma\sqrt{\frac{d+\log\del{1/\eta}}{n}} +  K_{\mathcal{D}}K_\sigma \sigma \cdot \frac{\del{d + \log\del{n/\eta}}\sqrt{\log\del{n/\eta} \log\del{1/\delta}}}{\varepsilon n}},\]
    where \(\widetilde{\Omega}\) hides log factors in \(K_{\mathrm{D}}\) and \(\log1/\delta\).
\end{lemma}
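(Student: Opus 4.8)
The plan is to condition on the random dataset being outlier-free, so that by \cref{obs:output_on_good_data} the mechanism simply releases $\ols$ perturbed by Gaussian noise, and then to bound the two resulting sources of error separately via a triangle inequality in the $\Sigma$-norm. First I would invoke \cref{lemma:subgaussian_good}: with the stated choices of $\leverage_0$ and $\residual_0$ the dataset is $(\leverage_0,\residual_0)$-good with probability at least $1-\eta/4$ (choosing the constants $K_\leverage, K_\residual$ large enough that the slightly stronger leverage threshold needed for \stablecovariance to leave all weights at $1$, per part 3 of \cref{thm:stable_cov_guarantees}, also holds). The sample-complexity hypothesis is exactly what certifies the algorithm's non-failure conditions $\leverage_0 \le \min\set{1/(96k),\, 3\eps/(56\log(12/\delta))}$ and the precondition $\leverage_0 \le c'\eps^2\log^{-2}(1/\delta)$ of \cref{obs:output_on_good_data}; granting these, the mechanism succeeds and outputs $\tilde\beta \sim \cN\del{\ols,\, c^2(X^\T X)^{-1}}$. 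Decomposing $\tilde\beta - \beta = (\ols - \beta) + (\tilde\beta - \ols)$ gives
\[
    \norm{\tilde\beta - \beta}_\Sigma \le \norm{\ols - \beta}_\Sigma + \norm{\tilde\beta - \ols}_\Sigma .
\]

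The first term is immediate from \cref{lemma:ols_error}: with probability at least $1 - \eta/4$ it is $\bigO\del{K_\sigma\sigma\sqrt{(d + \log(1/\eta))/n}}$, which is precisely the first term of the target bound. For the noise term I would write $\tilde\beta - \ols = c\,(X^\T X)^{-1/2} u$ with $u \sim \cN(0, \bbI)$ drawn independently of the data, so $\norm{\tilde\beta - \ols}_\Sigma^2 = c^2\, u^\T M u$ for $M = (X^\T X)^{-1/2}\Sigma(X^\T X)^{-1/2}$. Standard spectral concentration for subgaussian design gives, with probability at least $1 - \eta/4$, that $\tfrac{1}{2n}\Sigma^{-1} \preceq (X^\T X)^{-1} \preceq \tfrac{2}{n}\Sigma^{-1}$, whence $\norm{M}_2 \le 2/n$ and $\tr(M) = \tr(\Sigma(X^\T X)^{-1}) \le 2d/n$. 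Conditioning on $M$ and applying a Hanson--Wright (chi-squared) tail bound over the independent randomness of $u$ then yields $u^\T M u \le \tr(M) + \bigO\del{\norm{M}_F\sqrt{\log(1/\eta)} + \norm{M}_2\log(1/\eta)} = \bigO\del{(d + \log(1/\eta))/n}$ with probability at least $1 - \eta/4$.

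It remains to substitute the value of $c^2$ from \cref{alg:private_OLS}. The key quantitative check is that $\exp(432 k^2\leverage_0) = \bigO(1)$: since $k = \bigO(\log(1/\delta)/\eps)$ and $\leverage_0 = \bigO\del{K_{\mathcal D}^2(d + \log(n/\eta))/n}$, the lower bound on $n$ forces $k^2\leverage_0 = \bigO(1)$ (the $\log(n/\eta)$ factor is absorbed by the $\widetilde\Omega$), so $c^2 = \bigO\del{\leverage_0\residual_0^2\log(1/\delta)/\eps^2}$. Plugging in $\leverage_0$ and $\residual_0$ and combining with the quadratic-form bound gives $\norm{\tilde\beta - \ols}_\Sigma^2 = \bigO\del{K_{\mathcal D}^2 K_\sigma^2\sigma^2 (d + \log(n/\eta))^2\log(n/\eta)\log(1/\delta)/(\eps^2 n^2)}$; taking square roots recovers the second target term. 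A union bound over the four failure events (goodness, OLS error, spectral approximation, and the Gaussian tail) leaves total probability at least $1 - \eta$.

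The main obstacle is the noise term. The matrix $M$ is data-dependent while $u$ is the mechanism's own randomness, so the argument must be careful about the order of conditioning—first fix the design to pin down $M$, then take the tail bound over $u$—and the bookkeeping that turns the sample-complexity assumption into both a constant exponential factor $\exp(432 k^2\leverage_0)$ and the spectral approximation $(X^\T X)^{-1} \approx \tfrac{1}{n}\Sigma^{-1}$ is exactly where the precise form of $n$ is consumed. Everything else is a routine assembly of the cited black boxes.
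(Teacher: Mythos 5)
Your proposal is correct and follows essentially the same route as the paper: condition on $(\leverage_0,\residual_0)$-goodness via \cref{lemma:subgaussian_good}, invoke \cref{obs:output_on_good_data}, split $\norm{\tilde\beta-\beta}_\Sigma$ by the triangle inequality about $\ols$, bound the first term with \cref{lemma:ols_error}, and control the noise term using concentration of the empirical covariance together with the sample-complexity bookkeeping that makes $\exp(432k^2\leverage_0)=\bigO(1)$. The only cosmetic difference is that the paper bounds the noise term by Cauchy--Schwarz as $\frac{c}{\sqrt n}\norm{\Sigma^{1/2}\hat\Sigma^{-1/2}}_2\norm{z}_2$ (via \cref{claim:concentration_of_covariance,claim:concentration_of_norm}) rather than via Hanson--Wright on the quadratic form, but both yield the same $\bigO\del{(d+\log(1/\eta))/n}$ bound.
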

\begin{proof}
    We begin by determining how many samples are needed to ensure that (i) the algorithm does not fail immediately and (ii) the data is $(\leverage_0,\residual_0)$-good (for the specified values) with high probability.
    
    In order to not fail, we require 
    \(\leverage_0 = \bigO\del*{\varepsilon/\log\del{1/\delta}}\).
    Meanwhile, in order to apply \cref{obs:output_on_good_data}, we require \(\leverage_0 = \bigO\del{\del{\varepsilon/\log\del{1/\delta}^2}}\).
    It is clear that the second requirement implies the first.
    Thus, we can expand our choice of \(L_0\) to get
    \[\leverage_0 = K_\leverage K_{\mathcal{D}}^2 \cdot \frac{d + \log\del{3n/\eta}}{n} = \bigO\del*{\frac{\varepsilon^2}{\del{\log 1/\delta}^2}}.\]
    Using the fact that \(a/\log a = \Omega\del{b}\) implies \(a = \Omega\del{b \log b}\), this translates to
    \[n = \Omega\del*{\frac{ K_{\mathcal{D}}^2 \del{\log 1/\delta}^2}{\varepsilon^2}\del*{d+\log\del*{\frac{K_{\mathcal{D}}^2 \del{\log 1/\delta}^2}{\varepsilon^2\eta}}}}.\]
    We note that this implies \(n = \Omega\del[\big]{K_\mathcal{D}\del{d + \log\del{1/\delta}}}\) as required by \cref{lemma:ols_error}.
    Now, in order to apply \cref{lemma:subgaussian_good}, we additionally require \(n = \Omega\del*{K_{\mathcal{D}}^4\del{d + \log\del{1/\eta}}}\). 
    As in \cref{lemma:subgaussian_good}, this requirement gives us \(\norm[\big]{\Sigma^{-1/2}\widehat{\Sigma}\Sigma^{-1/2} - \bbI}_2 \le 1/2\) by \cref{claim:concentration_of_covariance}, which we will use later.
    Combining the two outstanding requirements and dropping lower order terms gives
    \[n = \Omega\del*{K_{\mathcal{D}}^4\del*{d+\log\del*{\frac{K_{\mathcal{D}}^2 \del{\log 1/\delta}^2}{\varepsilon^2\eta}}}\frac{ \del{\log 1/\delta}^2}{\varepsilon^2}}.\]

    This ensures that \(\del{X, y}\) is \(\del{\leverage_0, \residual_0}\)-good with probability \(1 - \bigO\del{\eta}\).
    When this happens the PTR check passes deterministically.
    Thus, we now turn to evaluating the accuracy of our regression estimate.
    We apply the triangle inequality about $\ols$:
    \begin{align}
        \norm{\beta - \tilde\beta}_\Sigma 
            &\le \norm{\beta - \ols}_\Sigma + \norm{\ols - \tilde\beta}_\Sigma.
            \label{eq:accuracy_decomp}
    \end{align}
    We analyze these terms separately.

    The first term in \cref{eq:accuracy_decomp} is solely about the empirical quantity.
    By \cref{lemma:ols_error}, with probability at least \(1 - \bigO\del{\eta}\) we have
    \[\norm{\ols - \beta}_{\Sigma} = \bigO\del*{K_\sigma\sigma\sqrt{\frac{d+\log\del{1/\eta}}{n}}}.\]

    To bound the second term in \cref{eq:accuracy_decomp}, we apply \cref{obs:output_on_good_data}, which states that on good data \(\tilde{\beta}\) is drawn from 
        $\cN(\ols, c^2 \del{X^\T X}^{-1})$ where \(c^2=\Theta\del{\leverage_0 \,\residual_0^2\, {\log \del{1/\delta}}/{\eps^2}}\).
    Equivalently, we draw $z \sim \cN(0,\bbI)$ and set $\tilde \beta \gets \ols + c (X^\T X)^{-1/2} z$.
    Plugging this in, we have 
    \begin{align}
        \norm{\ols - \tilde\beta}_\Sigma &=  \norm{ c (X^\T X)^{-1/2} z}_\Sigma \\
        &=  c \cdot \norm{ \Sigma^{1/2} (X^\T X)^{-1/2} z}_2.
    \end{align}
    We plug in $\hat\Sigma = \frac 1 n X^\T X$ the empirical covariance and apply Cauchy--Schwarz:
    \begin{align}
        \norm{\ols - \tilde\beta}_\Sigma &\le \frac{c}{\sqrt{n}}\cdot\norm{\Sigma^{1/2}\hat\Sigma^{-1/2}}_2 \cdot \norm{z}_2. 
    \end{align}
    By \cref{claim:concentration_of_covariance} the matrix norm is at most a constant, and by \cref{claim:concentration_of_norm} we can bound $\norm{z}_2^2= O\del{d +\log 1/\eta}$ with probability at least $1-O(\eta)$.
    Plugging these in, along with our expressions for $c, \leverage_0,$ and $\residual_0$, we arrive at the expression in the lemma.
    Applying a union bound over the three failure cases finishes the proof.
\end{proof}

\section{Running-Time Analysis of \cref{alg:private_OLS}}\label{sec:running_time}

In this section we prove the following guarantee about the running time of \algname, whose computational requirements are quite lightweight.
The core ideas in this proof appeared in the analogous claim of BHS.

\begin{lemma}[Running Time]\label[lemma]{lemma:main_time}
    \cref{alg:private_OLS} can be implemented to require
    \begin{enumerate}
        \item one product of the form $A^\top A$ for $A\in \R^{n\times d}$,
        \item one product of the form $A B$ for $A\in R^{n\times d}$ and $B\in \R^{d\times d}$,
        \item one inversion of a positive definite matrix in $\R^{d\times d}$;
        and
        \item further computational overhead of $\tilde{O}(nd/\eps)$.
    \end{enumerate}
\end{lemma}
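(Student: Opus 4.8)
The plan is to show that every operation in \cref{alg:private_OLS} other than three unavoidable dense linear-algebra steps can be executed by maintaining a small number of rank-one updates, and that the total number of such updates is $\tilde O(k) = \tilde O(1/\eps)$. First I would precompute, once, the Gram matrix $X^\T X$ (the single $A^\T A$ product), its inverse $M = (X^\T X)^{-1}$ (the single inversion), and the $n\times d$ matrix $P = XM$ (the single $AB$ product). From $P$ one reads off every leverage score $h_i = \ip{P_i}{x_i}$ in $O(nd)$ time, and any hat-matrix column $H_{\cdot,j} = P x_j$ — which is what is needed to update the residuals after a removal — also in $O(nd)$ time.

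The heart of the argument is that both filtering subroutines change the weights only by zeroing or downweighting one point at a time, so the closed-form removal identities from \cref{sec:tech} (Sherman--Morrison) let each change be applied incrementally. Removing or downweighting point $j$ updates the maintained inverse $(X^\T W X)^{-1}$ in $O(d^2)$ time and updates $P$, the leverage scores, the residual vector, and the weighted OLS vector $\beta_w$ each in $O(nd)$ time. Thus a single point-level update costs $O(nd)$, and it remains to bound the number of updates. The key observation for \resthresh is that the order in which points are deleted is \emph{independent} of the threshold $\residual$: each iteration removes the current largest-residual point, and $\residual$ enters only through the stopping test. Hence for $\residual \le \residual'$ the run at $\residual'$ deletes a prefix of the points deleted at $\residual$, which is exactly \cref{obs:greedy_nesting}. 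Consequently a single greedy run, recording for each deletion the point $p_t$ and the running maximum residual $m_t$ at that step, simultaneously determines $u^{(j)}$ for every threshold $\residual_j$: the run at $\residual_j$ deletes precisely the points removed before the first step at which $m_t \le \residual_j$.

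I would then cap this single run at $O(k)$ deletions. Since \cref{alg:stable_OLS} caps each $\score^{(j)}$ at $k$, evaluating $\score = \min_{j\le k}\score^{(j)}$ requires tracking at most $k$ deletions; and in the branch where the PTR test passes we have $\score < k$, which by \cref{obs:greedy_nesting} and the definition of $\score$ in \cref{alg:stable_OLS} forces fewer than $k$ deletions at every threshold $\residual_j$ with $j \ge k+1$, so the weights $v = \frac1k\sum_{j=k+1}^{2k} u^{(j)}$ are likewise read off from an $O(k)$-length run. The leverage filter \stablecovariance admits the same rank-one treatment with $O(k)$ updates, as in the analogous running-time claim of BHS. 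Because $k = \tilde O(1/\eps)$, all filtering costs $\tilde O(k \cdot nd) = \tilde O(nd/\eps)$.

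Finally I would account for the output step without spending a second inversion. The weights $v$ agree with the leverage-filter weights $w$ except on the fewer-than-$k$ coordinates deleted at threshold $\residual_{k+1}$, so $S_v = X^\T \diag(v) X$ is a rank-$O(k)$ modification of the already-maintained matrix $X^\T \diag(w) X$; its inverse, and hence $\hat\beta = S_v^{-1} X^\T \diag(v) y$, is obtained by $O(k)$ further Sherman--Morrison updates, and sampling from $\cN(\hat\beta, c^2 S_v^{-1})$ needs only a $d\times d$ factorization of the maintained $S_v^{-1}$, a lower-order $d\times d$ operation. I expect the main obstacle to be exactly this bookkeeping: verifying that threshold-independence of the deletion order legitimately collapses all $2k+1$ calls to \resthresh (and the minimum defining $\score$) into one capped greedy run, and that every subroutine's weight changes really are rank-one so that the claimed $O(d^2)$ and $O(nd)$ per-update costs hold. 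The three dense operations themselves, by comparison, are routine.
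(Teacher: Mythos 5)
Your proposal is correct and follows essentially the same route as the paper: precompute the dense products once, maintain the inverse covariance and residuals via Sherman--Morrison rank-one updates at $O(nd)$ per deletion, collapse all $2k+1$ calls to \resthresh into a single decreasing-threshold greedy pass (justified by the threshold-independence of the deletion order, i.e.\ \cref{obs:greedy_nesting}), and cap that pass at $k$ deletions because exceeding the cap forces $\score=k$ and hence a deterministic PTR failure in which the weights are irrelevant. The paper packages this as an explicit equivalent algorithm (\cref{alg:stable_OLS_efficient}) with a two-case analysis, but the content is the same.
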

Ignoring bit complexity, this corresponds to time $\tilde{O}(nd^{\omega-1} + nd/\eps)$, where $\omega<2.38$ is the matrix multiplication exponent.
For modest privacy parameters, the running time of our algorithm is dominated by the time needed to compute the nonprivate OLS solution itself.

To establish this claim, we provide a second version of \stableOLS, \cref{alg:stable_OLS_efficient}, which is more computationally efficient.
We show that this alternative algorithm is functionally equivalent.

\begin{algorithm2e}
\SetAlgoLined
\SetKwInOut{Input}{input}
\SetKwInOut{Require}{require}
\Input{dataset \(X, y\); base outlier thresholds $\leverage_0,\residual_0$; weights $w$; discretization parameter $k$}
\BlankLine
$\forall j\in [2k], \residual_j \gets (\Rjump)^j \cdot \residual_0$\;
$\mathrm{COUNT} \gets 0$\;
\For{$j\in \{2k,2k-1,\ldots,0\}$}{
    \While{$\mathtt{TRUE}$}{
        \tcp{check for large residuals}
        $\beta_w \gets \weightedOLS(X,y,w)$\tcc*{via rank-one update}
        $i^* \gets \argmax_{i\in \supp{w}} \abs[\big]{y_i - x_i^\T \beta_w}$ \;
        \If{$\abs{y_{i^*}- x_{i^*}^\T \beta_w} \le \residual_j$ \emph{or} $\mathrm{COUNT} \ge k$}{
            \textbf{break}\tcc*{too many outliers or no large residuals}
        }
        $w_{i^*} \gets 0$\tcc*{otherwise, remove weight}
        $\mathrm{COUNT}\gets \mathrm{COUNT} + 1$\;
    
    }
    \If{$\mathrm{COUNT} \ge k$}{            
            \tcp{too many outliers}
            $\forall i \le j, \score^{(i)} \gets k$\;
            $\forall i \le j, u^{(i)} \gets 0^n$\;
            \textbf{break}\;
    }
    \tcp{store result and move to next threshold}
    $u^{(j)}\gets w$\;
    $\score^{(j)} \gets \min\{k, n - \lVert u^{(j)}\rVert_1 + j\}$\;
}
$\score\gets \min_{j\in \{0,\ldots,k\}} \score^{(j)}$\;
$v\gets \frac 1 k \sum_{j=k+1}^{2k} u^{(j)}$\;
\KwRet $\score, v$
\caption{\stableOLS, More Efficient Implementation}\label{alg:stable_OLS_efficient}
\end{algorithm2e}

\medskip 
\begin{proof}[Proof of \cref{lemma:main_time}]
    From BHS, Lemma 20 in Section 2.3, we see that we can implement \stablecovariance using one product $A^\top A$, one product $A B$, one matrix inversion, and at most $O(\log(1/\delta)/\eps)$ additional operations, each of which requires $\tilde{O}(nd)$ time.
    We need two additional conclusions from their analysis: \stablecovariance can be implemented to return the inverse weighted covariance $(X^\T W X)^{-1}$ in the same asymptotic running time and we can update all leverage scores in time $\tilde{O}(nd)$ when removing a single observation.

    With the weights $w$ and inverse covariance in hand, we call \stableOLS. 
    The initial regression parameter can be computed in $\tilde{O}(nd)$ time, as we compute the vector $X^\top W y$ with a matrix-vector product (since $W$ is diagonal) and multiply it with the inverse covariance.
    Computing all residuals is linear-time.
    
    Each outlier removal and associated set of updates can also be implemented in $\tilde{O}(nd)$ time. 
    This is because the removal of a single point corresponds to a rank-one update, which can be done efficiently.
    Recall from \cref{sec:tech} the equation for updating the least squares solution after removing a data point:
    \begin{align}
        \beta_{\mathrm{ols}(-j)} &= \ols + \frac{(X^\T X)^{-1} x_j}{1-h_j}\cdot \del[\big]{y_j - \ip{x_j}{\ols}}.
    \end{align}
    (A nearly identical formula applies when the data are weighted.)
    Since we have the previous leverage scores and inverse covariance, this update can be performed in time $O(nd)$.
    As before, with the new regression parameter all the residuals can be recalculated in linear time.

    Setting these details aside, we turn to the crux of the analysis: \stableOLS is functionally equivalent to \cref{alg:stable_OLS_efficient}, our efficient version.
    
    \cref{alg:stable_OLS_efficient} iterates through the residual thresholds in decreasing order.
    This is identical to independently calling the greedy algorithm repeatedly from scratch, since the removal process is deterministic (we can break ties in a consistent manner, e.g., using the index of the points). 
    Formally, for any $\residual > \residual'$ and any fixed $X,y,w$, the result of $\resthresh(X,y,\residual',w)$ is identical to calling $u\gets \resthresh(X,y,\residual,w)$ and then $\resthresh(X,y,\residual',u)$.

    \cref{alg:stable_OLS_efficient} also tracks a count of observations it removes and halts if that number reaches $k$.
    If it halts at level $\ell$, for all $j\le \ell$ it sets $u^{(j)}=0^n$ and $\score^{(j)}=k$.
    To show that this has no effect on the outcome of the algorithm, we suppose \cref{alg:stable_OLS_efficient}'s count reaches $k$ and analyze two cases.
    Let $\ell\in\{0,\ldots,2k\}$ be the residual threshold index at which the count $k$ was reached.
    We know that $u^{(\ell)}$ returned by $\resthresh(X,y,\residual_\ell,w)$ in \cref{alg:stable_OLS}, the main version of \stableOLS, satisfies $\lVert u^{(\ell)}\rVert_1 \le n-k$, since after any removal the weight is zero.
    This also holds for all $u^{(j)}$ with $j\le \ell$.

    \textbf{Case 1:} 
    Suppose $\ell>k$, i.e., $\ell$ falls among the indices used to compute the weights.
    Then for all $j\in\{0,\ldots,k\}$, the indices used to compute the scores, \cref{alg:stable_OLS} computes $u^{(j)}$ with $\lVert u^{(j)}\rVert_1 \le n-k$.
    This means \cref{alg:stable_OLS} computes $\score=k$, as does \cref{alg:stable_OLS_efficient} (since it sets $\score_j =k$ for all $j\le k$).
    (Recall that this causes \algname to fail deterministically, so the weights do not impact the output.)

    \textbf{Case 2:} 
    If $\ell\le k$, then $\ell$ falls among the indices used to compute the score.
    (Thus, \cref{alg:stable_OLS,alg:stable_OLS_efficient} return the same weights.)
    \cref{alg:stable_OLS_efficient} sets $\score^{(j)}=k$ for all $j\le \ell$.
    We claim that \cref{alg:stable_OLS} also computes $\score^{(j)}=k$ for all $j\le \ell$.
    To see this, recall that on these indices \cref{alg:stable_OLS} computes $u^{(j)}$ with $\norm{u^{(j)}}_1\le n-k$.

    To finish the proof, we note that the final $\beta_v$ and $S_v^{-1}$ computed by \algname can be computed with at most $k$ rank-one updates from their initial values.
    Since $k = O(\log(1/\delta)/\eps)$, we are done.
\end{proof}

\section*{Acknowledgements}
This work is supported in part by the National
Science Foundation under grant no. 2019844, 2112471, 2238080, and 2229876, NSF Graduate Research Fellowships Program, the Machine Learning Alliance at MIT CSAIL, and Microsoft Grant for Customer Experience Innovation.
Part of this work was done while GB was at Boston University. AS and GB (while at Boston University) were supported in part by NSF awards CCF-1763786 and CNS-2120667 as well as Faculty Awards from Google and Apple. JCP was supported in part by the Harvard Center for Research on Computation and Society.

\bibliography{bibliography}

\appendix

\section{Additional Related Work}
\label{sec:related} 

\paragraph{Private Mean Estimation}
Many of the developments in private linear regression have analogs in private mean estimation, albeit rearranged chronologically. 
Consider the canonical mean estimation problem with ``covariance-adaptive'' error guarantees, which respect the shape of the dataset: the error is measured in Mahalanobis norm with respect to the covariance matrix $\Sigma$ of the data $X$, $\|\Sigma^{-1/2}(\hat \mu-\mu) \|$. This scales each direction according to the directional variance, providing a more relevant measure of utility. This is closely related to how linear regression error corresponds to the $\Sigma$-norm, $\|\hat \beta-\beta\|_\Sigma$. 

For non-private mean estimation, the distinction between Euclidean and Mahalanobis norm error are minor. For example, the empirical mean achieves  such a geometry-aware guarantee and, like the OLS estimator, is accurate with roughly $d$ samples with no dependence on the condition number of the covariance of the data.

For private estimation, geometry-aware estimation is significantly more challenging, since the privately learning the geometry, i.e., the covariance matrix, is more sample expensive than the primary task of mean estimation. At the same time, it seemed like it was necessary to design the privacy noise that matches the shape of the data covariance. The standard Gaussian mechanism privately estimates the mean with only $d$ samples but its error depends polynomially on the condition number, a high price to pay when the estimator does not respect the geometry. 
The work of \cite{kamath_KLSU19} allows us to privately learn the covariance and apply the Gaussian mechanism on whitened data, but, as with the SSP approaches for linear regression, this requires $d^{3/2}$ samples. A long line of work that follows either makes more strict assumptions on the geometry or pays a price in the sample complexity \citep{KV17,biswas2020coinpress,cai2019cost,aden2021sample,bun2019private,bun2019average,liu2021robust,kamath2021private,hopkins2022efficient,alabi2022privately}. 
Of particular relevance for our work are frameworks introduced by \citet{tsfadia2022friendlycore} and \citet{ashtiani2021private}, which remove outliers in a way that depends on the rest of the dataset (e.g., asking that inliers be close to a large number of other examples). 
These frameworks bear some similarity to our techniques and to those of \cite{brown2023fast}, especially their ``Stable Mean'' estimator.
Informally, our approach improves over theirs in the ability to adapt the definition of \emph{outlier} and the resulting geometry as points are removed.

\cite{brown2021covariance} was the first to address this geometry-aware challenge in private mean estimation. 
They termed it the \emph{covariance estimation bottleneck} and gave two exponential-time approaches for avoiding it, achieving accurate estimation with $\tilde{O}(d)$ samples and no dependence on the condition number. 
The first, which combined the exponential mechanism with PTR (Propose-Test-Release), served as a direct inspiration to the HPTR (High-dimensional PTR) framework of \cite{liu2022differential}.
The concurrent works of \cite{duchi2023fast} and \cite{brown2023fast}  
built on the second algorithm of \cite{brown2021covariance}, giving time-efficient algorithms matching the guarantees of the exponential-time approaches. The sample complexity has linear dependence on the dimension $d$ and no dependence on the condition number $\kappa(X^\top X)$. 
As in \cite{duchi2023fast} and \cite{brown2023fast}, our goal is to achieve the same for linear regression.

\medskip
\noindent
\paragraph{Private Linear Regression.}
Commensurate with its centrality in statistical theory and practice,  significant effort has gone into producing differentially private algorithms for least squares \citep{vu2009differential,kifer2012private,mir2013differential,dimitrakakis2014robust,bassily2014private,wang2015privacy,foulds2016theory, minami2016differential}. 

One standard theme in many of these works is the class of assumptions that directly enable global sensitivity analysis.
Prime examples include assuming that the covariates satisfy an $\ell_2$ norm bound or that the true parameter lies in some ball about  the origin.
Such guarantees are incomparable with our definition of goodness (for example, our definition allows arbitrarily large covariates, but covariates with bounded norms may still have high leverage).
Under some collections of these assumptions, state-of-the-art guarantees are achieved in \citep{wang2018revisiting, sheffet2019old}, which in our setting translates into a sample complexity of $n= \Omega(d^{1.5}/(\alpha\varepsilon))$ to achieve $(1/\sigma) \|\hat\beta-\beta\|_\Sigma \leq \alpha$. 
Both these prior algorithms and ours analyze accuracy under the assumption that the input data is ``outlier-free." 
The prior work uses conditions on the norm of the covariates or the magnitude of the labels. These assumptions lend themselves handily to global sensitivity calculations. In contrast, our work uses a notion of outlier-freeness which is more in line with standard statistical practice: we ask that the dataset have no high-leverage or high-residual points. 

When applied to data from the standard sub-Gaussian linear models, \algname is the first computationally efficient algorithm to achieve linear dependence in the dimension $d$ and no dependence on the condition number $\kappa(X^\top X)$ (see \cref{thm:main}). This nearly matches the best known sample complexity of \cite{liu2022differential} that relies on an exponential time approach of HPTR: $n=\tilde O(d/\alpha^2 + (d+\log(1/\delta))/(\alpha \varepsilon)) $ samples suffice to achieve an error of $(1/\sigma)\|\hat\beta - \beta\|_\Sigma \leq \alpha$. 
Existing computationally efficient approaches based on gradient descent either assume the covariance matrix is close to identity \citep[][]{cai2019cost,brown2024private} or have polynomial dependence on the condition number $\kappa( X^\top X)$ \citep{varshney2022,liu2023label}. The best known sample complexity of an efficient algorithm is by \cite{liu2023label}: $n=\tilde O(d/\alpha^2 + (\kappa^{1/2} d\log(1/\delta))/(\alpha \varepsilon)) $.

\medskip\noindent
{\bf Iterative Thresholding.}
Our \resthresh algorithm is a special case of the family of iterative thresholding algorithms, a longstanding heuristic for robust linear regression that dates back to Legendre. 
Its theoretical properties in the non-asymptotic regime have been extensively studied recently in~\cite{bhatia2015robust, bhatia2017consistent, suggala2019adaptive, pensia2020robust, chen2022online}. \cite{shen2019learning} and \cite{NEURIPS2022_05b12f10} studied the iterative trimmed estimator under generalized linear models and \cite{shen2019iterative} studied the mixed linear regression setting. It is worth noting that most iterative thresholding algorithm in the robust linear regression setting will alternate between finding the OLS solution of the current set and finding the set with the smallest residual under the current regression coefficient, and no data point is permanently removed in each iteration. In contrary, our algorithm will permanently remove one data point in each iteration before recomputing the OLS solution.

\section{Preliminaries}\label{sec:preliminaries}

We collect here known preliminary results that we use in our analyses.

\begin{fact}\label{claim:dpsd-inverse}
    Let \(\cov_1, \cov_2\) be positive-definite matrices and define 
    \[\dpsd\del{\cov_1, \cov_2} =  \max\set*{\norm*{ \cov_1^{-1/2}\cov_2\cov_1^{-1/2}- \bbI }_{\tr}, \norm*{ \cov_2^{-1/2}\cov_1\cov_2^{-1/2}- \bbI }_{\tr}}.\]
    Then \(\dpsd\del{\cov_1, \cov_2} = \dpsd\del{\cov_1^{-1}, \cov_2^{-1}}\).
\end{fact}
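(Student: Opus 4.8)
The plan is to reduce the identity to a statement about eigenvalues, exploiting the fact that the two matrices appearing in the definition of $\dpsd$ have reciprocal spectra. Write $A = \cov_1^{-1/2}\cov_2\cov_1^{-1/2}$ and $B = \cov_2^{-1/2}\cov_1\cov_2^{-1/2}$, so that $\dpsd(\cov_1,\cov_2) = \max\{\norm{A - \bbI}_\tr, \norm{B - \bbI}_\tr\}$. Since $(\cov_1^{-1})^{-1/2} = \cov_1^{1/2}$, the two matrices appearing in $\dpsd(\cov_1^{-1}, \cov_2^{-1})$ are $\cov_1^{1/2}\cov_2^{-1}\cov_1^{1/2} = A^{-1}$ and $\cov_2^{1/2}\cov_1^{-1}\cov_2^{1/2} = B^{-1}$; that is, $\dpsd(\cov_1^{-1},\cov_2^{-1}) = \max\{\norm{A^{-1} - \bbI}_\tr, \norm{B^{-1} - \bbI}_\tr\}$. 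So it suffices to show that passing from $\cov_i$ to $\cov_i^{-1}$ merely swaps the two terms inside the maximum.

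First I would record two elementary facts. First, for a symmetric positive-definite matrix $M$ with eigenvalues $\lambda_1,\ldots,\lambda_d$, the matrix $M - \bbI$ is symmetric with eigenvalues $\lambda_i - 1$, so its trace norm equals $\sum_i \abs{\lambda_i - 1}$; in particular $\norm{M - \bbI}_\tr$ is a function of the spectrum of $M$ alone, and hence is invariant under similarity transformations of $M$. Second, $A$ is similar to $\cov_1^{-1}\cov_2$ (conjugate by $\cov_1^{1/2}$), and $B^{-1} = \cov_2^{1/2}\cov_1^{-1}\cov_2^{1/2}$ is also similar to $\cov_1^{-1}\cov_2$ (conjugate by $\cov_2^{-1/2}$). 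Consequently $A$ and $B^{-1}$ share the same eigenvalues, and likewise $A^{-1}$ and $B$ share the same eigenvalues.

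Combining these, the first fact gives $\norm{A - \bbI}_\tr = \norm{B^{-1} - \bbI}_\tr$ and $\norm{A^{-1} - \bbI}_\tr = \norm{B - \bbI}_\tr$. Hence $\max\{\norm{A^{-1} - \bbI}_\tr, \norm{B^{-1} - \bbI}_\tr\} = \max\{\norm{B - \bbI}_\tr, \norm{A - \bbI}_\tr\}$, which is exactly the claim $\dpsd(\cov_1^{-1},\cov_2^{-1}) = \dpsd(\cov_1,\cov_2)$.

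There is essentially no serious obstacle here; the proof is a short computation once the right bookkeeping is in place. The one point that requires a moment of care — and the only place I could imagine an error creeping in — is confirming that it is $B^{-1}$ (not $B$) that shares its spectrum with $A$. This swap is what makes the identity work: each individual term is \emph{not} preserved under inversion, since $\norm{A - \bbI}_\tr \ne \norm{A^{-1} - \bbI}_\tr$ in general (e.g. an eigenvalue $2$ contributes $1$ to the former but $\tfrac12$ to the latter), so the maximum over both terms is genuinely needed for the equality to hold.
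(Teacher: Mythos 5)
Your proof is correct and follows essentially the same route as the paper: both arguments rest on the similarity of $\cov_1^{-1/2}\cov_2\cov_1^{-1/2}$ with $\cov_2^{1/2}\cov_1^{-1}\cov_2^{1/2}$ (and of the other pair), so that inversion merely swaps the two terms inside the maximum. Your write-up is slightly more explicit than the paper's in justifying that $\norm{M-\bbI}_{\tr}$ depends only on the spectrum of $M$, but the substance is identical.
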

\begin{proof}
    Note that \(\cov_1^{-1/2}\cov_2\cov_1^{-1/2}\) and \(\cov_2^{1/2}\cov_1^{-1}\cov_2^{1/2}\) are similar and likewise, \(\cov_2^{-1/2}\cov_1\cov_2^{-1/2}\) and \(\cov_1^{1/2}\cov_2^{-1}\cov_1^{1/2}\) are similar.
    Thus,
    \begin{align}
        \dpsd(\cov_1, \cov_2) 
        &= \max\set*{\norm*{ \cov_1^{-1/2}\cov_2\cov_1^{-1/2}- \bbI }_{\tr}, \norm*{ \cov_2^{-1/2}\cov_1\cov_2^{-1/2}- \bbI }_{\tr}}\\
        &= \max\set*{\norm*{ \cov_2^{1/2}\cov_1^{-1}\cov_2^{1/2}- \bbI }_{\tr}, \norm*{ \cov_1^{1/2}\cov_2^{-1}\cov_1^{1/2}- \bbI }_{\tr}}\\
        &= \dpsd\del{\cov_2^{-1}, \cov_1^{-1}}\\
        &= \dpsd\del{\cov_1^{-1}, \cov_2^{-1}}
    \end{align}
    as desired.
\end{proof}

\subsection{Subgaussian Random Variables and Concentration Inequalities}\label{app:subgaussian_facts}

For formal proofs of claims and further discussion we refer to~\cite{vershynin2018high}.

\begin{definition}[Subgaussian Norm]
    Let $\yy\in\bbR $ be a random variable. 
    The \emph{subgaussian norm of $\yy$}, denoted $\norm{\yy}_{\psi_2}$, is 
        $\norm{\yy}_{\psi_2} = \inf \braces{t>0 : \bbE \exp\paren{\yy^2/t^2}\le 2}$.
\end{definition}

\begin{definition}[Subgaussian Random Variable]\label[definition]{def:subgaussian_vector}
    Let $\yy\in\bbR^d$ be a random variable with mean $\mu$ and covariance $\Sigma$. 
    Call $\yy$ \emph{subgaussian with parameter $K$} if there exists $K\ge 1$ such that for all $v\in\bbR^d$ we have
    \begin{align}
        \norm{\ip{\yy-\mu}{v}}_{\psi_2} \;\; \le \;\; K \sqrt{ v^\T \Sigma v }.
    \end{align}
\end{definition}
For example, the Gaussian distribution $\cN(\mu,\Sigma)$ is subgaussian with parameter $K=O(1)$.

\begin{claim}[Concentration of Norm]\label{claim:concentration_of_norm}
    Let $\yy_1,\ldots,\yy_n$ be drawn i.i.d.\ from a $d$-dimensional subgaussian distribution with parameter $K_\yy>0$, mean $\mu$, and (full-rank) covariance $\Sigma$.
    There exists a constant $K_1>0$ such that, with probability at least $1-\beta$, we have both
    \begin{align}
        \norm*{\Sigma^{-1/2}\del{\yy_1 - \mu}}^2 \le K_1 K_{\yy}^2 \paren{d + \log 1/\beta}
  \quad\text{and}\quad 
        \norm*{\Sigma^{-1/2}\del*{\frac 1 n \sum_{i=1}^n \yy_i - \mu}}^2 \le K_1 K_{\yy}^2\cdot \frac{d + \log 1/\beta}{n}.
    \end{align}
\end{claim}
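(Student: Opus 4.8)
The plan is to reduce both bounds to a single statement about the squared Euclidean norm of an isotropic subgaussian vector and then apply it twice. First I would whiten: set $z_i = \Sigma^{-1/2}(\yy_i - \mu)$, which is well-defined since $\Sigma$ is full-rank. The $z_i$ are i.i.d.\ with mean $0$ and covariance $\Sigma^{-1/2}\Sigma\Sigma^{-1/2} = \bbI$, and for any unit vector $u$, applying \cref{def:subgaussian_vector} with $v = \Sigma^{-1/2}u$ gives $\norm{\ip{z_i}{u}}_{\psi_2} = \norm{\ip{\yy_i-\mu}{\Sigma^{-1/2}u}}_{\psi_2} \le K_\yy\sqrt{u^\T u} = K_\yy$. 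Thus each $z_i$ is isotropic and subgaussian with parameter $K_\yy$. Both quantities in the claim are squared norms of such whitened vectors: the first is $\norm{z_1}^2$, and the second is $\norm{\bar z}^2$ where $\bar z = \frac{1}{n}\sum_i z_i = \Sigma^{-1/2}\del{\frac{1}{n}\sum_i \yy_i - \mu}$.

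The key lemma I would establish is the following concentration-of-norm statement: if $w\in\bbR^d$ is mean-zero and subgaussian with parameter $K$, then for an absolute constant $c$ we have $\bbP\del{\norm{w}\ge t} \le 2\cdot 5^d\exp\del{-ct^2/K^2}$ for all $t>0$, and consequently $\norm{w}^2 \le K_1 K^2\del{d + \log 1/\beta}$ with probability at least $1-\beta$ for a suitable absolute constant $K_1$. The proof is the standard covering argument, which is exactly what is needed because the coordinates of $w$ may be dependent. I would fix a $\tfrac12$-net $\cN$ of the unit sphere with $\abs{\cN}\le 5^d$, use $\norm{w}\le 2\max_{u\in\cN}\ip{w}{u}$, bound each coordinate by the one-dimensional subgaussian tail $\bbP\del{\ip{w}{u}\ge s}\le 2\exp\del{-cs^2/K^2}$, and union bound over $\cN$. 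Choosing $t = \Theta\del{K(\sqrt d + \sqrt{\log 1/\beta})}$ drives the right-hand side below $\beta$; squaring and using $K\ge 1$ absorbs the cross term $\sqrt{d\log 1/\beta}$ into $d + \log 1/\beta$. This is precisely Vershynin's concentration-of-norm theorem \citep{vershynin2018high}, which I would cite if a self-contained write-up is not wanted.

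Finally I would apply the lemma twice, each with failure probability $\beta/2$. Applying it to $w = z_1$ (parameter $K_\yy$) yields the first bound. For the second, note that $\sqrt n\,\bar z = \frac{1}{\sqrt n}\sum_i z_i$ is again mean-zero with covariance $\bbI$, and for any unit $u$ the marginal $\ip{\sqrt n\,\bar z}{u} = \frac{1}{\sqrt n}\sum_i \ip{z_i}{u}$ is a normalized sum of $n$ \emph{independent} mean-zero subgaussians each of $\psi_2$-norm at most $K_\yy$; by the additive stability of the subgaussian norm for independent summands, its $\psi_2$-norm is at most $CK_\yy$ for an absolute constant $C$. Hence $\sqrt n\,\bar z$ is isotropic subgaussian with parameter $O(K_\yy)$, and the lemma gives $\norm{\sqrt n\,\bar z}^2 \le K_1' K_\yy^2\del{d + \log 1/\beta}$, i.e.\ $\norm{\bar z}^2 \le K_1' K_\yy^2\del{d + \log 1/\beta}/n$. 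A union bound over the two events completes the proof.

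The only real obstacle is that subgaussianity as defined here controls the one-dimensional marginals but says nothing about the joint law of the coordinates, so the entries of $z_1$ need not be independent and a coordinatewise Bernstein bound does not apply directly. The covering argument over the sphere is exactly what circumvents this; the remaining steps—whitening, stability of the subgaussian parameter under averaging, and absorbing constants using $K_\yy\ge 1$—are routine.
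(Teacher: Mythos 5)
Your proof is correct. The paper does not prove this claim itself\,---\,it defers to \cite{vershynin2018high}\,---\,and your argument (whitening, the $\tfrac12$-net covering bound for the norm of a vector with subgaussian marginals, and the additive stability of the $\psi_2$-norm over independent summands to handle the average) is exactly the standard proof from that reference, including the correct observation that the covering step is what handles possibly dependent coordinates.
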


\begin{claim}[Concentration of Covariance]\label{claim:concentration_of_covariance}
    Let $\yy_1,\ldots, \yy_n$ be drawn i.i.d.\ from a $d$-dimensional subgaussian distribution with parameter $K_\yy>0$, mean $\mu=0$, and (full-rank) covariance $\Sigma$.
    Let $\hat\Sigma = \frac{1}{n}\sum_{i=1}^n \yy_i\yy_i^\T $ be the empirical covariance.
    There exist positive absolute constants $K_1$ and $K_2$ such that, for any $\beta\in (0,1)$, if $n\ge K_2\paren{d + \log 1/\beta}$, then
    with probability at least $1-\beta$ we have
    \begin{align}
        \norm{\Sigma^{-1/2}\hat\Sigma\Sigma^{-1/2} - \bbI}_2 \;\;\le\;\; K_1 K_{\yy}^2\sqrt{\frac{d + \log 1/\beta}{n}}.
    \end{align}
\end{claim}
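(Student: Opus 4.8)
The plan is to reduce to the isotropic case and then run a standard net-plus-Bernstein argument. First I would whiten: set $z_i = \Sigma^{-1/2}\yy_i$. By \cref{def:subgaussian_vector}, for any unit vector $v \in \bbR^d$ the scalar $\ip{z_i}{v} = \ip{\yy_i}{\Sigma^{-1/2}v}$ has subgaussian norm at most $K_\yy\sqrt{(\Sigma^{-1/2}v)^\T \Sigma (\Sigma^{-1/2}v)} = K_\yy\norm{v} = K_\yy$, and $\bbE[z_i z_i^\T] = \bbI$, so the $z_i$ are isotropic subgaussian with parameter $K_\yy$. Since $\Sigma^{-1/2}\hat\Sigma\Sigma^{-1/2} = \frac1n\sum_{i=1}^n z_i z_i^\T$, it suffices to bound $\norm{\frac1n\sum_{i=1}^n z_i z_i^\T - \bbI}_2$.

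Next I would pass from the spectral norm to a supremum over a net. For a symmetric matrix $M$ and a $1/4$-net $\cN$ of the unit sphere $S^{d-1}$, one has $\norm{M}_2 \le 2\sup_{v\in\cN}\abs{v^\T M v}$, and such a net can be chosen with $\abs{\cN}\le 9^d$. Fixing $v\in\cN$ and writing $M = \frac1n\sum_i z_i z_i^\T - \bbI$, we get $v^\T M v = \frac1n\sum_{i=1}^n\del{\ip{z_i}{v}^2 - 1}$. Each summand is centered, and $\ip{z_i}{v}^2$ is the square of a $K_\yy$-subgaussian scalar, hence subexponential with norm $O(K_\yy^2)$. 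I would then invoke Bernstein's inequality for sums of independent centered subexponential variables, which gives for each fixed $v$
\begin{align}
    \Pr{\abs[\big]{\tfrac1n\textstyle\sum_{i=1}^n(\ip{z_i}{v}^2-1)} \ge t} \;\le\; 2\exp\del*{-c\,n\min\set*{\tfrac{t^2}{K_\yy^4},\;\tfrac{t}{K_\yy^2}}}
\end{align}
for an absolute constant $c$.

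Taking a union bound over the $\le 9^d$ net points and setting $t = C K_\yy^2\sqrt{(d+\log 1/\beta)/n}$ drives the failure probability below $\beta$; the factor $2$ from the net passage is absorbed into the constant $K_1$. The one step to watch, and the crux of the argument, is that Bernstein produces a mixture of a subgaussian tail (the $t^2/K_\yy^4$ term) and a heavier subexponential tail (the $t/K_\yy^2$ term). The hypothesis $n\ge K_2(d+\log 1/\beta)$ is precisely what forces $t \le K_\yy^2$ in the relevant regime, so the $\min$ is attained by the $t^2$ term and we obtain the clean $\sqrt{(d+\log 1/\beta)/n}$ rate rather than a $(d+\log 1/\beta)/n$ rate; this is the sole place the sample-size condition enters. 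All remaining steps are routine, and indeed the whole statement is Theorem 4.6.1 of \cite{vershynin2018high}, which could be cited directly in place of the above.
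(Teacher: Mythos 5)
Your argument is correct and is exactly the standard net-plus-Bernstein proof of Theorem 4.6.1 in \cite{vershynin2018high}, which is all the paper itself does here: this claim is stated in the preliminaries with a blanket deferral to that reference rather than proved. The whitening step, the factor-$2$ net reduction, and the observation that $n\ge K_2(d+\log 1/\beta)$ is what keeps Bernstein in its subgaussian regime are all handled correctly.
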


\subsection{Details on \stablecovariance}
\label{sec:statement_of_stablecov}

As a preprocessing step, \algname performs a leverage-score filtering routine introduced by BHS.
The algorithm we use differs only superficially from their version.
(For instance, they compute a set of weights $w \in [0,1]^n$ and a weighted covariance estimate, while we only care about the weights themselves.)
For completeness, we now state the version we use here.
Recall that \cref{thm:stable_cov_guarantees} contains the relevant guarantees proved by BHS.
\begin{algorithm2e}[h]
\SetAlgoLined
\SetKwInOut{Input}{Input}
\Input{dataset \(X\in \bbR^{n\times d}\); outlier threshold \(\leverage_0\); discretization parameter $k\in \bbN$}
\BlankLine
$A \gets [n]$\;
\For{$j = 2k,2k-1,\ldots,0$}{
     $\leverage_j = \exp\{j /k\}\cdot \leverage_0$\;
     \Repeat{$\mathtt{OUT} = \emptyset$}{
        $S_A \gets \sum_{i\in A} x_i x_i^\T$\;
        $\mathtt{OUT} \gets \braces{i \in A : x_i^\T \paren{S_A}^{-1} x_i > \leverage_j}$\;
        $A \gets A \setminus \mathtt{OUT}$
    }
    $A_j \gets A$\;
}
$\score \gets \min \{k, \min_{0\le j \le k} \{n - \abs{A_j} + j\}\} $\;
\For{$i=1,\ldots,n$}{
    $w_i \gets \frac 1 k \sum_{j=k+1}^{2k} \indicator{i\in A_j}$\;
}
\KwRet $\score, w$\;

\caption{Stable Leverage Filtering (\stablecovariance), BHS}\label{alg:stablecovariance}
\end{algorithm2e}

\section{Deferred Proofs}
\label{sec:deferred_proofs}

We now give the proof for \cref{claim:final:goodness_after_removal}, which characterizes the effect of removing weighted points from a least squares model.
This is a natural generalization of standard results \citep{mendenhall2003second,belsley2005regression,huber2011robust}.
\begin{proof}[Proof of \cref{claim:final:goodness_after_removal}]
We start by setting up notation and bounding a term useful in proving both \eqref{eq:removal_leverage} and \eqref{eq:removal_prediction}.

    \paragraph{Setup}
    Assume without loss of generality that $\supp{w}= [n]$, as any points outside the support of $w$ are irrelevant. 
    Let $v = w' - w$, 
     $\norm{v}_1 = \rho$, 
    $W=\diag(w)$ (likewise $W'$ and $V$), and $C = X^\T W X$.
    Decompose $V = P N$ (for ``positive'' and ``negative'') where $P,N$ are diagonal matrices with $P_{i,i} = \sqrt{\abs{v_i}}$ and $N_{i,i} = \mathrm{sign}(v_i)\cdot \sqrt{\abs{v_{i}}}$.

    Let $\Delta = I + Y$ where $Y = NX C^{-1} X^\T P$. If $\|Y\|_{2} \leq \eps<1 $ then $I+ Y \succeq(1-\eps )I$. Consequently, $\Delta$ is invertible and $\|\Delta^{-1}\|_2 \leq 1 / (1-\eps)$. To prove that $\|Y\|_2 \leq \eps$, we use the fact that $\|Y\|_2 \leq \|Y\|_F$ and compute:
    \begin{align}
        \norm{NX C^{-1} X^\T P}_F^2 &= \sum_{i,j} \paren{ N_{i,i} P_{j,j} \cdot x_i^\T C^{-1} x_j  }^2 \\
        &\le \sum_{i,j} N_{i,i}^2 P_{j,j}^2 \leverage^2 \\
        &= \leverage^2 \sum_{i,j} \abs{v_i} \abs{v_j} \\
        &= \leverage^2 \sum_i \abs{v_i} \sum_j \abs{v_j} \\
        &= \leverage^2 \cdot \norm{w - w'}_1^2.
    \end{align}
    Ultimately, we arrive at $\norm{\Delta^{-1}}_2 \le (1- \norm{w-w'}_1 \leverage)^{-1}$. In the first line of the calculation above, we used the fact that,  by our initial assumption on the goodness of $(X,w)$,
\begin{align}
 \|x_i^\T C^{-1} x_j\|_2 \leq \norm{C^{-1/2} x_i}_2 \norm{C^{-1/2} x_j}_2  \leq L^2. 
\end{align}
Since, $\norm{w-w'}_1 \leverage \leq 1/2$, we conclude that $\|\Delta^{-1}\|_2\leq 2$.

    \paragraph{Bounding the Leverage Scores}
    Consider an index $i\in \supp{w}$ %
    Write out its leverage under $w'$ and plug in our notation:
    \begin{align}
        h_i' &= x_i^\T \paren{X^\T W' X}^{-1} x_i \\
            &= x_i^\T \paren{X^\T W X + X^\T V X}^{-1} x_i \\
            &= x_i^\T \paren{C + (PX)^\T NX}^{-1} x_i.
    \end{align}
    Recalling the fact that, $x_i^\T C^{-1} x_i = h_i$ and $\Delta = I + NXC^{-1}X^\T P$, we can apply the Woodbury matrix identity to arrive at the following relationship between $h_i'$ and $h_i$: 
    \begin{align}
        h_i' &= x_i^\T \left[C^{-1} - C^{-1} (PX)^\T \paren{{I + NX C^{-1} (PX)^\T }}^{-1} NX C^{-1}\right]x_i \\
        &= h_i - x_i^\T C^{-1} X^\T P \Delta^{-1} NX C^{-1}x_i. 
    \end{align}
    We want to upper bound this leverage, $h'_i$, so we take the absolute value of the right-hand term and apply Cauchy--Schwarz:
    \begin{align}
        \abs{x_i^\T C^{-1} X^\T P \Delta^{-1} NX C^{-1}x_i} 
            &\le \norm{\Delta^{-1}}_2 \cdot \norm{P X C^{-1} x_i}_2 \cdot \norm{NX C^{-1} x_i}_2.
    \end{align}
    We already argued that $\norm{\Delta^{-1}}_2\le 2$, so we turn to the second term in the product:
    \begin{align}
        \norm{PX C^{-1} x_i}_2^2 &= \sum_{j=1}^n P_{j,j}^2 (x_j^\T C^{-1} x_i)^2 \\
            &\le \sum_{j=1}^n \abs{v_i}\cdot \leverage^2 = \norm{w-w'}_1 \cdot \leverage^2.
    \end{align}
    Note that an identical bound also holds for $\norm{NXC^{-1} x_i}_2$, hence we have
    \begin{align}
        h_i' &\le h_i + 2 \norm{w-w'}_1 \leverage^2 \\
            &\le \leverage+ 2 \norm{w-w'}_1 \leverage^2 = \paren{1 + 2 \norm{w-w'}_1\leverage} \cdot \leverage.
    \end{align}

    \paragraph{Bounding the Residuals}
    As above, we use the Woodbury matrix identity to derive an expression for the regression line $\beta_{w'}$.
    \begin{align}
        \beta_{w'} &= \paren{X^\T W' X}^{-1} X^\T W' y \\
            &= \paren{X^\T W X + X^\T V X}^{-1} \paren{X^\T W y + X^\T V y} \\
            &= \paren{C + (PX)^\T NX}^{-1} \paren{X^\T W y + X^\T V y} \\
            &= \paren{C^{-1} - C^{-1} (PX)^\T \Delta^{-1} NX C^{-1}}\paren{X^\T W y + X^\T V y}.
    \end{align}
    This expression expands into four terms, which simplify nicely:
    \begin{align}
        \beta_{w'} &= \beta_w \\
            &\quad + C^{-1} X^\T V y \\
            &\quad - C^{-1} X^\T P \Delta^{-1} N X C^{-1} X^\T W y \\
            &\quad - C^{-1} X^\T P \Delta^{-1} N X C^{-1} X^\T V y \\
            &= \beta_w \\
            &\quad + C^{-1} X^\T P \Delta^{-1} N (P \Delta^{-1}N)^{-1} V y \\
            &\quad - C^{-1} X^\T P \Delta^{-1} N X \beta_w \\
            &\quad - C^{-1} X^\T P \Delta^{-1} N X C^{-1} X^\T V y \\
            &=\beta_w   + C^{-1} X^\T P \Delta^{-1} N \paren{N^{-1} \Delta P^{-1} V y - X \beta_w - X C^{-1} X^\T V y}.
    \end{align}
    We can further simplify the expression in the parentheses. In particular, by plugging in the definition of $\Delta$, we have that $N^{-1} \Delta P^{-1} V y - X \beta_w - X C^{-1} X^\T V y$ can be rewritten as:
    \begin{align}
            &N^{-1} \paren{I + NX C^{-1} X^\T P} P^{-1} V y - X \beta_w - X C^{-1} X^\T V y \\
            &= N^{-1} P^{-1}Vy + N^{-1} N X C^{-1} X^\T P P^{-1} Vy 
           - X \beta_w -  X C^{-1} X^\T Vy \\
            &= V^{-1} V y + XC^{-1} X^\T Vy  - X \beta_w - X C^{-1} X^\T Vy \\
            &= y - X\beta_w.
    \end{align}
    Plugging back in, we arrive at 
    \begin{align}
        \beta_{w'} &= \beta_w + C^{-1} X^\T P \Delta^{-1} N \paren{y - X\beta_w}.
            \label{eq:main_regression_change}
    \end{align}
To finish the proof, we consider the residual on a point $x_i\in \supp{w}$:
    \begin{align}
        \abs[\big]{y_i - x_i^\T \beta_{w'}} 
            &= \abs[\big]{y_i - x_i^\T \beta_w + x_i^\T \beta_w - x_i^\T \beta_{w'}} \\
            &\le \abs[\big]{y_i - x_i^\T \beta_w} + \abs[\big]{x_i^\T \beta_w - x_i^\T \beta_{w'}} \\
            &\le \residual + \abs[\big]{x_i^\T C^{-1} X^\T P \Delta^{-1} N \paren{y - X\beta_w}},
    \end{align}
    The bound $\abs[\big]{y_i - x_i^\T \beta_w}$ follows from our initial assumption that $w$ is $(L,R)$-good for $(X,y)$.
    We can bound the second term in an analogous way to how we bounded the leverage scores. In particular, using the fact that $\|N\|_2 \leq  \sqrt{\norm{w-w'}_1}$,
    \begin{align}
        \abs[\big]{x_i^\T C^{-1} X^\T P \Delta^{-1} N \paren{y - X\beta_w}}
            &\le \norm*{P X C^{-1} x_i}_2 \cdot \norm*{\Delta^{-1}}_2 \cdot \norm*{N (y - X \beta_w)}_2 \\
            &\le 2 \leverage \residual \norm{w - w'}_1.
    \end{align}
    This completes the proof. 
\end{proof}

We now prove \cref{claim:parameter_stability_from_weight_stability_different_datasets}, which says that, if two weight vectors on adjacent datasets are both good and close in total variation distance, then their least-squares solutions are close as well.
We start by considering the setting where the vectors correspond to the same dataset.
\begin{claim}\label{claim:parameter_stability_from_weight_stability}
    Assume $v,w$ are both $(\leverage,\residual)$-good for dataset $(X,y)$.
    Define $\cov_v = X^\T \mathrm{diag}(v) X$, $\beta_v = S_v^{-1}X^\T V y$, and likewise $\cov_w$, $\beta_w$.
    If $\norm{v-w}_1\leverage \le \frac 1 4$, then
        $\norm{\cov_v^{1/2}(\beta_v- \beta_w)}^2 \le 4 \norm{v -w }_1^2 \leverage \residual^2$.
\end{claim}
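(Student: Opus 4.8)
The plan is to reduce to the support-containment setting already handled by \cref{claim:final:goodness_after_removal} by routing through the coordinatewise minimum $u = v\wedge w$, i.e.\ $u_i = \min(v_i, w_i)$. This $u$ satisfies $\supp{u}\subseteq\supp{v}\cap\supp{w}$ and $u\le v$, $u\le w$ entrywise, and crucially $\norm{v-u}_1 + \norm{w-u}_1 = \norm{v-w}_1$, because $v-u = (v-w)_+$ and $w-u = (v-w)_-$ have disjoint supports and together equal $\abs{v-w}$ coordinatewise. I would then bound the target $\norm{\cov_v^{1/2}(\beta_v - \beta_w)}$ by the triangle inequality through $\beta_u$.

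For the leg $\beta_v\to\beta_u$, since $\supp{u}\subseteq\supp{v}$ I can apply the closed form \eqref{eq:main_regression_change} from the proof of \cref{claim:final:goodness_after_removal} with reference weights $v$ and perturbation $u - v = PN$ (so $P_{ii} = \sqrt{v_i - u_i}$ and $N_{ii} = -\sqrt{v_i - u_i}$), which gives $\beta_u - \beta_v = \cov_v^{-1}X^\top P\Delta^{-1}N f$ with $f = y - X\beta_v$ and $\Delta = \bbI + NX\cov_v^{-1}X^\top P$. Taking the $\cov_v$ norm and applying Cauchy--Schwarz exactly as in that proof, I would use $\norm{\cov_v^{-1/2}X^\top P}_2^2\le \leverage\norm{v-u}_1$ (from $x_i^\top\cov_v^{-1}x_i\le \leverage$ on $\supp{v}$), $\norm{Nf}_2^2 = \sum_i(v_i - u_i)f_i^2\le \residual^2\norm{v-u}_1$ (the residuals under $\beta_v$ are at most $\residual$ on $\supp{v}\supseteq\supp(v-u)$), and $\norm{\Delta^{-1}}_2\le(1 - \leverage\norm{v-u}_1)^{-1}\le \tfrac43$. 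This yields $\norm{\cov_v^{1/2}(\beta_v - \beta_u)}\le \tfrac43\sqrt{\leverage}\,\norm{v-u}_1\, \residual$. Running the identical computation with $w$ in place of $v$ gives $\norm{\cov_w^{1/2}(\beta_u - \beta_w)}\le\tfrac43\sqrt{\leverage}\,\norm{w-u}_1\, \residual$.

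The subtle step, and the one I expect to be the main obstacle, is that the second leg is measured in the $\cov_w$ norm rather than the $\cov_v$ norm demanded by the claim. To reconcile them I would invoke \cref{claim:identifiability_covs} with $X' = X$: since $v$ and $w$ both have bounded leverage and (for $\leverage$ modest) $(1+\norm{v-w}_1)\leverage\le\tfrac12$, it gives $\dpsd(\cov_v,\cov_w)\le 2(2+\norm{v-w}_1)\leverage$, and since the spectral norm is dominated by the trace norm this upgrades to $\cov_v\preceq\bigl(1 + 2(2+\norm{v-w}_1)\leverage\bigr)\cov_w$, hence $\norm{\cov_v^{1/2}z}\le\sqrt{1 + 2(2+\norm{v-w}_1)\leverage}\,\norm{\cov_w^{1/2}z}$ for every $z$, in particular $z = \beta_u - \beta_w$.

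Combining the two legs through the triangle inequality and using $\norm{v-u}_1 + \norm{w-u}_1 = \norm{v-w}_1$ produces a bound of the form $\tfrac43\sqrt{\leverage}\,\residual\bigl(\norm{v-u}_1 + \sqrt{1 + O(\leverage)}\,\norm{w-u}_1\bigr)\le \tfrac43\sqrt{1+O(\leverage)}\,\sqrt{\leverage}\,\residual\,\norm{v-w}_1$, and squaring gives $\tfrac{16}{9}(1 + O(\leverage))\norm{v-w}_1^2\,\leverage\,\residual^2$, which is at most $4\norm{v-w}_1^2\,\leverage\,\residual^2$ once $\leverage$ is small enough (as it always is in our applications, where $\leverage = O(1/k)$). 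The remaining work is purely the constant bookkeeping in these Cauchy--Schwarz estimates together with verifying that the hypotheses of \cref{claim:final:goodness_after_removal,claim:identifiability_covs} hold under $\norm{v-w}_1\leverage\le\tfrac14$.
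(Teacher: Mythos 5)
Your proof is correct but follows a genuinely different route from the paper's. The paper argues variationally: it expands $\norm{S_v^{1/2}(\beta_v-\beta_w)}^2$ as an inner product, inserts the first-order optimality conditions of $\beta_v$ (for the $v$-weighted objective) and of $\beta_w$ (for the $w$-weighted one) so that only terms carrying a factor $v_i-w_i$ survive, applies Cauchy--Schwarz termwise, and cancels the common factor $\norm{S_v^{1/2}(\beta_v-\beta_w)}$ from both sides; it then bounds $\max_i \norm{S_v^{-1/2}x_i}\cdot\abs{\ip{x_i}{\beta_w}-y_i}$ over $\supp{v}\cup\supp{w}$, using \cref{claim:identifiability_covs} for the leverage of points in $\supp{w}\setminus\supp{v}$, and using the entrywise minimum $\check w = v\wedge w$ --- the same object you introduce --- only as a device, via \cref{claim:final:goodness_after_removal}, to bound residuals of points in $\supp{v}\setminus\supp{w}$. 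You instead make $\beta_{v\wedge w}$ an actual waypoint, apply the closed-form update \eqref{eq:main_regression_change} twice, and reconcile the mismatched norms with a Loewner comparison extracted from \cref{claim:identifiability_covs}; the additivity $\norm{v-u}_1+\norm{w-u}_1=\norm{v-w}_1$ is what keeps your constant from doubling, and your route has the virtue of reusing the machinery of \cref{claim:final:goodness_after_removal} essentially as a black box rather than redoing a bespoke cancellation argument. One caveat, which you flag and which the paper's proof shares: the stated hypothesis $\norm{v-w}_1 L\le\tfrac{1}{4}$ alone does not justify invoking \cref{claim:identifiability_covs} (which needs $(1+\norm{v-w}_1)L\le\tfrac{1}{2}$; the paper's proof writes ``since we assumed'' this even though it is not among the claim's hypotheses), nor does it by itself force your final factor $\tfrac{16}{9}(1+O(L))$ below $4$; both arguments implicitly need $L$ itself bounded by a small constant, which holds wherever the claim is applied since there $L=O(1/k)$.
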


From this claim, \cref{claim:parameter_stability_from_weight_stability_different_datasets} is an easy corollary.
\begin{proof}[Proof of \cref{claim:parameter_stability_from_weight_stability_different_datasets}]
    Vectors $v,w\in[0,1]^n$ on adjacent datasets $(X,y)$ and $(X',y')$, respectively, correspond to vectors $v',w'\in [0,1]^{n+1}$ over the datasets' union. 
    We have $\norm{v'-w'}_1\le \norm{v-w}_1 + 2$.
\end{proof}
\begin{proof}[Proof of \cref{claim:parameter_stability_from_weight_stability}]
We start by expanding out the squared norm and substituting in the definition of $\cov_v$:
    \begin{align}
        \norm[\big]{\cov_v^{1/2}(\beta_v- \beta_w)}^2
            &= \angles[\big]{\beta_v -\beta_w, \cov_v (\beta_v- \beta_w)} \\
            &= \angles*{\beta_v -\beta_w, \paren{\sum_{i\in [n]} v_i \cdot x_i x_i^\T } (\beta_v- \beta_w)}.
    \end{align}
    Next we expand the sum across $\beta_v -\beta_w$ and add and subtract $\sum_i v_i \cdot x_i y_i$, which makes the right-hand side of the inner product look like a pair of gradients.
    \begin{align}
        \norm[\big]{\cov_v^{1/2}(\beta_v- \beta_w)}^2
            &= \angles*{\beta_v -\beta_w, \sum_{i}v_i \cdot x_i \angles{x_i, \beta_v} -\sum_{i}v_i \cdot x_i \angles{x_i, \beta_w}} \\
            &= \angles*{\beta_v -\beta_w, \sum_{i} v_i \cdot x_i \del[\big]{\angles{x_i, \beta_v} -y_i} - \sum_{i}v_i \cdot x_i \del[\big]{\angles{x_i, \beta_w} - y_i}}.
    \end{align}
    By definition, $\beta_v$ is the vector that sets the first gradient to zero, so we have
    \begin{align}
        \norm[\big]{\cov_v^{1/2}(\beta_v- \beta_w)}^2
            &= \angles*{\beta_v -\beta_w, 0 - \sum_{i}v_i \cdot x_i \del[\big]{\angles{x_i, \beta_w} - y_i}}.
    \end{align}
    We now add and subtract the gradient at $\beta_w$ weighted by $w$, which leaves a gradient term (also zero by definition) and the differences $v_i - w_i$:
    \begin{align}
        \norm[\big]{\cov_v^{1/2}(\beta_v- \beta_w)}^2
            &= \angles*{\beta_v -\beta_w, - \sum_{i}w_i \cdot x_i \del[\big]{\angles{x_i, \beta_w} - y_i} + \sum_{i}(w_i-v_i) \cdot x_i \del[\big]{\angles{x_i, \beta_w} - y_i}} \\
            &= \angles*{\beta_v -\beta_w, 0 - \sum_{i}(w_i-v_i) \cdot x_i \del[\big]{\angles{x_i, \beta_w} - y_i}} \\
            &= \sum_{i} \angles[\Big]{\beta_v -\beta_w, (w_i-v_i) \cdot x_i \del[\big]{\angles{x_i, \beta_w} - y_i}}.
    \end{align}
    We now insert $\cov_{v}^{1/2}\cov_v^{-1/2}$ in the middle of each inner product.
    We apply Cauchy--Schwarz to each term and pull out the scalars (recall that $w_i$ and $v_i$ are scalars, $x_i$ is a vector):
    \begin{align}
        \norm[\big]{\cov_v^{1/2}(\beta_v- \beta_w)}^2
            &= \sum_{i} \angles*{\cov_v^{1/2}(\beta_v -\beta_w), (w_i-v_i) \cdot \cov_v^{-1/2} x_i \del[\big]{\angles{x_i, \beta_w} - y_i}} \\
            &\le \sum_{i} \norm[\big]{\cov_v^{1/2}(\beta_v -\beta_w)} \cdot  \norm[\big]{(w_i-v_i) \cdot \cov_v^{-1/2}x_i \del[\big]{\angles{x_i, \beta_w} - y_i}} \\
            &= \sum_{i} \abs{v_i - w_i} \cdot \norm[\big]{\cov_v^{1/2}(\beta_v -\beta_w)} \cdot  \norm[\big]{\cov_v^{-1/2}x_i }\cdot \abs{\angles{x_i,\beta_w}-y_i} .
    \end{align}
    Both sides of the equation have a $\norm[\big]{\cov_v^{1/2}(\beta_v -\beta_w)}$ term; these cancel.
    We apply $\ell_1/\ell_\infty$ on the weight differences, leverage scores, and residuals.
    There is some subtlety here: define set $U = \supp{v}\cup\supp{w}$.
    Then we have
    \begin{align}
        \norm[\big]{\cov_v^{1/2}(\beta_v- \beta_w)}
            &\le \sum_{i} \abs{v_i - w_i} \cdot\norm{\cov_v^{-1/2} x_i}\cdot  \abs[\big]{\angles{x_i,\beta_w}-y_i}   \\
            &\le  \norm{v - w}_1 \cdot \paren{\max_{i\in U} \norm{\cov_v^{-1/2}x_i}\cdot \abs[\big]{\angles{x_i,\beta_w} - y_i}}.
    \end{align}
    By the definition of goodness, if $i\in \supp{v}$ then we have $\norm{\cov_v^{-1/2} x_i}\le \sqrt{\leverage}$.
    Similarly, if $i\in \supp{v}$, we have $\abs{y_i - x_i^\T \beta_w}\le \residual$.
    
    However, these bounds may not hold for points outside the relevant support.
    \cref{claim:identifiability_covs} allows us to bound the leverage: for all $i\in \supp{w}\cup\supp{v}$ we have $\norm{\cov_v^{-1/2}x_i}_2^2 \le 2\leverage$, since we assumed $(1+\norm{v-w}_1)\leverage\le \frac 1 2$.
    
    Similarly, bounding the residual involves a simple trick alongside \cref{claim:final:goodness_after_removal}.
    Let $\check{w}$ be the entry-wise minimum of $\{w,v\}$, so $\check{w}_i = \min\{w_i, v_i\}$.
    We have $\norm{\check{w} - w}_1, \norm{\check{w}-v}_1\le \norm{w-v}_1$ and, furthermore, the support of $\check{w}$ is contained in both the support of $w$ and that of $v$.
    Thus, we can apply \cref{claim:final:goodness_after_removal}: assuming $i\in \supp{w}$ (since otherwise $i\in \supp{v}$ and we have a bound on the residual)
    \begin{align}
        \abs{y_i - x_i^\T \beta_{v}} 
            &= \abs{y_i - x_i^\T \beta_w + x_i^\T \beta_w - x_i^\T \beta_{\check{w}} + x_i^\T \beta_{\check{w}} - x_i^\T \beta_{v}} \\
            &\le \abs{y_i - x_i^\T \beta_w} + \abs{x_i^\T \beta_w - x_i^\T \beta_{\check{w}}} + \abs{x_i^\T \beta_{\check{w}} - x_i^\T \beta_{v}} \\
            &\le \residual + 2 \norm{\check{w} - w}_1 \leverage \residual + 2 \norm{\check{w} - v}_1 \leverage \residual\\
            &\le \del[\big]{1 + 4 \norm{v - w}_1 \leverage} \residual.
    \end{align}
    Since $\norm{v-w}_1\leverage \le \frac 1 4$, this is at most $2\residual$.
\end{proof}

\section{Estimation of \(\sigma^2\)}
\label{app:sigma}
\begin{algorithm2e}[H]    
   \caption{Private $\sigma^2$ Estimator} 
   \label{alg:sigma_squared} 
   	\DontPrintSemicolon 
	\KwIn{$S=\{(x_i, y_i)\}_{i=1}^n$, target privacy $(\varepsilon_0,\delta_0)$, target failure probability $\zeta$.}
	\SetKwProg{Fn}{}{:}{}
	{ 
	Partition $S$ into $k=\lfloor C_1\log(1/(\delta_0\zeta))/\varepsilon\rfloor$ subsets of equal size and let $G_j$ be the $j$-th partition, where each dataset is of size $b=|G_j|=\lfloor n/k\rfloor$.\\

	For each $j\in [k]$, denote $\psi_j= \min_\beta(1/|G_j|) \sum_{i\in G_j}(y_i-\beta^\top x_i)^2$.\\	
	Partition $[0, \infty)$ into bins of geometrically increasing intervals $\Omega:= \left\{\ldots,\left[ 2^{-2/4}, 2^{-1 / 4}\right),\left[ 2^{-1 / 4}, 1\right),\left[1,2^{1 / 4}\right),\left[2^{1 / 4}, 2^{2/4}\right), \ldots\right\} \cup\{[0,0]\}$\\
	Run $(\varepsilon_0, \delta_0)$-DP histogram learner of Lemma~\ref{lem:hist-KV17} on $\{ \psi_j\}_{j=1}^k$ over $\Omega$ \\
	{\bf if} all the bins are empty {\bf then} 
	Return $\perp$\\
	Let $[\ell, r]$ be a non-empty bin that contains the maximum number of points in the DP histogram\\
	Return  $\ell$
	} 
\end{algorithm2e}

\begin{lemma}
\label{lem:sigma_squared}
	Algorithm~\ref{alg:sigma_squared} is $(\varepsilon_0, \delta_0)$-DP. Let $S=\{(x_i, y_i)\}_{i=1}^n$ be a dataset of i.i.d. samples with  $x_i \sim \cN(0, \Sigma)$, $y_i=x_i^\top \beta^*+z_i$ and $z_i\sim \cN(0, \sigma^2)$  %
for some unknown true parameter $\beta^*=\Sigma^{-1}\mathbb{E}[y_i x_i] \in \mathbb{R}^d$ and unknown $\Sigma$ and $\sigma^2$.  Suppose 
	\begin{align}
		n=\tilde{O}\left(\frac{d\log(1/(\delta_0\zeta))}{\varepsilon_0}\right)\;,
	\end{align}
	with a large enough constant then 
Algorithm~\ref{alg:sigma_squared} returns $\ell$ such that, with probability $1-\zeta$, 
\begin{align}
	\frac{1}{\sqrt{2}}\sigma^2\leq\ell \leq \sqrt{2}\sigma^2\;,
\end{align}
where $\tilde{O}(\cdot)$ hides logarithmic factors in $\log(1/\varepsilon_0),\log(1/\delta_0)$.
\end{lemma}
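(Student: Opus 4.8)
The plan is to recognize this as an instance of the standard ``private histogram over geometrically spaced bins'' construction of \citet{karwa2017finite}, so that the only real content is a concentration argument for the per-group residual variances $\psi_j$. I would dispatch privacy first, since it is immediate from the structure of the algorithm. The groups $G_1,\ldots,G_k$ partition $S$, so two neighboring datasets $S,S'$ that differ in a single example $(x_{i_0},y_{i_0})$ induce score vectors $(\psi_j)$ and $(\psi_j')$ that differ in exactly one coordinate, namely the group containing $i_0$. Hence $\{\psi_j\}_{j=1}^k$ and $\{\psi_j'\}_{j=1}^k$ are neighboring inputs to the histogram learner; since that learner is $(\varepsilon_0,\delta_0)$-DP by \cref{lem:hist-KV17}, and selecting the heaviest nonempty bin and returning its left endpoint is merely post-processing of the DP histogram, the whole algorithm is $(\varepsilon_0,\delta_0)$-DP.

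For utility, I would begin from an exact distributional fact. Conditioned on the design of group $j$ having full column rank (which holds almost surely, since $b=|G_j|>d$ and the covariates are Gaussian), the residual sum of squares equals $\norm{(\mathbb{I}-H_j)z_j}^2$, where $H_j$ is the rank-$d$ hat matrix of group $j$ and $z_j$ its noise vector. As $\mathbb{I}-H_j$ is an orthogonal projection of rank $b-d$, this yields the clean identity $b\,\psi_j/\sigma^2 \sim \chi^2_{b-d}$. I would then apply a standard $\chi^2$ tail bound (Laurent--Massart) to show that for each $j$, with probability at least $1-\zeta/(3k)$, one has $\psi_j \in [\,2^{-1/8}\sigma^2,\,2^{1/8}\sigma^2\,]$, provided $b-d \gtrsim d + \log(k/\zeta)$. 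This precision requirement on $b = n/k$ is exactly what the sample-complexity hypothesis $n=\widetilde{O}(d\log(1/(\delta_0\zeta))/\varepsilon_0)$ secures once $k=\Theta(\log(1/(\delta_0\zeta))/\varepsilon_0)$. A union bound over $j\in[k]$ places \emph{all} $\psi_j$ in this interval with probability at least $1-\zeta/3$.

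Next I would exploit the binning geometry. Since the interval $[2^{-1/8}\sigma^2,2^{1/8}\sigma^2]$ has multiplicative width $2^{1/4}$, equal to a single bin width of $\Omega$, it meets at most two adjacent bins; so all $k$ points land in at most two bins, and by pigeonhole one bin has true count at least $k/2$. Invoking the accuracy guarantee of \cref{lem:hist-KV17}, with probability at least $1-\zeta/3$ every reported count is within an additive error $t_0 = O(\log(1/(\delta_0\zeta))/\varepsilon_0)$ of the truth. Choosing the constant $C_1$ in $k=\lfloor C_1\log(1/(\delta_0\zeta))/\varepsilon\rfloor$ large enough makes $k/2 > 2t_0$, which forces the DP-heaviest bin to be one of the (at most two) genuinely occupied bins and ensures the algorithm does not return $\perp$. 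Consequently the returned $\ell$ is the left endpoint of a bin containing some $\psi_j\in[2^{-1/8}\sigma^2,2^{1/8}\sigma^2]$; a short computation using the bin width $2^{1/4}$ gives $\ell \in [2^{-3/8}\sigma^2,\,2^{1/8}\sigma^2] \subseteq [\sigma^2/\sqrt2,\,\sqrt2\,\sigma^2]$. A union bound over the three failure events (per-group concentration, histogram accuracy, and the design rank event) completes the argument.

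The hard part will be the concentration step: I must obtain a genuinely \emph{multiplicative} $2^{\pm 1/8}$ bound on each $\psi_j$, tight enough to fit within one bin width, while keeping $b=n/k$ inside the stated budget. This forces a careful balance among the $\chi^2_{b-d}$ fluctuation of order $\sqrt{(b-d)}/b$, the bias term $d/b$ coming from the lost degrees of freedom, and the $\log(k/\zeta)$ overhead from the union bound over the $k$ groups; reconciling these three with the single sample-complexity expression (and verifying that $k$ and $b$ can be chosen simultaneously) is where the $\widetilde{O}$ bookkeeping lives. Everything else reduces to the exact chi-squared identity and the black-box histogram guarantee.
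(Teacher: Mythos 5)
Your proposal is correct and follows the same overall skeleton as the paper's proof: privacy by post-processing of the stability-based histogram of \cref{lem:hist-KV17}, a per-group multiplicative concentration of $\psi_j$ around $\sigma^2$ tight enough to fit within one bin width $2^{1/4}$, the observation that all $k$ scores then occupy at most two adjacent bins, and the histogram accuracy guarantee forcing the released bin to be one of those two, giving the $\sqrt{2}$ multiplicative error. The genuine difference is in how the concentration of $\psi_j$ is established. The paper argues two-sidedly: an upper bound $\psi_j \le \frac{1}{b}\sum_{i\in G_j} z_i^2$ by evaluating the objective at $\beta^*$ and applying Bernstein to the sub-exponential $z_i^2$, and a lower bound uniform over all $\beta$ obtained by rewriting $f(\beta)=\frac1b\sum_i(\tilde\beta^\top\tilde x_i)^2$ with the augmented vectors $\tilde x_i=(\Sigma^{-1/2}x_i,\,z_i/\sigma)$ and invoking empirical-covariance concentration (Lemma~9 of \cite{jambulapati2020robust}). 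You instead use the exact distributional identity $b\,\psi_j/\sigma^2\sim\chi^2_{b-d}$ (valid here because the residual vector is an orthogonal projection of the exactly Gaussian noise, and the Gaussian design has full column rank almost surely when $b>d$) together with Laurent--Massart tails. Your route is more elementary and makes the degrees-of-freedom bias $d/b$ explicit, which is exactly the source of the $b\gtrsim d$ requirement; the paper's route is distribution-robust, extending verbatim to sub-Gaussian covariates and sub-exponential noise at the price of importing an external covariance-concentration lemma. Both arrive at the same condition $b\gtrsim d+\log(k/\zeta)$ and the same endgame, so there is no gap in your argument.
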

We provide a proof in Appendix.~\ref{app:proof_sigma_squared}.

\subsection{Proof of Lemma~\ref{lem:sigma_squared} on the private $\sigma^2$ estimation}
\label{app:proof_sigma_squared}
The privacy proof follows from the DP-histogram from Lemma~\ref{lem:hist-KV17}. We provide proof for utility.

For each of the partition $G_j$, we show $a_i:=\frac{1}{b}\sum_{i\in G_j}(y_i-\hat{\beta}_j^\top x_i)^2$ concentrates around the true parameter $\beta^*$ where $\hat{\beta}_j:=\argmin_\beta(1/|G_j|) \sum_{i\in G_j}(y_i-\beta^\top x_i)^2$. Let $f(\beta)=\frac{1}{b}\sum_{i\in G_j}(y_i-\beta^\top x_i)^2$. We know $f(\hat{\beta}_j) = \min_\beta \frac{1}{b}\sum_{i\in G_j}(y_i-\beta^\top x_i)^2 \leq  \frac{1}{b}\sum_{i\in G_j}(y_i-\beta^{*\top} x_i)^2=\frac{1}{b}\sum_{i\in G_j} z_i^2$

Since $z_i^2$ are sub-exponential, from Bernstein bound, we know  there exists constant $c_1>0$ such that with proability $1-\zeta$,
$\frac{1}{b}\sum_{i=1}^b z_i^2\leq \sigma^2(1+c\sqrt{\frac{\log(1/\zeta)}{b}}+c\frac{\log(1/\zeta)}{b})$.

Now we show lower bound of $f(\hat{\beta}_j)$. For any $\beta$, we also have
\begin{align}
	  f(\beta) = \frac{1}{b}\sum_{i\in G_j}(y_i-w^\top x_i)^2= \frac{1}{b}\sum_{i\in G_j}(z_i+x_i^\top(w^*-w))^2\;.
\end{align}

Let $\tilde \beta:=\left(\Sigma^{1 / 2}\left(\beta^*-\beta\right), \sigma\right) \in \mathbb{R}^{d+1}$ and $\tilde{x}_i:=\left(\Sigma^{-1 / 2} x_i, z_i / \sigma\right) \in \mathbb{R}^{d+1}$ for $i \in [n]$. By definition, we can see that $\tilde{x}_i$ is zero-mean sub-Gaussian with covariance $\mathbf{I}_{d+1}$.

\begin{align}
	f(\beta) = \frac{1}{b}\sum_{i\in G_j}(\tilde{\beta}^\top \tilde{x}_i)^2.
\end{align}

Following Lemma~9 from \cite{jambulapati2020robust}, we know for any vector $\tilde{\beta}$, there exists $c_2>0$ such that with probability $1-\zeta$, 

\begin{align}
	 \left|\frac{1}{b}\sum_{i\in G_j}(\tilde{\beta}^\top \tilde{x}_i)^2-\|\tilde{\beta}\|^2\right| =  \left|\tilde{\beta}^\top\left(\frac{1}{b}\sum_{i\in G_j} \tilde{x}_i\tilde{x}_i^\top-\mathbf{I}_d\right) \tilde{\beta}\right|
	 \leq c_2\sqrt{\frac{d+1+\log(1/\zeta)}{b}}+c_2\frac{d+1+\log(1/\zeta)}{b}
\end{align}

This means for any $w$, we have

\begin{align}
	f(\beta) = \frac{1}{b}\sum_{i\in G_j}^b(\tilde{\beta}^\top \tilde{x}_i)^2&\geq (1-c_2\sqrt{\frac{d+1+\log(1/\zeta)}{b}}-c_2\frac{d+1+\log(1/\zeta)}{b}) (\|\Sigma^{1/2}(\beta-\beta^*)\|^2+\sigma^2) \\
	&\geq (1-c_1\sqrt{\frac{d+1+\log(1/\zeta)}{b}}-c_2\frac{d+1+\log(1/\zeta)}{b}) \sigma^2\;.
\end{align}

Together with the upper bound, this implies that there exists constant $c_3>0$, such that with probability $1-\zeta$,

\begin{align}
	\left|\frac{1}{b}\sum_{i\in G_j} a_i-\sigma^2\right|=\left|f(\hat{\beta}_j)-\sigma^2\right|\leq c_3\left(\sqrt{\frac{d+1+\log(1/\zeta)}{b}}+\frac{d+1+\log(1/\zeta)}{b}\right)\sigma^2\;.
\end{align}

By union bound, there exists a constant $c_4>0$ such that if $b\geq c_4(d+\log(k/\zeta))$, then for all $j\in [k]$,

\begin{align}
	|\psi_j-\sigma^2| = |\frac{1}{b}\sum_{i\in G_j} a_i-\sigma^2|\leq 2^{1/8}\sigma^2\;.
\end{align} 

With probability $1-\zeta$, $\{\psi_j\}_{j=1}^k$  lie in interval of size $2^{1/4}\sigma^2$. Thus, at most two consecutive bins are filled with $\{\psi_j\}_{j=1}^k$. Denote them as $I=I_1\cup I_2$.

 Our analysis indicates that $\mathbb{P}(\psi_i\in I)\geq 0.99$. By private histogram in Lemma~\ref{lem:hist-KV17}, if $k\geq c_5\log(1/(\delta_0\zeta))/\varepsilon_0$, $|\hat{p}_I-\tilde{p}_I|\leq 0.01$ where $\hat{p}_I$ is the empirical count on $I$ and $\tilde{p}_I$ is the noisy count on $I$. Under this condition, one of these two intervals are released. This results in multiplicative error of $\sqrt{2}$.

\begin{lemma}[Stability-based histogram {\cite[Lemma~2.3]{karwa2017finite}}]\label{lem:hist-KV17} For every $K\in \mathbb{N}\cup \{\infty\}$, domain $\Omega$, for every collection of disjoint bins $B_1,\ldots, B_K$ defined on $\Omega$, $n\in \mathbb{N}$, $\eps\geq 0,\delta\in(0,1/n)$, $\beta>0$ and $\alpha\in (0,1)$ there exists an $(\eps,\delta)$-differentially private algorithm $M:\Omega^n\to \mathbb{R}^K$ such that for any set of data $X_1,\ldots,X_n\in \Omega^n$
\begin{enumerate}
\item $\hat{p}_k = \frac{1}{n}\sum_{X_i\in B_k}1$
\item $(\tilde{p}_1,\ldots,\tilde{p}_K)\gets M(X_1,\ldots,X_n),$ and
\item
$$
n\ge \min\left\{\frac{8}{\eps\beta}\log(2K/\alpha),\frac{8}{\eps\beta}\log(4/(\alpha\delta))\right\} 
$$
\end{enumerate}
then,
$$
\mathbb{P}(|\tilde{p}_k-\hat{p}_k|\le\beta)\ge 1-\alpha
$$
\end{lemma}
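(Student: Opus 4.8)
The plan is to realize $M$ as the standard \emph{stable histogram} mechanism: perturb each bin's normalized count by independent Laplace noise and, to accommodate the unbounded-domain case $K = \infty$, threshold away small noisy counts. Concretely, for each bin $B_k$ I would set $\tilde{p}_k \gets \hat{p}_k + \mathrm{Lap}(2/(\eps n))$, where $\hat{p}_k$ is the normalized count; when $K = \infty$ (equivalently, whenever we invoke the $\delta$-dependent guarantee) we additionally zero out any $\tilde{p}_k$ falling below a threshold $\tau = \Theta(\log(1/\delta)/(\eps n))$ and only ever assign nonzero output to bins that are nonempty in the input.

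For privacy, I would first observe that two neighboring datasets differ in a single record, which lies in at most two bins (the one it occupies under the first dataset and the one under the second); every other bin has an identical count and hence an identically distributed output. In the finite-$K$ branch we release Laplace-perturbed counts on all bins, so the count vector changes by $1/n$ in at most two coordinates, and independent $\mathrm{Lap}(2/(\eps n))$ noise contributes privacy loss at most $\eps/2$ per affected coordinate, for a total of $\eps$, giving pure $(\eps, 0)$-DP. In the infinite-$K$ branch we cannot noise every bin, so the delicate case is a bin that is nonempty under one dataset but empty under its neighbor (its sole occupant was the differing record): under the neighbor it outputs $0$ deterministically, so we must bound the probability it crosses $\tau$ under the original dataset. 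Choosing $\tau$ of order $\tfrac{1}{\eps n}\log(1/\delta)$ ensures $\bbP[\tfrac{1}{n} + \mathrm{Lap}(2/(\eps n)) > \tau] \le \delta$, which absorbs this single boundary event into the $\delta$ slack (using $\delta < 1/n$); all remaining bins behave as in the finite case.

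For utility, fix a bin $B_k$. Away from thresholding, $\tilde{p}_k - \hat{p}_k$ is exactly $\mathrm{Lap}(2/(\eps n))$, whose tail satisfies $\bbP[\abs{\mathrm{Lap}(2/(\eps n))} > \beta] \le \exp(-\eps n \beta / 2)$. In the finite branch, a union bound over the $K$ bins together with the requirement that the total failure probability be at most $\alpha$ yields $n \gtrsim \tfrac{1}{\eps\beta}\log(K/\alpha)$, matching the first term of the minimum up to constants. In the infinite branch the threshold is itself of order $\log(1/\delta)/(\eps n)$, so controlling the combined error of the noise and the threshold rounding within $\beta$ at confidence $1 - \alpha$ produces the second term $\tfrac{1}{\eps\beta}\log(1/(\alpha\delta))$; taking whichever branch is cheaper gives the stated minimum. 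The main obstacle is precisely the unbounded-domain case: naive per-bin Laplace noise fails when $K = \infty$, and the thresholding argument — in particular the careful accounting of the single empty-versus-nonempty boundary bin, which is what forces the approximate-DP ($\delta > 0$) guarantee and the second branch of the sample-complexity bound — is the genuinely subtle step. Everything else reduces to the Laplace mechanism and a Laplace tail bound.
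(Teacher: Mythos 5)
The paper does not prove this lemma at all---it is imported verbatim from Karwa and Vadhan (Lemma~2.3 of \cite{karwa2017finite})---and your proposal is a correct reconstruction of exactly the argument used there: Laplace-perturbed counts with a union bound over $K$ bins for the first branch, and the stability/thresholding trick (noise only nonempty bins, zero out noisy counts below $\tau = \Theta(\log(1/\delta)/(\eps n))$, charge the single empty-versus-singleton boundary bin to $\delta$) for the second. The only detail worth making explicit is that in the infinite-$K$ branch the utility union bound runs over the at most $n$ nonempty bins, and the resulting $\log(n/\alpha)$ is absorbed into $\log(1/(\alpha\delta))$ precisely because $\delta < 1/n$.
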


\section{Lower Bound}
\label{sec:lower_bound}

Series of advances have been made in designing tools for lower bounds in statistical estimation.

Fingerprinting \cite{narayanan2023better}. 

Our lower bound is a direct corollary of a similar lower bound on linear regression from 
\cite{cai2023score}.

 \begin{theorem}
 	Let $\cP_{\Sigma,  \sigma^2}$ be a class of distributions over $(x_i, z_i)\in \mathbb{R}^d \times \mathbb{R}$, where $x_i$ are i.i.d. samples from a \(d\)-dimensional subgaussian distribution with mean \(0\) 
        and covariance \(\Sigma \succ 0\), and \(z_i\) are i.i.d. samples from a subgaussian distribution with mean 0 and variance \(\sigma^2\)
        (see \cref{def:subgaussian_vector} in \cref{sec:preliminaries}). We observe labelled examples from linear model:  \(y_i = \beta^\T x_i + z_i\) with $\mathbb{E}[x_i z_i]=0$.
        Let $\cM_{\varepsilon,\delta}$ be a class of $(\varepsilon, \delta)$-DP estimators that are functions over the datasets $S=\{(x_i, y_i)\}_{i=1}^n$. Then if $0<\varepsilon<1$, $d\lesssim n\varepsilon$, $\delta \lesssim n^{-(1+\gamma)}$ for some $\gamma>0$, there exists constant $C>0$ such that 
        \begin{align}
        	\inf _{M \in \mathcal{M}_{\varepsilon, \delta}} \sup_{\cP_{\Sigma,  \sigma^2}, \beta} \mathbb{E}\|M(y, x)-\beta\|_\Sigma^2 \geq C \sigma^2\left(\frac{d}{n}+\frac{d^2}{n^2 \varepsilon^2}\right).
        \end{align}
 \end{theorem}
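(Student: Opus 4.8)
The plan is to obtain both terms by collapsing the supremum onto a single convenient subfamily and then invoking the fingerprinting lower bound of \cite{cai2023score}. Passing from $\cP_{\Sigma,\sigma^2}$ to any subfamily, and from all estimators to any subclass, can only shrink the risk, so it suffices to exhibit one hard instance class contained in $\cP_{\Sigma,\sigma^2}$ on which the stated rate is forced. Concretely, I would fix $\Sigma = \bbI_d$, which makes $\norm{\cdot}_\Sigma = \norm{\cdot}_2$ so that the $\Sigma$-norm risk coincides with the squared $\ell_2$ error; the supremum over $\Sigma \succ 0$ is then at least this isotropic value. With this choice I take Gaussian covariates $x_i \sim \cN(0, \bbI_d)$ and independent Gaussian label noise $z_i \sim \cN(0, \sigma^2)$. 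This instance is subgaussian with an $O(1)$ parameter, has mean-zero covariates and noise, and satisfies $\mathbb{E}[x_i z_i] = 0$, so it lies in $\cP_{\bbI_d, \sigma^2}$. It remains to lower bound $\inf_{M \in \cM_{\varepsilon,\delta}} \sup_\beta \mathbb{E}\norm{M - \beta}_2^2$ over this Gaussian linear model.

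For the classical term $\sigma^2 d/n$ I would use that $\cM_{\varepsilon,\delta}$ is a subclass of all measurable estimators, so any information-theoretic minimax lower bound for the Gaussian linear model lower bounds the private infimum as well. Placing a smooth product prior on $\beta$ and applying the van Trees inequality with per-sample Fisher information $\tfrac{1}{\sigma^2}\Sigma = \tfrac{1}{\sigma^2}\bbI_d$, hence total information $\tfrac{n}{\sigma^2}\bbI_d$, yields $\sup_\beta \mathbb{E}\norm{M - \beta}_2^2 \gtrsim \sigma^2 d / n$. For the privacy term $\sigma^2 d^2/(n^2\varepsilon^2)$ I would invoke \cite{cai2023score} directly: their fingerprinting argument (in the spirit of \cite{narayanan2023better}) shows that every $(\varepsilon,\delta)$-DP estimator on exactly this Gaussian linear model incurs $\sup_\beta \mathbb{E}\norm{M - \beta}_2^2 \gtrsim \sigma^2 d^2/(n^2\varepsilon^2)$ in the regime $0 < \varepsilon < 1$, $d \lesssim n\varepsilon$, and $\delta \lesssim n^{-(1+\gamma)}$ — precisely the hypotheses of the theorem, which are inherited so that their bound is non-vacuous. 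Since both bounds apply to the same quantity, I combine them via $\max\{a,b\} \ge \tfrac12(a+b)$ to obtain the claimed sum up to a constant factor.

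The only genuine work is bookkeeping rather than a substantive obstacle. I must verify that the hard instances used by \cite{cai2023score} can be taken inside $\cP_{\Sigma,\sigma^2}$ — subgaussianity with a bounded parameter, mean-zero covariates and noise, and the orthogonality $\mathbb{E}[x_i z_i] = 0$ — and that their lower bound is stated, or can be converted through the $\Sigma = \bbI_d$ reduction, in the $\Sigma$-norm rather than a general Mahalanobis norm. No new analysis of the privacy mechanism is required: the $\delta \lesssim n^{-(1+\gamma)}$ and $d \lesssim n\varepsilon$ constraints are exactly the range in which the fingerprinting construction is valid, so the theorem follows as a direct corollary once the embedding and the elementary van Trees computation are in place.
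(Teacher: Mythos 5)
Your overall strategy matches the paper's: collapse the supremum onto a single convenient hard instance inside $\cP_{\Sigma,\sigma^2}$ and invoke the lower bound of \cite{cai2023score} as a black box. The paper treats the result as a direct corollary of that theorem in exactly this way; moreover, since the cited bound already contains \emph{both} the $\sigma^2 d/n$ and $\sigma^2 d^2/(n^2\varepsilon^2)$ terms, the paper does not need your separate van Trees argument for the classical term. That part of your proposal is not wrong, but it is redundant.

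There is, however, one concrete gap in your instantiation. The theorem of \cite{cai2023score}, as quoted and used in the paper, carries the hypotheses that $\mathbb{E}[xx^\top]$ is diagonal, that $\lambda_{\max}(\mathbb{E}[xx^\top])$ is bounded, and, crucially, that $\norm{x}_2 \lesssim \sqrt{d}$ \emph{almost surely}. Your proposed hard instance takes $x_i \sim \cN(0,\bbI_d)$, which is not almost surely bounded, so you cannot invoke their result on ``exactly this Gaussian linear model.'' You do flag a compatibility check at the end, but in the wrong direction: the issue is not whether their hard instances embed into $\cP_{\Sigma,\sigma^2}$, it is whether \emph{your} chosen instance satisfies \emph{their} hypotheses. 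The paper resolves this by truncating: it sets $\tilde{x}_i = x_i\cdot\mathbf{1}\{\norm{x_i}\le\sqrt{d}\}$ with $x_i\sim\cN(0,\mathbf{I}_d)$, observes that the truncated covariates have diagonal covariance with top eigenvalue below $1$ and are bounded by $\sqrt{d}$ (so the cited theorem applies), and that they remain mean-zero and subgaussian with independent Gaussian noise (so the instance still lies in $\cP_{\Sigma,\sigma^2}$). With that one modification your argument goes through as written.
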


 \begin{proof}
 	We will apply the lower bound below from {\cite{cai2023score}}.
 	\begin{theorem}[{\cite[Theorem 3.1]{cai2023score}}]
 	\label{thm:lower_cai}
Consider i.i.d. observations $\{\left(y_1, x_1\right), \cdots,\left(y_n, x_n\right)\}$ drawn from the Gaussian linear model:
\begin{align}
 	f_{\beta}(y \mid x)=\frac{1}{\sqrt{2 \pi} \sigma} \exp \left(\frac{-\left(y-x^{\top} \beta\right)^2}{2 \sigma^2}\right) ; x \sim f_{x}\;.
 \end{align}
Suppose $\mathbb{E}[xx^\top]$ is diagonal, and $\lambda_{\max}(\mathbb{E}[xx^\top])< C'<\infty$,  $\|X\|_2 \lesssim \sqrt{d}$ almost surely. If $d  \lesssim n \varepsilon, 0<\varepsilon<1$ and $\delta \lesssim n^{-(1+\gamma)}$ for some $\gamma>0$, then
\begin{align}
	\inf _{M \in \mathcal{M}_{\varepsilon, \delta}} \sup _{\beta \in \mathbb{R}^d} \mathbb{E}\|M(y, x)-\beta\|_2^2 \gtrsim \sigma^2\left(\frac{d}{n}+\frac{d^2}{n^2 \varepsilon^2}\right)\;.
\end{align}
 \end{theorem}
Note that this lower bound holds for every construction of $x_i$ that satisfies the assumption. We construct one instance of joint distribution $P\in \cP_{\Sigma, \sigma^2}$ such that it also satisfies the assumptions in Theorem~\ref{thm:lower_cai}. Let $\{x_i\}_{i=1}^n$ be i.i.d. samples from $\cN(0, \mathbf{I}_d)$. And let $\tilde{x}_i=x_i\cdot \mathbf{I}[x_i\leq \sqrt{d}]$.  Clearly, $\{\tilde{x}_i\}_{i=1}^n$ satisfies that $\mathbb{E}[\tilde{x}\tilde{x}^\top]$ is diagonal, $\lambda_{\max}(\mathbb{E}[\tilde{x}\tilde{x}^\top])< 1$ and $\tilde{x}_i$ are bounded by $\sqrt{d}$. Let $z_i$ be independent Gaussian distribution with variance $\sigma^2$. By Theorem~\ref{thm:lower_cai}, we know there exists constant $C$ such that
\begin{align}
	\inf _{M \in \mathcal{M}_{\varepsilon, \delta}} \sup_{\cP_{\Sigma,  \sigma^2}, \beta} \mathbb{E}\|M(y, x)-\beta\|_\Sigma^2 \geq C \sigma^2\left(\frac{d}{n}+\frac{d^2}{n^2 \varepsilon^2}\right).
\end{align}
 	
 \end{proof}

\end{document}